\documentclass[preprint]{article}

\usepackage{neurips_2023}
\usepackage{amsmath,amsthm,amssymb}
\usepackage[noend]{algpseudocode}
\usepackage{algorithm}
\usepackage{graphicx}
\usepackage{xspace}
\xspaceaddexceptions{(}
\usepackage{booktabs}
\usepackage{multirow}
\usepackage{array}
\usepackage{wrapfig}
\usepackage{adjustbox}
\usepackage{appendix}
\usepackage[numbers]{natbib}

\usepackage{titlesec}
\titlespacing{\paragraph}{0pt}{0.1\baselineskip}{1em}

\usepackage[utf8]{inputenc} 
\usepackage[T1]{fontenc}    
\usepackage{hyperref}   
\usepackage{url}         
\usepackage{booktabs}       
\usepackage{amsfonts}      
\usepackage{nicefrac}     
\usepackage{microtype}    
\usepackage{xcolor}      

\usepackage{bm}
\usepackage{bbm}
\usepackage{dsfont}
\usepackage{xcolor}
\usepackage{enumitem}
\usepackage{soul}
\setul{2pt}{.4pt} 
\usepackage{xparse}
\usepackage{subcaption}

\usepackage{todonotes}

\allowdisplaybreaks

\theoremstyle{plain}
\newtheorem{theorem}{Theorem}

\newtheorem{lemma}{Lemma}
\newtheorem{proposition}{Proposition}

\theoremstyle{definition}
\newtheorem{definition}{Definition}
\newtheorem{assumption}{Assumption}

\theoremstyle{remark}
\newtheorem{remark}{\textit{Remark}}

\newif\ifcomment
\commenttrue

\newcommand{\given}{\;|\;}

\DeclareMathSymbol{\impulse}{\mathord}{symbols}{"22}

\newif\iffinalversion
\newcommand{\finalversion}[1]{\iffinalversion{#1}\fi}
\finalversiontrue


\newcommand{\E}[1]{\mathbb{E}\left[#1\right]}
\newcommand{\Var}[1]{\text{Var}{\left[#1\right]}}

\newcommand{\Prob}[1]{\mathbb{P}\left(#1\right)}
\newcommand{\indibrac}[1]{\mathbbm{1}\!\left\{#1\right\}}
\newcommand{\abs}[1]{\left\lvert#1\right\rvert}

\newcommand{\ve}[1]{\bm{#1}}

\newcommand{\ves}{\ve{s}}
\newcommand{\veS}{\ve{S}}
\newcommand{\vea}{\ve{a}}
\newcommand{\veA}{\ve{A}}
\newcommand{\sspa}{\mathbb{S}}
\newcommand{\aspa}{\mathbb{A}}
\newcommand{\sumN}{\sum_{i=1}^N}
\newcommand{\rel}{\textup{rel}} 
\newcommand{\rmax}{r_{\max}}
\newcommand{\qrate}{G}
\newcommand{\qmax}{g_{\max}}

\newcommand{\pibar}{{\sysbar{\pi}}}
\newcommand{\syshat}[1]{\widehat{#1}}
\newcommand{\sysbar}[1]{\bar{#1}}
\newcommand{\tbegin}{\textup{begin}}
\newcommand{\tend}{\textup{end}}
\newcommand{\drate}{d}
\newcommand{\dislen}{D_\text{avg}}
\newcommand{\synctime}{\tau^{\textup{sync}}}
\newcommand{\syncmax}{\overline{\tau}^{\textup{sync}}_{\max}}
\newcommand{\leftsub}{\text{L}}
\newcommand{\rightsub}{\text{R}}
\newcommand{\pione}{{\pibar_1}}
\newcommand{\pizero}{{\pibar_0}}
\newcommand{\statea}{s^a}
\newcommand{\stateb}{s^b}
\newcommand{\cyclea}{C^a}
\newcommand{\cycleb}{C^b}
\newcommand{\lengtha}{L^a}
\newcommand{\lengthb}{L^b}
\newcommand{\Spibar}{\mathcal{S}}
\newcommand{\Spibarprm}{\mathcal{S}'}
\newcommand{\Spione}{\mathcal{S}_1}

\newcommand{\assumpname}{synchronization assumption\xspace}
\newcommand{\Assumpname}{Synchronization Assumption\xspace}
\newcommand{\Assumpnameshort}{SA\xspace}
\newcommand{\policyname}{\textup{\texttt{Follow-the-Virtual-Advice}}\xspace}
\newcommand{\policynameshort}{\textup{\texttt{FTVA}}\xspace}
\newcommand{\policynamect}{\textup{\texttt{Follow-the-Virtual-Advice-CT}}\xspace}
\newcommand{\policynameshortct}{\textup{\texttt{FTVA-CT}}\xspace}

\title{Restless Bandits with Average Reward: Breaking the Uniform Global Attractor Assumption}

\author{%
  Yige Hong$^1$\thanks{Corresponding author} \quad Qiaomin Xie$^2$ \quad Yudong Chen$^2$ \quad  Weina Wang$^1$\\
  $^1$Carnegie Mellon University \quad $^2$University of Wisconsin-Madison\\
  \texttt{\{yigeh,weinaw\}@cs.cmu.edu}\\
  \texttt{\{qiaomin.xie,yudong.chen\}@wisc.edu}
}

\graphicspath{{figs/}}

\begin{document}

\maketitle

\begin{abstract}
We study the infinite-horizon restless bandit problem with the average reward criterion, in both discrete-time and continuous-time settings.
A fundamental goal is to efficiently compute policies that achieve a diminishing optimality gap as the number of arms, $N$, grows large. 
Existing results on asymptotic optimality all rely on the uniform global attractor property (UGAP), a complex and challenging-to-verify assumption. 
In this paper, we propose a general, simulation-based framework, \policyname, that converts any single-armed policy into a policy for the original $N$-armed problem. 
This is done by simulating the single-armed policy on each arm and carefully steering the real state towards the simulated state. 
Our framework can be instantiated to produce a policy with an $O(1/\sqrt{N})$ optimality gap. 
In the discrete-time setting, our result holds under a simpler \assumpname, which covers some problem instances that violate UGAP. 
More notably, in the continuous-time setting, we do not require \emph{any} additional assumptions beyond the standard unichain condition. 
In both settings, our work is the first asymptotic optimality result that does not require UGAP.


\end{abstract}

\section{Introduction}\label{sec:intro}

The restless bandit (RB) problem is a dynamic decision-making problem that involves a number of Markov decision processes (MDPs) coupled by a constraint. Each MDP, referred to as an arm, has a binary action space, $\{\text{passive, active}\}$.
At every decision epoch, the decision maker is constrained to select a fixed number of arms to activate, with the goal of maximizing the expected reward accrued.
The RB problem finds applications across a spectrum of domains, including wireless communication \cite{AalLasTab_19_rb_wireless}, congestion control \cite{AvrAyeDonJac_13_rb_cc}, queueing models \cite{ArcBlaGla_09_rb_queue}, crawling web content \cite{AvrBorViv_19_rb_crawling}, machine maintenance \cite{GlaMitAns_05_rb_repair}, clinical trials \cite{VilBowWas_15_rb_clinical}, to name a few.

In this paper, we focus on infinite-horizon RBs with the average-reward criterion.
Since the exact optimal policy is PSPACE-hard to compute \cite{PapTsi_99_pspace},
it is of great theoretical and practical interest to focus on policies that approximately achieve the optimal value and compute such policies in an efficient matter.
The \emph{optimality gap} of a policy is defined as the difference between its average reward per arm and that of an optimal policy. In a typical asymptotic regime where the number of arms, $N$, grows large, we say that a policy is \emph{asymptotically optimal} if its optimality gap is $o(1)$ as $N\to\infty$.

\paragraph{Prior work and the uniform global attractor property assumption.}
Prior work has studied the celebrated Whittle index policy \cite{Whi_88_rb} and LP-Priority policies \cite{Ver_16_verloop} and established sufficient conditions for their asymptotic optimality \cite{WebWei_90, Ver_16_verloop, GasGauYan_20_whittles, GasGauYan_22_exponential}.
One key assumption underpinning all prior work is the \emph{uniform global attractor property} (UGAP)---also known as globally asymptotic stability---which pertains to the mean-field/fluid limit of the restless bandit system in the asymptotic limit $N\to\infty$.
UGAP stipulates that the system's state distribution in the mean-field limit converges to the optimal state distribution attaining the maximum reward, from any initial distribution. It has been well recognized that UGAP is a highly technical assumption and challenging to verify: the primary way to test UGAP is numerical simulations; see \cite{GasGauYan_20_whittles} for a detailed discussion.

More recent work studies the \emph{rate} at which the optimality gap converges to zero. The work \cite{GasGauYan_20_whittles} and \cite{GasGauYan_22_exponential} prove a striking $O(\exp(-cN))$ optimality gap for the Whittle index policy and LP-Priority policies, respectively, where $c$ is a constant.
In addition to UGAP, these results require a non-singularity or non-degenerate condition.
We are not aware of any rate of convergence result without assuming UGAP or non-singularity/degeneracy. See Table~\ref{table:summary-of-prior-work} for a summary.

Therefore, prior work on average-reward restlest bandit leaves two fundamental questions open:
\begin{enumerate}[leftmargin=2em,topsep=0pt, itemsep=0pt]
\item Is it possible to achieve asymptotic optimality without UGAP?
\item Is is possible to establish a non-trivial convergence rate for the optimality gap in the absence of the non-singular/non-degenerate assumption (and UGAP)?
\end{enumerate}

\begin{table}[t]
{\small
\centering
\begin{tabular}{c|cccc}
\toprule
 & Paper & Policy & Optimality Gap & Conditions$^\ast$\\
\midrule
\multirow{3}{1.8cm}{Discrete-time setting}
& \cite{GasGauYan_20_whittles} & Whittle Index & $O(\exp(-cN))$ & UGAP \& Non-singular\\
\cmidrule(r){2-5}
& \cite{GasGauYan_22_exponential} & LP-Priority & $O(\exp(-cN))$ & UGAP \& Non-degenerate\\
\cmidrule(r){2-5}
& This paper & \policynameshort($\pibar^*$) & $O(1/\sqrt{N})$ & \Assumpnameshort\\
\toprule
\multirow{5}{1.8cm}{Continuous-time setting}
& \cite{WebWei_90} & Whittle Index & $o(1)$ & UGAP\\
\cmidrule(r){2-5}
& \cite{Ver_16_verloop} & LP-Priority & $o(1)$ & UGAP\\
\cmidrule(r){2-5}
& \cite{GasGauYan_20_whittles} & Whittle Index & $O(\exp(-cN))$ & UGAP \& Non-singular\\
\cmidrule(r){2-5}
& \cite{GasGauYan_22_exponential} & LP-Priority & $O(\exp(-cN))$ & UGAP \& Non-degenerate\\
\cmidrule(r){2-5}
& This paper & \policynameshortct($\pibar^*$) & $O(1/\sqrt{N})$ & --\\
\bottomrule
\end{tabular}
\caption{Optimality gap results and conditions. $^\ast$All papers require the standard unichain assumption.}
\label{table:summary-of-prior-work}
\vspace{-16pt}
}
\end{table}

\paragraph{Our contributions.}
We consider both the discrete-time and continuous-time settings of the average-reward restless bandit problem.
We propose a general, simulation-based framework, \policyname\ (\policynameshort) and its continuous-time variant \policynameshortct, which convert any single-armed policy into a policy for the original $N$-armed problem, with a vanishing performance loss. 
Our framework can be instantiated to produce a policy with an $O(1/\sqrt{N})$ optimality gap, 
under the conditions summarized in Table~\ref{table:summary-of-prior-work}, which we elaborate on later.

Under our framework, computing an asymptotically optimal policy is efficient since it reduces to deriving an optimal single-armed policy, whose complexity is independent of $N$.
The resultant policy can be implemented with a linear-in-$N$ computational cost and some of its subroutines can be implemented in a distributed fashion over the arms (see Appendix~\ref{app:ftva-computational} for more details).

\finalversion{Our results can also be extended to RBs with heterogeneous arms. See Appendix~\ref{app:heterogeneous} for details.}

\begin{wrapfigure}{r}{0.4\textwidth}
\centering
\includegraphics[width=5.2cm]{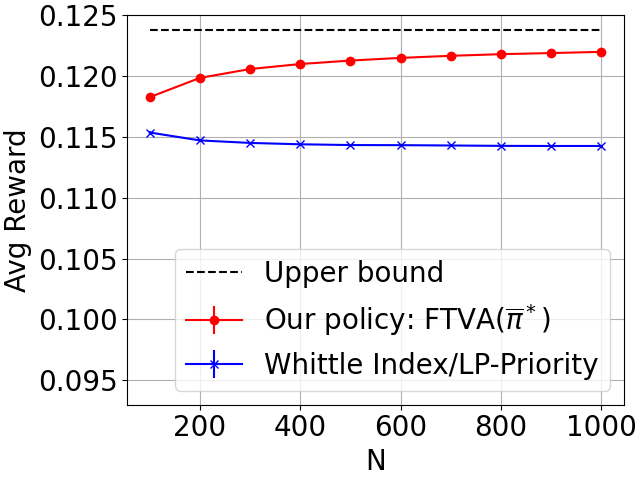}
\vspace{-3pt}
\caption{An discrete-time RB problem that satisfies \Assumpnameshort\ but not UGAP.}
\label{fig:example2}
\end{wrapfigure}
We now elaborate on the conditions in Table~\ref{table:summary-of-prior-work}.
In the discrete-time setting, our result holds under a condition called the \Assumpname\ (\Assumpnameshort), in addition to the standard unichain assumption required by all prior work. 
The \Assumpnameshort condition, which is imposed on the MDP associated with a single arm,  admits several intuitive sufficient conditions. While it is unclear whether \Assumpnameshort subsumes UGAP, we show that there exist problem instances that violate UGAP but satisfy \Assumpnameshort.
Figure~\ref{fig:example2} shows one such instance (constructed by \cite{GasGauYan_20_whittles}, described in Appendix~\ref{app:experiment-details} for ease of reference), in which the Whittle Index and LP-Priority policies coincide and have a non-diminishing optimality gap, whereas our policy, named as \policynameshort($\pibar^*$), is asymptotically optimal.
In addition, our $O(1/\sqrt{N})$ bound on the optimality gap does not require a non-singularity/non-degeneracy assumption.

More notably, in the continuous-time setting, we \emph{completely eliminate the UGAP assumption}. We show that our policy \policynameshortct($\pibar^*$) achieves an $O(1/\sqrt{N})$ optimality gap under only the standard unichain assumption (which is required by all prior work).

In both settings, our results are the first asymptotic optimality results that do not require UGAP.
We point out that UGAP is considered necessary for LP-Priority policies \cite{Ver_16_verloop,GasGauYan_22_exponential}.
Our policy, \policynameshort($\pibar^*$), is not a priority policy. As such, we hope our results open up new directions for developing new classes of RB algorithms that achieve asymptotic optimality without relying on UGAP.

\paragraph{Intuitions.}
Our algorithm and many existing approaches solve an LP relaxation of the original problem. The solution of the LP induces a policy and gives an ``ideal'' distribution for the system state. Existing approaches directly apply the LP policy to the current system state, and then perform a simple rounding of the resulting actions so as to satisfy the budget constraint. When the current system state is far from the ideal distribution, the actual actions after rounding may deviate substantially from the LP solution and thus, in the absence of UGAP, would fail to drive the system to the optimum.

Our \policynameshort framework, in contrast, prioritizes constraint satisfaction. We apply the LP solution to the state of a \emph{simulated} system, which is constructed carefully so that the resulting actions satisfy the constraint after a minimal amount of rounding. Consequently, starting from any initial state, our policy steers the system towards the ideal distribution and hence approximates the optimal value. 

\finalversion{This method is inspired by the approach in \cite{HonXieWan_22}, which also involves a simulated system, but for a stochastic bin-packing problem.}

\paragraph{Other related work.}
Another condition extensively discussed in the RB literature is the indexability condition \cite{Whi_88_rb}, which is necessary for the Whittle index policy to be well-defined, but not required by LP-Priority policies \cite{Ver_16_verloop}. 
However, indexability alone does not guarantee the asymptotic optimality of Whittle index. 

So far we have discussed prior work on infinite-horizon RBs with average reward. For the related setting of finite-horizon total reward RBs, a line of recent work has established an $O(1/\sqrt{N})$ optimality gap~\cite{HuFra_17_rb_asymptotic,ZayJasWan_19_rb,BroSmi_19_rb,GhoNagJaiTam_23_finite_discount}, and an $O(\exp(-cN))$ gap assuming non-degeneracy \cite{ZhaFra_21,GasGauYan_22_exponential}. 
For the infinite-horizon discounted reward setting, \cite{ZhaFra_22_discounted_rb,GhoNagJaiTam_23_finite_discount} propose policies with $O(1/\sqrt{N})$ optimality gap without assuming indexability and UGAP. 
While these results are not directly comparable to ours, it is of future interest to see if our simulation-based framework can be applied to their settings. 
For a more detailed discussion of prior work, see Appendix~\ref{app:more-related-work}.

\paragraph{Paper Organization.}
While our continuous-time result is stronger, the discrete-time setting is more accessible. 
Therefore, we first discuss the discrete-time setting, which includes the problem statement in Section~\ref{sec:DTRB-problem-statement}, our proposed framework, \policyname, in Section~\ref{sec:DTRB-policy}, and our results on the optimality gap in Section~\ref{sec:DTRB-result}. 
Results for the continuous-time setting are presented in Section~\ref{sec:cont-rb}.
We conclude the paper in Section~\ref{sec:conclusion}. Proofs and additional discussion are given in the appendices.

\section{Problem Setup}\label{sec:DTRB-problem-statement}

Consider the infinite-horizon, discrete-time restless bandit problem with $N$ arms indexed by $[N] \triangleq \{1,2,\dots,N\}$.
Each arm is associated with an MDP described by the tuple $(\sspa, \aspa, P, r)$. Here
$\sspa$ is the state space, assumed to be finite; 
$\aspa = \{0, 1\}$ is the action set, and we say the arm is \emph{activated} or \emph{pulled} when action $1$ is applied;
$P:\sspa\times\aspa\times\sspa \to [0,1]$ is the transition kernel, where $P(s,a,s')$ is the probability of transitioning from state $s$ to state $s'$ upon taking action $a$;
$r = \{r(s, a)\}_{s\in\sspa, a\in\aspa}$ is the reward function, where $r(s,a)$ is the reward for taking action $a$ in state $s$.
Throughout the paper, we assume that the transition kernel $P$ is unichain~\cite{Put_05}; that is, under any Markov policy for this single-armed MDP $(\sspa, \aspa, P, r)$, the induced Markov chain has a single recurrent class.
The unichain assumption is standard in most existing work on restless bandits~\cite{WebWei_90,GasGauYan_20_whittles,GasGauYan_22_exponential,Ver_16_verloop}.
We will discuss relaxing the unichain assumption in Appendix~\ref{sec:relax-unichain}. 

In the above setting, we are subject to a \emph{budget constraint} that exactly $\alpha N$ arms must be activated in each time step, where $\alpha\in(0,1)$ is a given constant and $\alpha N$ is assumed to be an integer for simplicity. 
This $N$-armed RB problem can be represented by the tuple $(N, \sspa^N, \aspa^N, P, r, \alpha N)$. 

A policy $\pi$ for the $N$-armed problem specifies the action for each of the $N$ arms based on the history of states and actions.
Under policy $\pi$, let $S_i^\pi(t) \in \sspa$ denote the state of the $i$th arm at time $t$, and we call  $\ve{S}^\pi(t) \triangleq (S_i^\pi(t))_{i\in[N]} \in \sspa^N$ the \emph{system state}. Similarly,
let $A_i^\pi(t) \in \aspa$ denote the action applied to the $i$th arm at time $t$, and let $\ve{A}^\pi(t) \triangleq (A_i^\pi(t))_{i\in[N]} \in \aspa^N$ denote the joint action vector.

The controller's goal is to find a policy that maximizes the long-run average of the total expected reward from all $N$ arms, subject to the budget constraint, assuming full knowledge of the model:
\begin{align}
    \label{eq:N-arm-formulation} 
    \underset{\text{policy } \pi}{\text{maximize}} & \quad V^\pi_N \triangleq \lim_{T\to\infty } \frac{1}{T} \sum_{t=0}^{T-1} \frac{1}{N} \sumN \E{r(S_i^\pi(t), A_i^\pi(t))}\\
    \text{subject to}  
    &\quad  \sum_{i=1}^N A_i^\pi(t) = \alpha N,\quad \forall t\ge 0. \label{eq:hard-budget-constraint}
\end{align}
Under the unichain assumption, the value $V_N^\pi$ of any policy $\pi$ is independent of the initial state.
Let $V^*_N \triangleq \sup_{\pi} V_N^\pi$ denote the optimal value.
The optimality gap of $\pi$ is defined as $V^*_N - V^\pi_N$.
We say a policy $\pi$ is \emph{asymptotically optimal} if its optimality gap converges to $0$ as $N\to\infty$. 

Classical theory guarantees that for a finite-state Markov decision process like an RB, there exists an optimal policy that is Markovian and stationary~\cite{Put_05}. Nevertheless, the policies we propose are not Markovian policies; rather, they have internal states.
Under such a policy $\pi$, the system state $\ve{S}^\pi(t)$ together with the internal state form a Markov chain, and the action $\ve{A}^{\pi}(t)$ depends on both the system and internal states.
We design a policy such that this Markov chain has a stationary distribution. Let $\ve{S}^\pi(\infty)$ and $\ve{A}^\pi(\infty)$ denote the random elements that follow the stationary distributions of $\ve{S}^\pi(t)$ and $\ve{A}^\pi(t)$, respectively.
Then the average reward of $\pi$ is equal to $V^\pi_N=\frac{1}{N} \sumN \E{r(S_i^\pi(\infty), A_i^\pi(\infty))}$.

In later sections, when the context is clear, we drop the superscript $\pi$ from $S_i^\pi$ and $A_i^\pi$.

\section{\policyname: A simulation-based framework}

\label{sec:DTRB-policy}

In this section, we present our framework, \policyname\ (\policynameshort).
We first describe a \emph{single-armed problem}, which involves an ``average arm'' from the original $N$-armed problem. We then use the optimal single-armed policy to construct the proposed policy \policynameshort($\pibar^*$).

\subsection{Single-armed problem}\label{sec:DTRB-policy-single-armed}

The single-armed problem involves the MDP $(\sspa, \aspa, P, r)$ associated with a single arm (say arm $1$ without loss of generality), where the budget is satisfied \emph{on average}. Formally, consider the problem:
\begin{align}
\label{eq:single-arm-formulation} \underset{\text{policy } \pibar}{\text{maximize}} & \quad V^\pibar_1 \triangleq \lim_{T\to\infty } \frac{1}{T} \sum_{t=0}^{T-1} \E{r(S_1^\pibar(t), A_1^\pibar(t))}\\
\text{subject to}  
&\quad  \lim_{T\to\infty} \frac{1}{T} \sum_{t=0}^{T-1} \E{A_1^\pibar(t)} = \alpha. \label{eq:relaxed-constraint}
\end{align}
The constraint~\eqref{eq:relaxed-constraint} stipulates that the \emph{average} rate of applying the active action must equal $\alpha$. 
Various equivalent forms of this single-armed problem have been considered in prior work \cite{WebWei_90,GasGauYan_20_whittles,GasGauYan_22_exponential,Ver_16_verloop}.

By virtue of the unichain assumption, the single-armed problem can be equivalently rewritten as the following linear program, where each decision variable $y(s,a)$ represents the steady-state probability that the arm is in state $s$ taking action $a$: 
\begin{align}
    \label{eq:lp-single} \tag{LP} \underset{\{y(s, a)\}_{s\in\sspa,a\in\aspa}}{\text{maximize}} \mspace{12mu}&\sum_{s\in\sspa,a\in\aspa} r(s, a) y(s, a) \\
    \text{subject to}\mspace{25mu}
    &\mspace{15mu}\sum_{s\in\sspa} y(s, 1) = \alpha \label{eq:expect-budget-constraint}\\
    & \sum_{s'\in\sspa, a\in\aspa} y(s', a) P(s', a, s) = \sum_{a\in\aspa} y(s,a), \; \forall s\in\sspa \label{eq:flow-balance-equation}\\
    &\mspace{3mu}\sum_{s\in\sspa, a\in\aspa} y(s,a) = 1,  
    \quad
     y(s,a) \geq 0, \; \forall s\in\sspa, a\in\aspa.  \label{eq:non-negative-constraint}
\end{align}
Here \eqref{eq:expect-budget-constraint} corresponds to the relaxed budget constraint, \eqref{eq:flow-balance-equation} is the flow balance equation, and \eqref{eq:flow-balance-equation}--\eqref{eq:non-negative-constraint} guarantee that $y(s,a)$'s are valid steady-state probabilities.

By standard results for average reward MDPs~\cite{Put_05}, 
an optimal solution $\{y^*(s, a)\}_{s\in\sspa,a\in\aspa}$ to \eqref{eq:lp-single} induces an optimal policy $\pibar^*$ for the single-armed problem via the following formula: 
\begin{equation}\label{eq:single-arm-opt-def}
    \pibar^*(a | s) =
    \begin{cases}
        y^*(s, a) / (y^*(s, 0) + y^*(s,1)), & \text{if } y^*(s, 0) + y^*(s,1) > 0,  \\
        1/2, &  \text{if } y^*(s, 0) + y^*(s,1) = 0.
    \end{cases}
    \quad \text{for $s\in\sspa$, $a\in\aspa$.}
\end{equation}
Let $V_1^\rel=V_1^{\pibar^*}$ be the optimal value of \eqref{eq:lp-single} and the single-armed problem.

\eqref{eq:lp-single} can be viewed as a relaxation of the $N$-armed problem.
To see this, take any $N$-armed policy~$\pi$ and set $y(s,a)$ to be the fraction of arms in state $s$ taking action $a$ in steady state under $\pi$, i.e., $y(s,a)=\mathbb{E}\big[\frac{1}{N}\sumN \mathds{1}_{\{S_i^{\pi}(\infty)=s,A_i^{\pi}(\infty)=a\}}\big]$. Whevener $\pi$ satisfies the budget constraint \eqref{eq:hard-budget-constraint}, $\{y(s,a)\}$ satisfies
the relaxed constraint \eqref{eq:expect-budget-constraint} and the consistency constraints \eqref{eq:flow-balance-equation}--\eqref{eq:non-negative-constraint}.
Therefore, the optimal value of \eqref{eq:lp-single} is an upper bound of the optimal value of the $N$-armed problem: $ V^\rel_1 \ge V^*_N$.

\subsection{Constructing the $N$-armed policy}

We now present \policyname, a simulation-based framework for the $N$-armed problem.
\policynameshort\ takes as input \emph{any} single-armed policy $\pibar$ that satisfies the relaxed budget constraint \eqref{eq:relaxed-constraint} and converts it into a $N$-armed policy, denoted by \policynameshort($\pibar$).
Of particular interest is when $\pibar$ is an optimal single-armed policy, which leads to our result on the optimality gap. 
Below we introduce the general framework of \policynameshort\ without imposing any restriction on the input policy $\pibar.$

\begin{algorithm}[t]
\caption{\policyname\ (\policynameshort): A simulation-based framework}
\label{alg:main-alg}
\textbf{Input:}
$N$-armed problem $(N, \sspa^N, \aspa^N, P, r, \alpha N)$, initial states $\veS(0)$, single-armed policy $\pibar$\\
\textbf{Initialize:} Virtual states $\syshat{\veS}(0)$ are $N$ i.i.d. samples following the stationary distribution of $\pibar$
\begin{algorithmic}[1]
\For{$t=0, 1, 2, \dots$}
    \State Independently sample $\syshat{A}_i(t) \gets \pibar(\cdot | \syshat{S}_i(t))$ for each arm $i\in[N]$
    \Comment{\ul{\emph{Generate virtual actions}}}
    \If{$\sumN \syshat{A}_i(t) \geq \alpha N$} \Comment{\ul{\emph{Select a set $\mathcal{A}$ of $\alpha N$ arms to activate}}}
        \State $\mathcal{A}\gets$ a set of $\alpha N$ arms chosen from $\{i\colon \syshat{A}_i(t)=1\}$ (any tie-breaking)
    \Else
        \State $\mathcal{B}\gets$ a set of $\alpha N - \sumN \syshat{A}_i(t)$ arms chosen from $\{i\colon \syshat{A}_i(t)=0\}$ (any tie-breaking)
        \State $\mathcal{A}\gets\{i\colon \syshat{A}_i(t)=1\}\cup \mathcal{B}$
    \EndIf
    \State Apply $A_i(t)=1$ and observe $S_i(t+1)$ for each arm $i\in\mathcal{A}$
    \State Apply $A_i(t)=0$ and observe $S_i(t+1)$ for each arm $i\notin\mathcal{A}$
    \For{$i=1, 2, 3, \dots N$} \Comment{\ul{\emph{Progress virtual processes}}}
    \If{$\syshat{S}_i(t) = S_i(t)$ and $\syshat{A}_i(t) = A_i(t)$} \Comment{\ul{\emph{Couple virtual and real states}}}
        \State $\syshat{S}_i(t+1) \gets S_i(t+1)$
    \Else
        \State Independently sample $\syshat{S}_i(t+1)$ from the distribution $P(\syshat{S}_i(t), \syshat{A}_i(t), \cdot)$ 
    \EndIf
    \EndFor
    \EndFor
\end{algorithmic}
\end{algorithm}

The proposed policy \policynameshort($\pibar$) has two main components:
\begin{itemize}[leftmargin=1em]
\item \emph{Virtual single-armed processes.}
Each arm~$i$ simulates a \emph{virtual} single-armed process, whose state is denoted as $\syshat{S}_i(t)$, with action $\syshat{A}_i(t)$ chosen according to $\pibar$.
To make the distinction conspicuous, we sometimes refer to the state $S_i(t)$ and action $A_i(t)$ in the original $N$-armed problem as the \emph{real} state/action. 
The virtual processes associated with different arms are independent.

\item \emph{Follow the virtual actions.}
At each time step $t$, we choose the real actions $A_i(t)$'s to best match the virtual actions $\syshat{A}_i(t)$'s, to the extent allowed by the budget constraint $\sumN A_i(t)=\alpha N$. 
\end{itemize}

\policynameshort\ is presented in detail in Algorithm~\ref{alg:main-alg}.
Note that we use an appropriate coupling in Algorithm~\ref{alg:main-alg} to ensure that the virtual processes $(\syshat{S}_i(t),\syshat{A}_i(t))$'s are independent and each follows the Markov chain induced by the single-armed policy $\pibar$.
\policynameshort is designed to steer the real states to be close to the virtual states, thereby ensuring a small \emph{conversion loss} $V_1^{\pibar}-V_N^{\policynameshort(\pibar)}$.
Here recall that $V_1^{\pibar}$ is the average reward achieved by the input policy $\pibar$ in the single-armed problem, and that $V_N^{\policynameshort(\pibar)}$ is the average reward per arm achieved by policy \policynameshort($\pibar$) in the $N$-armed problem.

\subsection{Discussion on \policynameshort and the role of virtual processes}\label{sec:al-discussion}

In this subsection, we provide insights into the mechanism of \policynameshort and explain the crucial role of the virtual processes. In particular, we contrast \policynameshort with the alternative approach of directly using the real states to choose actions, e.g., by applying the single-armed policy $\pibar^*$ to each arm's real state. We note that existing policies are essentially real-state-based, so the insights here can also explain why UGAP is necessary for existing policies to have asymptotic optimality.

We first observe that the above two approaches are equivalent in the absence of the budget constraint. In particular, even if the initial virtual state and real state of an arm $i$ are different, they will synchronize (i.e., become identical) in finite time \emph{by chance} under mild assumptions (see Section~\ref{subsec:Assumpname}).
After this event, \emph{if} there were no constraint, each arm $i$ will consistently follow the virtual actions, i.e., $A_i(t) = \syshat{A}_i(t) = \pibar\big(\syshat{S}_i(t)\big)$ for all $t$, and the virtual states will remain identical to real states. 

In the presence of the budget constraint, \finalversion{the arms may not remain synchronized}, so the virtual processes become crucial: they guarantee that the virtual actions $\syshat{A}_i(t) = \pibar\big(\syshat{S}_i(t)\big)$ nearly satisfy budget constraint, \finalversion{which allows the real system to approximately follow the virtual actions to remain synchronized}. To see this, note that regardless of the current real states, the $N$ virtual states $\syshat{S}_1,\ldots, \syshat{S}_N$ independently follow the single-armed policy $\pibar^*$, so, in the long run,  each $(\syshat{S}_i(\infty), \syshat{A}_i(\infty))$ is distributed per $ y^*(\cdot, \cdot)$, the optimal solution to~\eqref{eq:lp-single}. For large $N$, the sum  $\sumN \syshat{A}_i(\infty)$ concentrates around its expectation $ N \sum_{s\in\sspa} y^*(s,1) = \alpha N$ and thus tightly satisfy the budget constraint. In contrast, the actions generated by applying $\pibar$ to the real states are likely to significantly violate the constraint, especially when the empirical distribution of the current real states deviates from $ y^*(\cdot, \cdot)$.

\begin{figure}
    \centering
    \includegraphics[width=10.5cm]{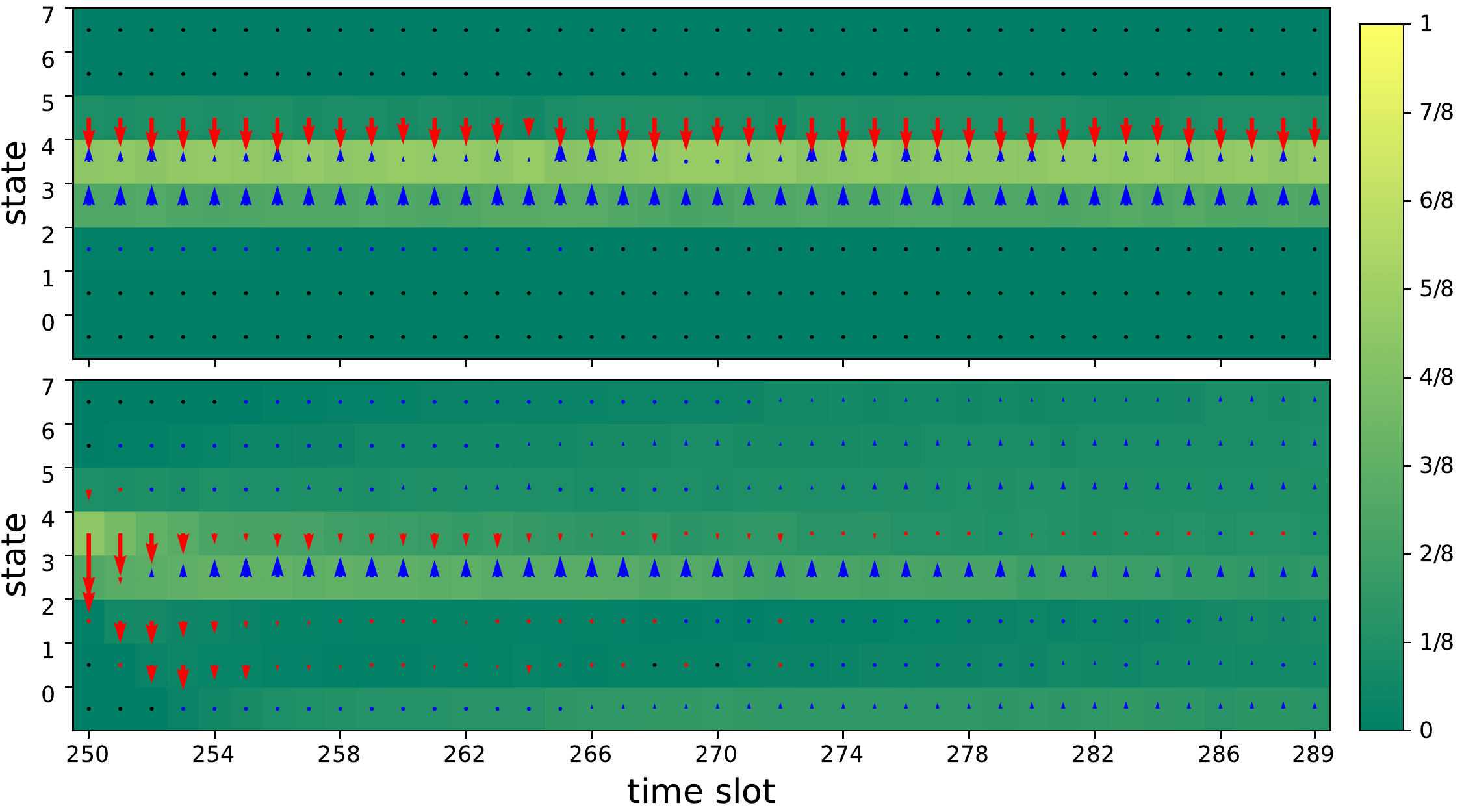}
    \caption{Time evolution of the fraction of arms in each state under LP-Priority (upper), or after switching to $\policynameshort(\pibar^*)$ (lower) since time slot $250$. The x-axis represents the time slot, which ranges from $250$ to $289$; the y-axis represents the states; the color represents the fraction of arms in each state at each time slot. The colors and magnitudes of the arrows represent the average directions and rates at which the arms move away from each state.}
    \label{fig:lp-vs-ftva-visualize}
\end{figure}
\textbf{An example.~~} 
We provide a concrete example illustrating the above arguments. Suppose the state space for each arm is $\sspa=\{0,1,\ldots,7\}$. 
\finalversion{We label action $1$ as the \emph{preferred action} for states $0, 1, 2, 3$, and action $0$ for the other states.} For an arm in state~$s$, applying the preferred action moves the arm to state $(s+1) \bmod 8$ with probability $p_{s,\rightsub}$, and applying the other action moves the arm to state $(s-1)^+$ with probability $p_{s,\leftsub}$; the arm stays at state $s$ otherwise. One unit of reward is generated when the arm goes from state $7$ to state $0$. 
We assume that the budget is $N/2$ and set $\{p_{s,\rightsub},p_{s,\leftsub}\}$ such that the optimal solution of \eqref{eq:lp-single} is $y^*(s,1) = 
1/ 8$ for $s=0,1,2,3$ and $y^*(s,0)=1 / 8$ for $s=4,5,6,7$. That is, the optimal state distribution is uniform$(\sspa)$, and the optimal single-armed policy $\pibar^*$ always takes the preferred action so as to traverse state $7$ as often as possible.

To see why this MDP makes the corresponding $N$-armed system tricky to control, consider the situation where $p_{s,\leftsub} >> p_{s,\rightsub}$ and most arms are in state $0$. Those arms prefer actions $1$ to move towards state $7$. However, there are only $N/2$ units of budget. 
If we break ties uniformly at random, each arm is not pulled with probability $1/2$ in each time slot and is likely to return to $0$ before they leave $\{0,1,2,3\}$.  
This phenomenon can be seen from Figure~\ref{fig:random-tb-vs-ftva} in Appendix~\ref{app:experiment-details}.

\begin{wrapfigure}{r}{0.38\textwidth}
    \centering
    \includegraphics[width=5.2cm]{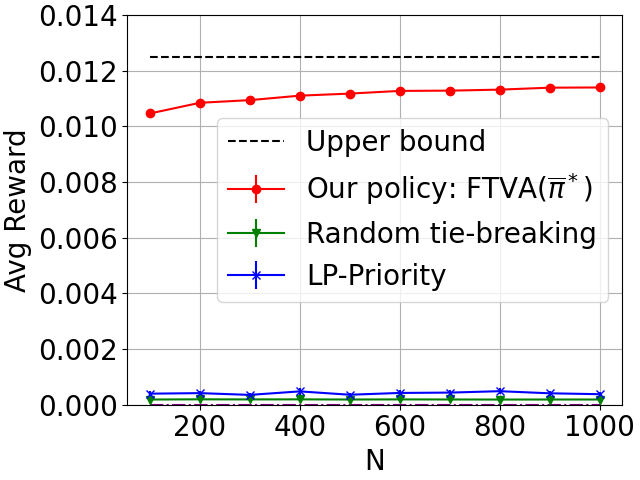}
    \vspace{-3pt}
    \caption{Comparing policies based on virtual states and real states }
    \vspace{-8pt}
    \label{fig:example4}
\end{wrapfigure}

A similar phenomenon can be observed for LP-Priority policies. Here we consider an LP-Priority in \cite{HuFra_17_rb_asymptotic, BroSmi_19_rb, GasGauYan_22_exponential} that breaks ties based on the Lagrangian-optimal index. 
In Figure~\ref{fig:lp-vs-ftva-visualize}, we generate and visualize a sample path under LP-Priority from time $250$ to $289$, and contrast it with the sample path if the system switches to $\policynameshort(\pibar^*)$ from time $250$ onwards. 
We can see that under LP-Priority, although the arms have a strong tendency to move up from state $4$ to state $5$, they move back when they reach state $5$ and thus get stuck at state $4$. In contrast, when switching to $\policynameshort(\pibar^*)$, the arrows gradually change to the correct direction, which helps the arms to escape from state $4$ and converge to the uniform distribution over the state space. 
Intuitively, under $\policynameshort(\pibar^*)$, an increasing number of arms couple their real states with virtual states over time, which allows these arms to consistently apply the preferred actions afterward. 
In Appendix~\ref{app:visualize-policies}, we include more visualizations of the policies.

In Figure~\ref{fig:example4}, we compare the average reward of $\policynameshort(\pibar^*)$ with those of the random tie-breaking policy and the LP-Priority policy discussed above. We can see that $\policynameshort(\pibar^*)$ is near-optimal, while the other two policies have nearly zero rewards. Further details are provided in Appendix~\ref{app:experiment-details}. Note that while a different tie-breaking rule may solve this particular example with real states, currently there is no known rule that works in general.

\section{Theoretical result on optimality gap}\label{sec:DTRB-result}

In this section, we present our main theoretical result, an upper bound on the conversion loss $V_1^{\pibar}-V_N^{\policynameshort(\pibar)}$ for any given single-armed policy $\pibar$. Setting $\pibar$ to be an optimal single-armed policy $\pibar^*$ then leads to an upper bound on the optimality gap of our N-armed policy $\policynameshort(\pibar^*)$. Our result holds under the \Assumpname\ (\Assumpnameshort), which we formally introduce below.

\subsection{\Assumpname}\label{subsec:Assumpname}

\Assumpnameshort is imposed on a given single-armed policy $\pibar$.
To describe \Assumpnameshort, we first define a two-armed system called the \emph{leader-and-follower} system, which consists of a \emph{leader} arm and a \emph{follower} arm.
Each arm is associated with the MDP $(\sspa, \aspa, P, r)$.
At each time step $t\ge 1$, the leader arm is in state $\syshat{S}(t)$ and uses the policy $\pibar$ to chooses an action $\syshat{A}(t)$ based on $\syshat{S}(t)$;
the follower arm is in state $S(t)$, and it takes the action $A(t)=\syshat{A}(t)$ regardless of $S(t)$. The state transitions of the two arms are coupled as follows.
If $S(t)=\syshat{S}(t)$, then $S(t+1)=\syshat{S}(t+1)$. If $S(t)\neq\syshat{S}(t)$, then $S(t+1)$ and $\syshat{S}(t+1)$ are sampled independently from $P(S(t),A(t),\cdot)$ and $P(\syshat{S}(t),\syshat{A}(t),\cdot)$, respectively.
Note that once the states of the two arms become identical, they stay identical indefinitely.

Given the initial states and actions $(S(0), A(0), \syshat{S}(0), \syshat{A}(0)) = (s, a, \syshat{s}, \syshat{a}) \in \sspa \times \aspa \times \sspa \times \aspa$, we define the \emph{synchronization time} as the first time the two states become identical: 
\begin{equation}
\label{eq:synctime}
\synctime(s,a,\syshat{s},\syshat{a}) 
\triangleq \min\{t\ge 0\colon S(t)=\syshat{S}(t)\}.
\end{equation}

\begin{assumption}[\Assumpname\ (\Assumpnameshort) for a policy $\pibar$]
\label{assump:follow-and-merge}
We say that a single-armed policy $\pibar$ satisfies the \Assumpname\ (\Assumpnameshort) if for any initial states and actions $ (s, a, \syshat{s}, \syshat{a}) \in \sspa \times \aspa \times \sspa \times \aspa$,
the synchronization time $\synctime(s,a,\syshat{s},\syshat{a})$ is a stopping time and satisfies
\begin{equation}
\E{\synctime(s,a,\syshat{s},\syshat{a})}<\infty.
\end{equation}
\end{assumption}

\finalversion{
We view \Assumpnameshort\ as an appealing alternative to UGAP for two reasons. First, there are some instances that satisfy \Assumpnameshort but not UGAP. Two such examples are given in Figures~\ref{fig:example2} and \ref{fig:example4}. \cite{GasGauYan_20_whittles} provides more examples that violate UGAP, all of which can be verified to satisfy \Assumpnameshort. 
Second, \Assumpnameshort\ is easier to verify than UGAP. It has been acknowledged in many prior papers that UGAP lacks intuitive sufficient conditions and can only be checked numerically. In contrast, \Assumpnameshort\ can be efficiently verified in several ways, which is discussed in Appendix~\ref{app:sufficient-sa}. 
}

\subsection{Bounds on conversion loss and optimality gap}\label{subsec:DTRB-result-and-proof-sketch}

We are now ready to state our main theorem.
\begin{theorem}
\label{thm:main-alg}
Consider an $N$-armed RB problem $(N, \sspa^N, \aspa^N, P, r, \alpha N)$ under the single-armed unichain assumption.
Let $\pibar$ be any single-armed policy satisfying \Assumpnameshort. For any $N\ge 1$, the conversion loss of \policynameshort satisfies the upper bound
\begin{equation}\label{eq:single-arm-achievable}
V^\pibar_1 - V^{\policynameshort(\pibar)}_N \leq \frac{\rmax \syncmax}{\sqrt{N}},
\end{equation}
where $\rmax \triangleq \max_{s\in\sspa,a\in\aspa}|r(s,a)|$ and $\syncmax \triangleq \max_{(s, a, \syshat{s}, \syshat{a}) \in \sspa \times \aspa \times \sspa \times \aspa} \E{\synctime(s,a,\syshat{s},\syshat{a})}$.
    
Consequently, given any optimal single-armed policy $\pibar^*$ satisfying \Assumpnameshort, for all any $N\ge 1$ the optimality gap of $\policynameshort(\pibar^*)$ is upper bounded as
\begin{equation}
\label{eq:main-optimality-gap}
V^*_N - V^{\policynameshort(\pibar^*)}_N \leq V^{\pibar^*}_1 - V^{\policynameshort(\pibar^*)}_N \leq \frac{\rmax \syncmax}{\sqrt{N}}.
\end{equation}
\end{theorem}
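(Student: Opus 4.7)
The plan is to bound the conversion loss by the steady-state probability that the real and virtual state--action pairs disagree, and then control that probability by combining (i) a variance estimate for the virtual action sum with (ii) the synchronization guarantee in \Assumpnameshort. By construction the virtual processes $\{(\syshat{S}_i(t),\syshat{A}_i(t))\}_{i=1}^{N}$ are mutually independent and each is distributed according to the Markov chain induced by $\pibar$, so in stationarity $\E{r(\syshat{S}_i(\infty),\syshat{A}_i(\infty))}=V_1^{\pibar}$ for every $i$. Hence I would first write
\[
V_1^{\pibar}-V_N^{\policynameshort(\pibar)}=\frac{1}{N}\sumN\E{r(\syshat{S}_i(\infty),\syshat{A}_i(\infty))-r(S_i(\infty),A_i(\infty))}\le\frac{2\rmax}{N}\sumN\Prob{(S_i(\infty),A_i(\infty))\neq(\syshat{S}_i(\infty),\syshat{A}_i(\infty))}.
\]

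The next step is to estimate the rate at which such discrepancies arise. By the action-matching rule in Algorithm~\ref{alg:main-alg}, an arm has $A_i(t)\neq\syshat{A}_i(t)$ exactly when it lies in the ``bad set'' whose cardinality equals $|M(t)-\alpha N|$, where $M(t)\triangleq\sumN\syshat{A}_i(t)$. Because the virtual processes are independent and each $\syshat{A}_i(\infty)$ is Bernoulli with mean $\alpha$ (this is exactly the relaxed constraint on $\pibar$), a variance bound gives $\E{|M(\infty)-\alpha N|}\le\sqrt{\Var{M(\infty)}}\le\sqrt{N\alpha(1-\alpha)}\le\sqrt{N}/2$. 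Thus the expected per-arm, per-slot rate at which actions disagree is at most $1/(2\sqrt{N})$.

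I would then invoke \Assumpnameshort through a Little's-law argument. Under the coupling in Algorithm~\ref{alg:main-alg}, if an arm has $S_i(t)=\syshat{S}_i(t)$ and $A_i(t)=\syshat{A}_i(t)$ then synchronization is preserved at $t+1$; a fresh desynchronization is therefore initiated only when a previously synchronized arm enters the bad set, which happens at the per-arm rate bounded above. Once initiated, \Assumpnameshort controls the expected time until $S_i=\syshat{S}_i$ again by $\syncmax$. Little's law then yields an expected desynchronized population of at most $(\sqrt{N}/2)\cdot\syncmax$, so each arm is desynchronized in stationarity with probability at most $\syncmax/(2\sqrt{N})$. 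Substituting into the first bound gives $V_1^{\pibar}-V_N^{\policynameshort(\pibar)}\le\rmax\syncmax/\sqrt{N}$. Setting $\pibar=\pibar^{*}$ and using the LP relaxation inequality $V_N^{*}\le V_1^{\pibar^{*}}$ established in Section~\ref{sec:DTRB-policy-single-armed} produces~\eqref{eq:main-optimality-gap}.

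The main obstacle is a subtle mismatch between the leader--follower dynamics that define $\synctime$ in \Assumpnameshort and the actual dynamics of a desynchronized arm in \policynameshort. In \Assumpnameshort the follower \emph{always} copies the leader's action, whereas in \policynameshort a still-desynchronized arm can fall back into the bad set and again take $A_i\neq\syshat{A}_i$, so a single episode does not fit Assumption~\ref{assump:follow-and-merge} verbatim. I would resolve this by coupling each desynchronization episode with an independent leader--follower copy and using the strong Markov property to charge each re-synchronization time by $\syncmax$, while absorbing the additional bad-set re-entries (which occur at a per-arm rate of $O(1/\sqrt{N})$) into the same $O(1/\sqrt{N})$ budget. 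Stationarity of the virtual marginal together with symmetry across arms should then close the analysis up to the constants reflected in the stated bound.
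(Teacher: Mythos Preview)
Your proposal follows the same high-level structure as the paper's proof: bound the conversion loss by the steady-state fraction of ``bad'' arms, then apply Little's law with the arrival rate controlled via the variance of $M(\infty)=\sumN\syshat{A}_i(\infty)$ and the duration controlled via \Assumpnameshort. The variance bound $\E{|M(\infty)-\alpha N|}\le\sqrt{N}/2$ and the final arithmetic are identical to the paper's.

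Where you diverge is in handling the obstacle you correctly flag in your last paragraph. You propose to treat an episode as lasting from a fresh desynchronization until the states recouple, and then patch the fact that the arm may re-enter the bad set mid-episode by coupling with independent leader--follower copies and ``absorbing the additional bad-set re-entries into the same $O(1/\sqrt{N})$ budget.'' The paper sidesteps this entirely by a different choice of what counts as an episode: a \emph{disagreement period} is defined to end not only when $S_i=\syshat{S}_i$ but also at the \emph{next} time $A_i\neq\syshat{A}_i$. With this definition, the arrival process in Little's law is the stream of \emph{all} disagreement events (every instance of $A_i(t)\neq\syshat{A}_i(t)$, regardless of whether arm $i$ was previously synchronized), whose per-step count is exactly $|M(t)-\alpha N|$---the very quantity you already bounded. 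And within each such period one has $A_i=\syshat{A}_i$ throughout by construction, so the dynamics are \emph{exactly} leader--follower and the period length is stochastically dominated by $\synctime(\cdot)$, giving $\dislen\le\syncmax$ with no auxiliary coupling needed. This is both simpler than your proposed fix and makes more natural use of $\E{|M(\infty)-\alpha N|}$, which already counts every arm with a mismatched action, not just the newly desynchronized ones.
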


\paragraph{Proof sketch.}
The proof of Theorem~\ref{thm:main-alg} is given in Appendix~\ref{app:proof-dt}.
Here we sketch the main ideas of the proof, whose key step involves bounding the conversion loss $V^\pibar_1 - V^{\policynameshort(\pibar)}_N$ using a fundamental tool from queueing theory, the Little's Law \cite{Kle_75}.
Specifically, we start with the upper bound
\begin{align}
V^\pibar_1 - V^{\policynameshort(\pibar)}_N 
&= \frac{1}{N} \E{\sumN r\big(\syshat{S}_i(\infty), \syshat{A}_i(\infty)\big) - \sumN r\big(S_i(\infty), A_i(\infty)\big)} \nonumber \\
&\leq \frac{2\rmax}{N} \E{\sumN \indibrac{\big(\syshat{S}_i(\infty), \syshat{A}_i(\infty)\big) \neq \big(S_i(\infty), A_i(\infty)\big)}}, \label{eq:bound-by-disagreement-number}
\end{align}
which holds since the virtual process $\big(\syshat{S}_i(t),\syshat{A}_i(t)\big)$ of each arm~$i$ follows the single-armed policy~$\pibar$. We say an arm $i$ is a \emph{bad arm} at time $t$ if $\big(\syshat{S}_i(t), \syshat{A}_i(t)\big) \neq \big(S_i(t), A_i(t)\big)$, and a \emph{good arm} otherwise.
Then $\E{\sumN \indibrac{\big(\syshat{S}_i(\infty), \syshat{A}_i(\infty)\big) \neq \big(S_i(\infty), A_i(\infty)\big)}}=\mathbb{E}\big[\text{\# bad arms}\big]$ in steady state.

By Little's Law, we have the following relationship:
\begin{align*}
\E{\text{\# bad arms}}=\text{(rate of generating bad arms)} \times \E{\text{time duration of a bad arm}}.
\end{align*}
It suffices to bound the two terms on the right hand side.
Note that the virtual actions $\syshat{A}_i(t)$'s are i.i.d.\ with mean $\mathbb{E}[\syshat{A}_i(t)]=\alpha$; a standard concentration inequality shows that at most $\big|\sumN\syshat{A}_i(t)-\alpha N\big|\approx O(\sqrt{N})$ bad arms are generated per time slot.
On the other hand, each bad arm stays bad until its real state becomes identical to its virtual state, which occurs in $O(1)$ time by virtue of SA.

\section{Continuous-time restless bandits}\label{sec:cont-rb}

In this section, we consider the continuous-time setting. The setup, policy, and theoretical results for this setting parallel those for the discrete-time setting, except that we no longer require SA.
Detailed proofs are provided in Appendix~\ref{app:proof-ct}.

\paragraph{Problem setup.}
The continuous-time restless bandit (CTRB) problem is similar to its discrete-time counterpart (cf.\  Section~\ref{sec:DTRB-problem-statement}), except that each single-armed MDP runs in continuous time.
In particular, an $N$-armed CTRB is given by a tuple $(N, \sspa^N, \aspa^N, \qrate, r, \alpha N)$, where $\sspa$ is the finite state space and $\aspa=\{0,1\}$ is the action space of a single arm.
In continuous time, each arms dynamics is governed by the transition \emph{rates} (rather than probabilities) $\qrate = \{\qrate(s,a,s')\}_{s,s'\in\sspa, a\in\aspa}$, where $\qrate(s,a,s')$ is the rate of transitioning from state $s$ to state $s'\neq s$ upon taking action $a$.
We again assume that the transition kernel $\qrate$ of each arm is unichain.
Given the states and actions of all arms, the transitions of different arms are independent from each other. 
Similarly, $r(s,a)$ is the \emph{instantaneous rate} of accumulating reward while taking action $a$ in state $s$.
The budget constraint now requires that at any moment of time, the total number of arms taking the active action~$1$ is equal to $\alpha N$.  

The objective is again maximizing the long-run average reward, that is,
\begin{align}
    \label{eq:N-arm-formulation-ct} \underset{\text{policy } \pi}{\text{maximize}} & \quad V^\pi_N \triangleq \lim_{T\to\infty } \frac{1}{T} \int_0^T \frac{1}{N} \sumN \E{r(S_i^\pi(t), A_i^\pi(t))}dt\\
    \text{subject to}  
    &\quad  \sum_{i=1}^N A_i^\pi(t) = \alpha N \quad \forall t\ge 0. \label{eq:hard-budget-constraint-ct}
\end{align}
Let $V^*_N = \sup_\pi V^\pi_N$ denote the optimal value of the above optimization problem.

\paragraph{Single-armed problem.}
The single-armed problem for CTRB is defined analogously as its discrete-time counterpart~\eqref{eq:single-arm-formulation}--\eqref{eq:relaxed-constraint}. 
This single-armed problem can again be written as a linear program, where the decision variable $y(s,a)$ represents the steady-state probability of being in state $s$ taking action $a$: 
\begin{align}
    \label{eq:lp-single-cont} \tag{LP-CT} \underset{\{y(s, a)\}_{s\in\sspa,a\in\aspa}}{\text{maximize}} \mspace{12mu}&\sum_{s\in\sspa,a\in\aspa} r(s, a) y(s, a) \\
    \text{subject to}\mspace{25mu}
    &\mspace{15mu}\sum_{s\in\sspa} y(s, 1) = \alpha \label{eq:expect-budget-constraint-cont}\\
    & \sum_{s'\in\sspa, a\in\aspa} y(s', a) \qrate(s', a, s) = 0 \quad \forall s\in\sspa \label{eq:flow-balance-equation-cont}\\
    &\mspace{3mu}\sum_{s\in\sspa, a\in\aspa} y(s,a) = 1;  
    \quad 
     y(s,a) \geq 0 \;\; \forall s\in\sspa, a\in\aspa.  \label{eq:non-negative-constraint-cont}
\end{align}
In \eqref{eq:flow-balance-equation-cont}, we use the convention that $\qrate(s,a,s)=-\qrate(s,a)\triangleq -\sum_{s'\neq s}G(s,a,s')$. 
Again, the optimal value of \eqref{eq:lp-single-cont} upper bounds the optimal value for the $N$-armed problem, i.e., $V^\rel_1=V^{\pibar^*}_1 \ge V^*_N $, where $\pibar^*$ is any optimal single-armed policy and can be computed using the same formula in \eqref{eq:single-arm-opt-def}. 
Note that in continuous time, the optimal policy $\pibar^*$ is carried out through uniformization \cite{Put_05}.

\paragraph{\textbf{Our policy.}} 
Our framework for the CTRB,  \policynamect\ (\policynameshortct), is presented in Algorithm~\ref{alg:main-alg-cont}.
\policynameshortct\ works in a similar fashion as its discrete-time counterpart.
It takes a single-armed policy $\pibar$ as input, and each arm~$i$ independently simulates a virtual single-armed process $(\syshat{S}_i(t),\syshat{A}_i(t))$ following $\pibar$.
\policynameshortct\ then chooses the real actions $A_i(t)$'s to match the virtual actions $\syshat{A}_i(t)$'s to the extent allowed by the budget constraint $\sumN A_i(t)=\alpha N$.

To run \policynameshortct in continuous time for the $N$-armed problem, we use the following uniformization to set discrete \emph{decision epochs} $\{t_k\}_{k=0,1,\dots}$ for updating the actions and virtual processes. 
We uniformize at rate $2N\qmax$ with $\qmax = \max_{s\in\sspa, a\in\aspa} G(s,a)$, which is an upper bound on the total transition rate of the real and virtual states in an $N$-armed system.  
We generate a decision epoch either when a real state transitions, or when an independent exponential timer with rate $2N \qmax - \sumN G(S_i(t), A_i(t))$ ticks.

The definitions of conversion loss, optimality gap, and asymptotic optimality are the same as the discrete-time setting.

\begin{algorithm}[t]
\caption{\policynamect\ (\policynameshortct)}
\label{alg:main-alg-cont}
\textbf{Input:}
$(N, \sspa^N, \aspa^N, \qrate, r, \alpha N)$, initial states $\veS(0)$, single-armed policy $\pibar$, max transition rate $\qmax$ \\
\textbf{Initialize:} Virtual states $\syshat{\veS}(0)$ are $N$ i.i.d. samples following the stationary distribution of $\pibar$; $t_0=0$
\begin{algorithmic}[1]
\For{$k=0,1,2,\dots$}
    \State Independently sample $\syshat{A}_i(t_k) \gets \pibar(\cdot | \syshat{S}_i(t_k))$ for each arm $i\in[N]$
    \Comment{\ul{\emph{Generate virtual actions}}}
    \If{$\sumN \syshat{A}_i(t_k) \geq \alpha N$} \Comment{\ul{\emph{Select a set $\mathcal{A}$ of $\alpha N$ arms to activate}}}
        \State $\mathcal{A}\gets$ a set of $\alpha N$ arms chosen from $\{i\colon \syshat{A}_i(t_k)=1\}$ (any tie-breaking) 
    \Else
        \State $\mathcal{B}\gets$ a set of $\alpha N -\sumN \syshat{A}_i(t_k)$ arms chosen from $\{i\colon\syshat{A}_i(t_k)=0\}$ (any tie-breaking)
        \State $\mathcal{A}\gets\{i\colon \syshat{A}_i(t_k)=1\}\cup \mathcal{B}$
    \EndIf
    \State Apply $A_i(t_k)=1$ for each arm $i\in\mathcal{A}$, and apply $A_i(t_k)=0$ for each arm $i\notin\mathcal{A}$
    \smallskip
    \State \(\triangleright\) \ul{\emph{The section below progresses the system to the next decision epoch and updates states}}
    \State $g_{k}^{\text{real}} \gets \sumN \qrate(S_i(t_k), A_i(t_k))$
    \State $\widetilde{t}_{k+1} \gets t_k + $ a sample from the distribution $\text{Exp}(2N\qmax - g_{k}^{\text{real}})$ 
     \If{there is an arm $i^*$ whose real state transitions before $\widetilde{t}_{k+1}$}
        \State $\left(t_{k+1},S_{i^*}(t_{k+1})\right)\gets$ (transition time, new real state)
        \If{$\syshat{S}_{i^*}(t_k) = S_{i^*}(t_k)$ and $\syshat{A}_{i^*}(t_k) = A_{i^*}(t_k)$} \Comment{\ul{\emph{Couple virtual and real states}}}
        \State $\syshat{S}_{i^*}(t_{k+1})\gets S_{i^*}(t_{k+1})$ 
        \EndIf    
    \Else
        \Comment{\ul{\emph{No transitions or transitions of virtual states of uncoupled arms}
        }}
        \State $t_{k+1} \gets \widetilde{t}_{k+1}$
        \State $g_{k}^{\text{virtual}} \gets \sumN \qrate(\syshat{S}_i(t_k), \syshat{A}_i(t_k)) \indibrac{\syshat{S}_{i}(t_k) \neq S_{i}(t_k) \text{ or } \syshat{A}_{i}(t_k) \neq A_{i}(t_k)}$
        \State With probability $1 - \frac{g_{k}^{\text{virtual}}}{2N\qmax - g_{k}^{\text{real}}}$, \textbf{continue} 
        \State Sample a pair $(i^*, s') \in [N]\times \sspa$ with probability  $\frac{\qrate(\syshat{S}_{i^*}(t_k), \syshat{A}_{i^*}(t_k), ~ s')\cdot\indibrac{\syshat{S}_{i^*}(t_k)\neq s'}}{2N\qmax - g_{k}^{\text{real}}}$ 
        \If{$\syshat{S}_{i^*}(t_k) \neq S_{i^*}(t_k)$ or $\syshat{A}_{i^*}(t_k) \neq A_{i^*}(t_k)$}
            $\syshat{S}_{i^*}(t_{k+1}) \gets s'$ 
        \EndIf
    \EndIf
    \State Other real and virtual states stay unchanged 
\EndFor
\end{algorithmic}
\end{algorithm}

\paragraph{Conversion loss and optimality gap.}

For a given single-armed policy $\pibar$, we consider a continuous-time version of the leader-and-follower system (cf.\ Section~\ref{subsec:Assumpname}).
For technical reasons, the initial actions are specified differently from the discrete-time setting.
Specifically, we assume that the initial action $\syshat{A}(0)$ of the leader arm is chosen by $\pibar$ based on $\syshat{S}(0)$, and the follower's initial action $A(0)$ equals $\syshat{A}(0)$.
As before, the follower arm always takes the same action as the leader arm regardless of its own state.
Given initial states $(S(0),\syshat{S}(0)) = (s,\syshat{s})$, the synchronization time is defined as
\begin{equation}
\synctime(s,\syshat{s}) 
\triangleq \inf\{t\ge 0\colon S(t)=\syshat{S}(t)\}.
\end{equation}
We no longer need to impose the \Assumpname, since the unichain assumption automatically implies $\E{\synctime(s,\syshat{s})} < \infty$ in continuous time---see 
Lemma~\ref{lem:follow-and-merge-cont} in Appendix~\ref{app:proof-ct}.

\begin{theorem}\label{thm:main-alg-cont}
Consider an $N$-armed CTRB $(N, \sspa^N, \aspa^N, \qrate, r, \alpha N)$ under the single-armed unichain assumption.
For any single-armed policy $\pibar$, the conversion loss of \policynameshortct is upper bounded as
\begin{equation}
\label{eq:single-arm-achievable-cont}
V^\pibar_1 - V^{\policynameshortct(\pibar)}_N 
\leq \frac{\rmax (1+2\qmax \syncmax)}{\sqrt{N}},
\quad \forall N\ge 1,
\end{equation}
where $\rmax=\max_{s\in\sspa,a\in\aspa}|r(s,a)|$, $\syncmax=\max_{s\in\sspa,\syshat{s}\in\sspa}\E{\synctime(s,\syshat{s})}$.

Consequently, for any optimal single-armed policy $\pibar^*$, the optimality gap of $\policynameshortct(\pibar^*)$ satisfies 
\begin{equation}
\label{eq:main-optimality-gap-cont}
V^*_N - V^{\policynameshortct(\pibar^*)}_N 
\leq V^{\pibar^*} - V^{\policynameshortct(\pibar^*)}_N \leq \frac{\rmax (1+2\qmax \syncmax)}{\sqrt{N}},
\quad \forall N\ge 1.
\end{equation}
\end{theorem}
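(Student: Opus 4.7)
The plan is to mirror the structure of the discrete-time proof sketch following Theorem~\ref{thm:main-alg}, adapting each step to handle the continuous-time dynamics and the uniformization used in Algorithm~\ref{alg:main-alg-cont}. First I would verify the structural property that each virtual process $(\syshat{S}_i(t),\syshat{A}_i(t))$ is a continuous-time Markov chain whose generator is the one induced by the single-armed policy $\pibar$, and that the $N$ virtual processes are mutually independent across arms. This should follow from the uniformization construction: the decision epochs are triggered either by a real transition or by an independent exponential clock, and on each branch the virtual update is designed to sample the correct rate-$\qrate$ transitions for uncoupled arms and to inherit real transitions for coupled arms. Once this is in place, the single-armed value equals $V_1^{\pibar} = \E{r(\syshat{S}_i(\infty),\syshat{A}_i(\infty))}$ for every arm $i$ in steady state.

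Next I would upper bound the conversion loss by the expected fraction of ``bad arms'' in steady state, where an arm is called bad at time $t$ if $(\syshat{S}_i(t),\syshat{A}_i(t))\neq (S_i(t),A_i(t))$. Since rewards accumulate continuously at rate $r(S_i(t),A_i(t))$, we obtain
\begin{equation*}
V_1^{\pibar} - V_N^{\policynameshortct(\pibar)} \leq \frac{2\rmax}{N}\,\E{\#\text{bad arms in steady state}},
\end{equation*}
exactly paralleling \eqref{eq:bound-by-disagreement-number}. I then apply Little's Law in continuous time to the ``bad-arm queue'': the steady-state number of bad arms equals the arrival rate of new bad arms times the expected sojourn time as a bad arm.

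For the arrival rate, a good arm can become bad only at a decision epoch where the budget constraint forces $A_i(t_k)\neq \syshat{A}_i(t_k)$. The number of such forced mismatches at a given epoch is at most $\abs{\sumN \syshat{A}_i(t_k) - \alpha N}$, and in steady state the $\syshat{A}_i$'s are independent Bernoulli($\alpha$), so this is at most $\sqrt{\alpha(1-\alpha)N}\leq \tfrac{1}{2}\sqrt{N}$ in expectation by Cauchy--Schwarz. Decision epochs occur at rate at most $2N\qmax$, so the arrival rate of bad arms per unit of real time is bounded by $2N\qmax\cdot \tfrac{1}{2}\sqrt{N}/N$ after normalization, which combined with the $\sqrt{N}$ concentration gives the $\qmax\syncmax/\sqrt N$ scaling; an additional $\rmax/\sqrt N$ boundary term accounts for the ``1'' in the prefactor $(1+2\qmax\syncmax)$. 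For the sojourn time, once an arm becomes bad, its joint (real, virtual) dynamics are exactly those of the continuous-time leader-and-follower system, and it ceases to be bad at (or before) the synchronization time; so the expected sojourn is at most $\syncmax$.

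The main obstacle, and the reason the continuous-time result is stronger than the discrete-time one, is the continuous-time synchronization lemma (Lemma~\ref{lem:follow-and-merge-cont}), which asserts $\E{\synctime(s,\syshat{s})}<\infty$ under only the unichain assumption. The intuition is that in continuous time, at every state $s$ there is an independent exponential holding time and a positive probability of any transition permitted by the unichain structure, so the coupling between leader and follower cannot be obstructed by the parity/periodicity phenomena that necessitate SA in discrete time; a Lyapunov/return-time argument on the joint chain on $\sspa\times\sspa$, exploiting that the diagonal is absorbing, should close this. The remaining technical care lies in handling the uniformization bookkeeping (so that decision-epoch averages translate correctly into continuous-time averages via PASTA-style arguments) and in justifying the exchange of long-run time averages with stationary expectations, which relies on the Markov chain formed by the joint real and virtual states being positive recurrent.
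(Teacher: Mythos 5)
Your overall architecture matches the paper's: show the virtual processes are $N$ i.i.d.\ chains following $\pibar$ (the paper's Lemma~\ref{lem:cont-markovian-and-virtual-process-distr}), bound the conversion loss by the steady-state count of mismatched arms, apply Little's Law, bound the generation rate by the $O(\sqrt N)$ fluctuation of $\sumN\syshat A_i$, and bound the sojourn by $\syncmax$ via the continuous-time synchronization lemma (Lemma~\ref{lem:follow-and-merge-cont}), whose role and intuition you identify correctly. However, there is a genuine gap in your Little's Law step. You claim that ``once an arm becomes bad, its joint (real, virtual) dynamics are exactly those of the continuous-time leader-and-follower system,'' so the sojourn is at most $\synctime$. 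This is false: during a stretch where $S_i(t)\neq\syshat S_i(t)$, the budget constraint can force $A_i(t)\neq\syshat A_i(t)$ at arbitrary later decision epochs, and whenever that happens the real arm transitions at rate $\qrate(S_i(t),A_i(t))$ rather than $\qrate(S_i(t),\syshat A_i(t))$, which is what the follower in the leader-and-follower system would use. So the bad stretch is not dominated by a single synchronization time, and your arrival process (``good arm becomes bad'') paired with that sojourn bound does not close. The paper resolves this with the exponential-timer construction (Definition~\ref{def:exponential-timers}): a \emph{disagreement event} is a real-state transition or a timer tick occurring while $A_i\neq\syshat A_i$, each bad stretch is cut into \emph{disagreement periods} at these events so that within each period the dynamics genuinely coincide with the leader-and-follower system, and the price is that the generation rate now counts, for every action-mismatched arm, a rate of at most $\qrate(S_i,A_i)+\qrate(S_i,\syshat A_i)\le 2\qmax$ — which is exactly where the factor $2\qmax$ in the theorem comes from (Lemma~\ref{lem:disagree-rate-bd-cont}).

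Two smaller issues. First, your arrival-rate computation (``decision epochs at rate $2N\qmax$ times $\tfrac12\sqrt N$ mismatches per epoch, normalized by $N$'') is not a valid rate argument as stated: the action-mismatched arms persist across epochs, and only the single arm that actually transitions at an epoch can incur a disagreement event; the correct accounting is per-arm rate $\le 2\qmax$ times the expected number $\E{|\sumN\syshat A_i-\alpha N|}\le\sqrt N/2$ of mismatched arms. Second, the ``$1$'' in the prefactor $(1+2\qmax\syncmax)$ is not a vague boundary term; it arises because the paper splits $\indibrac{(\syshat S_i,\syshat A_i)\neq(S_i,A_i)}\le\indibrac{\syshat S_i\neq S_i}+\indibrac{\syshat A_i\neq A_i}$ (inequality~\eqref{eq:bound-by-disagreement-number-cont}) and the second term equals $|\sumN\syshat A_i-\alpha N|$ exactly (equation~\eqref{eq:a-disagree-bound-cont}); arms whose states agree but whose actions are forced to differ still contribute to the reward gap and must be counted separately from the Little's Law bound on state mismatches.
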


Theorem~\ref{thm:main-alg-cont} establishes an $O(1/\sqrt{N})$ optimality gap without requiring UGAP or any additional assumptions beyond the standard unichain condition.

\section{Conclusion}\label{sec:conclusion}
\finalversion{
In this paper, we study the average-reward restless bandit problem. 
We propose a simulation-based framework called \policyname that converts a single-armed optimal policy into a policy in the original $N$-arm system with an $O(1/\sqrt{N})$ optimality gap. 
In the discrete-time setting, our results hold under the \Assumpname\ (\Assumpnameshort), a mild and easy-to-verify assumption that covers some problem instances that do not satisfy UGAP.
In the continuous-time setting, our results do not require any additional assumptions beyond the standard unichain condition. 
In both settings, our work is the first to achieve asymptotic optimality without assuming UGAP.
It will be interesting in future work to explore the possibility of achieving optimality gaps smaller than $O(1/\sqrt{N})$ without relying on UGAP or the non-degenerate condition.
}

\paragraph*{Acknowledgement.}
Y.\ Hong and W.\ Wang are supported in part by NSF grants CNS-200773 and ECCS-2145713. 
Q.\ Xie is supported in part by NSF grant CNS-1955997 and a J.P.\ Morgan Faculty Research Award.
Y.\ Chen is supported in part by NSF grants CCF-1704828 and CCF-2233152.
This project started when  Q.\ Xie, Y.\ Chen, and W.\ Wang were attending the Data-Driven Decision Processes program of the Simons Institute for the Theory of Computing.

\bibliographystyle{alpha}
\bibliography{refs-weina}

\newpage
\appendix

\appendixpage
\section{More related work}\label{app:more-related-work}
In this section, we provide additional discussion on related work.

\paragraph{Relationship between the infinite-horizon average-reward setting and the finite-horizon total-reward/infinite-horizon discounted-reward setting.}
\finalversion{
While infinite-horizon average-reward RBs, finite-horizon total-reward RBs, and infinite-horizon discounted-reward RBs are three different settings, one may think that the approaches in the other two settings apply to the infinite-horizon average-reward setting by taking a large enough horizon in the finite-horizon setting or letting the discount factor to go to $1$ in the infinite-horizon discounted-reward setting. Here we briefly discuss the fundamental difference between the infinite-horizon average-reward setting and the other two settings for RB problems. For simplicity, we will refer to the other two settings as the finite-horizon/discounted-reward setting. 
}

\finalversion{
From the algorithm design perspective, the finite-horizon/discounted-reward setting focuses on optimizing the transient performance, while the infinite-horizon average reward setting focuses on optimizing the steady-state performance. This distinction has a big impact on the algorithm design and the complexity. 
To see this, we note that both the finite-horizon/discounted-reward setting \cite{HuFra_17_rb_asymptotic,ZayJasWan_19_rb,BroSmi_19_rb,GhoNagJaiTam_23_finite_discount, ZhaFra_21,ZhaFra_22_discounted_rb,GasGauYan_22_exponential}
and the infinite-horizon average-reward setting \cite{Whi_88_rb,WebWei_90,Ver_16_verloop, GasGauYan_20_whittles, GasGauYan_22_exponential} utilize certain forms of linear programming (LP) relaxations to design algorithms. However, the LPs utilized differ substantially. 
In the finite-horizon/discounted-reward setting, the LP optimizes the future trajectory for a predetermined number of time steps.  
In contrast, in the infinite-horizon average-reward setting, the LP solves for the optimal state-action frequency in steady state, which can be seen as a fixed point rather than a trajectory. 
As a result, the number of variables and constraints of the LP for the finite-horizon/discounted-reward setting scales with the number of time steps, which is not the case for the infinite-horizon average-reward setting. 
On the other hand, because the LP relaxation for the infinite-horizon average-reward setting does not take into account the transient behavior, it provides less information, making the algorithm design more tricky. Consequently, the policies in prior work require the strong assumption of UGAP to achieve asymptotic optimality. 
}

\finalversion{
From the analysis perspective, the infinite-horizon average-reward setting requires a more careful analysis of the long-term effect of actions. 
To see this, note that the optimality gap bounds in the finite-horizon/discounted-reward setting have at least a \emph{quadratic} dependency on the (effective) horizon \citep[see, e.g.][]{ZhaFra_22_discounted_rb,GasGauYan_22_exponential}. Consequently, when translated to the bounds on the optimality gap of average reward per time slot, those bounds diverge to infinity as the time horizon goes to infinity. 
}

\paragraph{Relationship between the RB problem and other bandit problems.}

\finalversion{
While the exact optimality of the RB problem is in general intractable, there is a special case that has been solved optimally. Specifically, consider the case when an arm stays in the same state when it is not pulled, and only one arm is pulled at a time. The optimal policy for this special case is the celebrated Gittins index policy \cite{GitJon_74,Git_79}. A more recent reference on this topic is \cite{GitGlaWeb_11}. 
}

\finalversion{
The RB problem falls within the broader class of bandit problems, for which different formulations exist including stochastic bandits, adversarial bandits, and Bayesian bandits. 
The common theme in these formulations is to find a reward-maximizing strategy of pulling arms in the presence of uncertainty in the arms' rewards; see the book \cite{LatSze_20} for a comprehensive overview. 
Among these formulations, closely related to RBs is the Bayesian bandit problem, where Bayesian posteriors are used to model knowledge of unknown reward distributions. 
The Bayesian posterior can be seen as a state with known transition probabilities, hence the Bayesian bandit problem can be analyzed by applying tools from RBs. Examples demonstrating this connection can be found in \cite{BroSmi_19_rb}.
}

\section{Discussion on the computational costs of \policynameshort}\label{app:ftva-computational}
\finalversion{
\policynameshort\ defined in Algorithm~\ref{alg:main-alg} can be computed and implemented efficiently.

First, computing \policynameshort\ requires solving \eqref{eq:lp-single} once. Given that \eqref{eq:lp-single} does not depend on $N$, the computational cost here is a constant. 

Next, we examine the computational cost of implementing \policynameshort. 
\begin{itemize}
    \item Each arm simulates a virtual process. Because the computation required for each simulation does not scale with $N$, the overall computational cost here is linear in $N$. 
    \item In each time step, \policynameshort\ selects arms from a subset of arms to determine the real actions. The computational cost for this selection process scales linearly with $N$. 
\end{itemize}
Combining these components, we can see that the computational cost of implementing \policynameshort\ is linear in $N$.
}

\finalversion{
Additionally, we note that the simulation of the virtual processes is independent across the arms, so they can be implemented in a fully distributed manner. 
However, when taking real actions, the policy needs to know the virtual actions from all arms, so this step cannot be implemented distributedly. 
}

\section{Sufficient conditions for \Assumpname\ (\Assumpnameshort)}\label{app:sufficient-sa}

Recall that for the discrete-time setting, our bound on the conversion loss in Theorem~\ref{thm:main-alg} holds when the input single-armed policy $\pibar$ satisfies the \Assumpname~(\Assumpnameshort). \Assumpnameshort stipulates that the two arms in the leader-and-follower system (cf.\ Section~\ref{subsec:Assumpname}) under the policy $\pibar$ will reach the same state in finite expected time. All other existing work requires the UGAP assumption, which pertains to the global behavior of a non-linear dynamic system and has no known sufficient conditions that are easily checkable. In comparison, \Assumpnameshort is a reachability assumption imposed on a \textit{finite state Markov chain} and thus substantially simpler. 

To provide a more intuitive understanding of \Assumpnameshort, in this section we present several sufficient conditions for \Assumpnameshort to hold. These conditions involve the recurrent classes in the Markov chains induced by certain single-armed policies as well as the existence of self-loops or cycles in the corresponding transition diagrams. Such conditions can be verified, often in a straightforward manner, by inspecting the transition probabilities. As self-loops and cycles are common in many classes of MDPs, these conditions showcase that \Assumpnameshort is a relatively mild assumption.
We emphasize that the sufficient conditions presented in this section are not exhaustive. They serve as illustrative examples of when \Assumpnameshort holds and offer insights on the nature of synchronization.

The discussion in this section pertains to the discrete-time setting. We reiterate that our results for the continuous-time setting do not require \Assumpnameshort.

\subsection{Preliminaries}

Before presenting the sufficient conditions, we first prove a few preliminary facts. The first one is a basic property of unichain MDPs. 
\begin{proposition}\label{lem:overlap-recur-class}
    Consider two arbitrary Markovian policies $\pibar$ and $\pibar'$ for the unichain MDP $(\sspa, \aspa, P, r)$. Let the recurrent classes of the two policies $\pibar$ and $\pibar'$ be $\Spibar$ and $\Spibarprm$, respectively. Then $\Spibar \cap \Spibarprm \neq \emptyset$.
\end{proposition}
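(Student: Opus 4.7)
My plan is to prove the proposition by contradiction, leveraging the fact that the unichain assumption applies to \emph{every} Markov policy on the MDP. The idea is to ``stitch together'' $\pibar$ and $\pibar'$ into a single Markov policy $\pi^*$ whose induced chain would have two disjoint recurrent classes if $\Spibar \cap \Spibarprm = \emptyset$.

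Concretely, I would assume for contradiction that $\Spibar \cap \Spibarprm = \emptyset$ and then define the randomized Markov policy
\[
\pi^*(\cdot \mid s) \;=\;
\begin{cases}
\pibar(\cdot \mid s), & s \in \Spibar, \\
\pibar'(\cdot \mid s), & s \notin \Spibar.
\end{cases}
\]
Note $\pi^*$ depends only on the current state, so it is a bona fide Markov policy for the MDP, and the unichain assumption applies to it. The crucial step is to verify that, under $\pi^*$, both $\Spibar$ and $\Spibarprm$ are closed sets. For $\Spibar$: on this set $\pi^*$ coincides with $\pibar$, and $\Spibar$ is closed under $\pibar$ by definition of the recurrent class. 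For $\Spibarprm$: by the assumed disjointness, $\Spibarprm \subseteq \sspa \setminus \Spibar$, so on $\Spibarprm$ the policy $\pi^*$ coincides with $\pibar'$, and $\Spibarprm$ is closed under $\pibar'$.

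Next I would show that within each of these closed sets the $\pi^*$-induced chain is irreducible. On $\Spibar$, the $\pi^*$-induced transitions equal the $\pibar$-induced transitions, and restriction of $\pibar$ to its recurrent class is irreducible (a standard fact about finite-state Markov chains under unichain); the same argument applies to $\Spibarprm$ with $\pibar'$. Hence $\Spibar$ and $\Spibarprm$ are two disjoint closed communicating classes, i.e., two disjoint recurrent classes, of the chain induced by $\pi^*$. This contradicts the unichain assumption, which requires a single recurrent class. We conclude $\Spibar \cap \Spibarprm \neq \emptyset$.

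I do not anticipate any serious obstacle: the proof is essentially a one-line construction plus routine verification. The only subtlety is being careful with randomized policies (the argument works verbatim because $\pi^*$ is still state-based) and with the clean identification of ``recurrent class'' with ``closed communicating class,'' which is standard for finite-state chains.
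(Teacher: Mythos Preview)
Your proposal is correct and is essentially identical to the paper's own proof: the paper also argues by contradiction, defines the stitched policy $\pibar''(a\mid s)=\pibar(a\mid s)$ for $s\in\Spibar$ and $\pibar'(a\mid s)$ otherwise, and observes that $\Spibar$ and $\Spibarprm$ would then be two distinct recurrent classes under $\pibar''$, contradicting the unichain assumption. Your write-up is slightly more explicit about closedness and irreducibility, but the idea and construction are the same.
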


\begin{proof}
    We prove this proposition by contradiction. Suppose 
    $\Spibar \cap \Spibarprm = \emptyset$. We define a new policy $\pibar''$ as 
    \[  
        \pibar''(a | s) = 
        \begin{cases}
            \pibar(a|s), & \text{if } s\in \Spibar, \\
            \pibar'(a|s), & \text{ otherwise}.
        \end{cases}
        \quad \text{for } s\in\sspa, a\in\aspa.
    \]
    Then $\Spibar$ and $\Spibarprm$ are two distinct recurrent classes under $\pibar''$, because by definition of $\pibar''$, an arm with the initial state in $\Spibar$ remains in $\Spibar$, so it cannot reach $\Spibarprm$; similarly an arm cannot reach $\Spibar$ from $\Spibarprm$. The existence of two recurrent classes under $\pibar''$ contradicts the unichain condition. Therefore, we must have $\Spibar\cap \Spibarprm \neq \emptyset$. 
\end{proof}

The next proposition provides a convenient way for verifying \Assumpnameshort and is used for establishing other sufficient conditions.
\begin{proposition}\label{lem:pos-prob-synchronize-suffices}
    Consider the leader-and-follower system under the policy $\pibar$. 
    If there exists some $t < \infty$ such that for all initial states and actions $(s, a, \syshat{s}, \syshat{a}) \in \sspa \times \aspa \times \sspa \times \aspa$, 
    \[
        \Prob{\synctime(s, a, \syshat{s}, \syshat{a}) < t} > 0, 
    \]
    then \Assumpname holds for the policy $\pibar$. 
\end{proposition}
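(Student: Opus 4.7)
The plan is to exploit the finiteness of the state-action space together with the strong Markov property to upgrade the per-block synchronization probability into a geometric tail bound on $\synctime$, and then integrate the tail to conclude finite expectation.

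First, I would define $p \triangleq \min_{(s, a, \syshat{s}, \syshat{a}) \in \sspa \times \aspa \times \sspa \times \aspa} \Prob{\synctime(s, a, \syshat{s}, \syshat{a}) < t}$. Because $\sspa \times \aspa \times \sspa \times \aspa$ is finite and each probability in the minimum is strictly positive by the hypothesis, we get $p > 0$. Simultaneously, I would check that $\synctime$ is a stopping time with respect to the natural filtration of $(S(\tau), A(\tau), \syshat{S}(\tau), \syshat{A}(\tau))$, which is immediate from $\{\synctime = k\} = \{S(j) \neq \syshat{S}(j) \text{ for } j < k,\; S(k) = \syshat{S}(k)\}$.

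Next, I would partition time into consecutive blocks of length $t$ and apply the Markov property at the deterministic times $0, t, 2t, \ldots$. Conditional on $\synctime \geq kt$, the state-action tuple at time $kt$ is some element of $\sspa \times \aspa \times \sspa \times \aspa$ with distinct real and virtual states, so by the definition of $p$ and the hypothesis, the probability that synchronization occurs during the block $[kt, (k+1)t)$ is at least $p$. Iterating this, I obtain the geometric tail bound
\begin{equation*}
\Prob{\synctime \geq kt} \leq (1-p)^k, \qquad k = 0, 1, 2, \ldots
\end{equation*}

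Finally, I would bound the expectation by summing the tail:
\begin{equation*}
\E{\synctime(s, a, \syshat{s}, \syshat{a})} \leq \sum_{k=0}^{\infty} t \cdot \Prob{\synctime \geq kt} \leq t \sum_{k=0}^{\infty} (1-p)^k = \frac{t}{p} < \infty,
\end{equation*}
uniformly over all initial $(s, a, \syshat{s}, \syshat{a})$, which establishes \Assumpnameshort. There is no substantive obstacle here; the only subtlety is making sure the conditioning argument is clean, namely that once the two arms are synchronized they remain synchronized (built into the leader-and-follower coupling), so the event $\{\synctime \geq kt\}$ really does correspond to the states still being distinct at time $kt$, allowing the hypothesis to be reapplied on the next block.
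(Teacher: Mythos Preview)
Your proposal is correct and is essentially the same argument as the paper's: both partition time into blocks of length $t$, apply the Markov property at the deterministic times $kt$ to obtain the geometric tail bound $\Prob{\synctime \geq kt}\le (1-p)^k$ (equivalently, the paper writes $(\max \Prob{\synctime \geq t})^k$), and then sum the tail to get $\E{\synctime}\le t/p$. Your explicit verification that $\synctime$ is a stopping time is a small addition the paper leaves implicit.
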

\begin{proof}
    We claim that for any $(s, a, \syshat{s}, \syshat{a}) \in \sspa \times \aspa \times \sspa \times \aspa$, 
    \begin{equation}\label{eq:synctime-tail-bd}
        \Prob{\synctime(s, a, \syshat{s}, \syshat{a}) \geq kt} 
        \leq \Big(\max_{s', a', \syshat{s}', \syshat{a}' }\Prob{\synctime(s', a', \syshat{s}', \syshat{a}') \geq t} \Big)^k,
    \end{equation}
    where the $\max_{s', a', \syshat{s}', \syshat{a}'}$ is taken over $\sspa \times \aspa \times \sspa \times \aspa$. 
    We prove this claim by induciton on $k$.
    The base case of $k=1$ is given. Suppose we have proved \eqref{eq:synctime-tail-bd} for a certain $k$, then 
    \begin{align*}
        &\mspace{24mu} \Prob{\synctime(s, a, \syshat{s}, \syshat{a}) \geq (k+1)t} \\
        &=   \Prob{\synctime(s, a, \syshat{s}, \syshat{a}) \geq kt} \cdot \Prob{\synctime(s, a, \syshat{s}, \syshat{a})\geq (k+1)t \given \synctime(s, a, \syshat{s}, \syshat{a})  \geq kt} \\
        &= \Prob{\synctime(s, a, \syshat{s}, \syshat{a}) \geq kt} \cdot \Prob{\synctime(S(kt), A(kt), \syshat{S}(kt), \syshat{A}(kt)) \geq t \given S(kt)\neq \syshat{S}(kt)}  \\ 
        &\leq \Prob{\synctime(s, a, \syshat{s}, \syshat{a}) \geq kt} \cdot \max_{s', a', \syshat{s}', \syshat{a}' }\Prob{\synctime(s', a,' \syshat{s}', \syshat{a}') \geq t}\\
        &\leq \big(\max_{s', a', \syshat{s}', \syshat{a}' }\Prob{\synctime(s', a', \syshat{s}', \syshat{a}') \geq t} \big)^{k+1}.
    \end{align*}
    where in the second equality we have used the Markov property of the system, and in the last inequality we apply the induction hypothesis. We have proved \eqref{eq:synctime-tail-bd}. 

    The bound on the expectation $\E{\synctime(s,a,\syshat{s},\syshat{a})}$ follows by summing the tail bound \eqref{eq:synctime-tail-bd} over $k$:
    \begin{align*}
        \E{\synctime(s, a, \syshat{s}, \syshat{a})} 
        &\leq \sum_{k=0}^\infty t \Prob{\synctime(s, a, \syshat{s}, \syshat{a}) \geq kt}  \\ 
        &\leq \frac{t}{1 - \max_{s',a',\syshat{s}',\syshat{a}'}\Prob{\synctime(s', a', \syshat{s}', \syshat{a}') \geq t}} \\
        &= \frac{t}{\min_{s',a',\syshat{s}',\syshat{a}'}\Prob{\synctime(s', a', \syshat{s}', \syshat{a}') < t}} < \infty. 
    \end{align*}
    We have established \Assumpnameshort and thereby finished the proof.
\end{proof}

\begin{remark}
\finalversion{
    Proposition~\ref{lem:pos-prob-synchronize-suffices} implies that \Assumpnameshort\ only requires that the Markov chain of the leader-and-follower system, $(S(t), \syshat{S}(t))$, can reach the subset of states $\{(s,\syshat{s})\in\sspa^2 \colon s = \syshat{s}\}$ from any initial state. As a result, \Assumpnameshort\ can be efficiently verified using path-finding graph algorithms on the state transition diagram of the Markov chain. 
}
\end{remark}

\subsection{Sufficient conditions based on self-loops}

The unichain condition and Proposition~\ref{lem:overlap-recur-class} guarantee that there are some states that the leader arm and the follower arm will both visit for a positive fraction of times.\footnote{
Although the follower arm does not follow a Markovian policy since it copies actions from the leader arm, by standard results on average reward MDP, the set of states in which the follower arm spends a positive fraction of time is the same as the recurrent class of some Markovian policy (cf. Chapter~8.9 of \cite{Put_05}). This class of states  must intersect with $\Spibar$ by Proposition~\ref{lem:overlap-recur-class} and the unichain condition.}
If we can further show that the two arms have a positive probability to visit one of those states \textit{at the same time}, then Proposition~\ref{lem:pos-prob-synchronize-suffices} can be applied to verify \Assumpnameshort. A natural sufficient condition is the existence of self-loops, which guarantees that with a positive probability, the arm reaching those states first will wait for the other arm to come to the same state.

\begin{proposition}\label{prop:self-loop-all-states}
    Consider the single-armed problem with the unichain MDP $(\sspa, \aspa, P, r)$ and budget $\alpha \in (0,1)$. Let $\pibar$ be a single-armed policy with recurrent class $\Spibar$. 
    If $P(s,0,s) > 0$ and $P(s, 1, s)>0$ for all $s\in \Spibar$,
    then \Assumpname holds for the policy $\pibar$.
\end{proposition}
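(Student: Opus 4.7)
The plan is to apply Proposition~\ref{lem:pos-prob-synchronize-suffices} and show that from any initial state $(s, a, \syshat{s}, \syshat{a})$, the joint Markov chain $(\syshat{S}(t), S(t))$ reaches the diagonal $\Delta = \{(s', s')\colon s' \in \sspa\}$ with positive probability in some uniformly bounded time. Since the initial actions only affect the first transition, after which $(\syshat{S}(t), S(t))_{t \geq 1}$ is a time-homogeneous Markov chain, and since every state of a finite Markov chain reaches a recurrent class with probability one, it suffices to show that every recurrent class $\mathcal{R}$ of this joint chain contains a diagonal state.

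For an arbitrary recurrent class $\mathcal{R}$, I would argue in two steps. \textbf{Step A} (the follower's marginal visits $\mathcal{S}_{\bar\pi}$): show that $\mathcal{R}$ contains some $(\syshat{s}^*, s^*)$ with $s^* \in \mathcal{S}_{\bar\pi}$. \textbf{Step B} (walk the leader to the follower): from such a pair, construct a positive-probability path in the joint chain ending at $(s^*, s^*) \in \Delta$, which forces $(s^*, s^*) \in \mathcal{R}$.

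Step B is where the self-loop hypothesis is used directly. Since $P(s^*, 0, s^*) > 0$ and $P(s^*, 1, s^*) > 0$, whatever action the leader chooses, the follower stays at $s^*$ with positive probability. Meanwhile, the leader's marginal chain on $\mathcal{S}_{\bar\pi}$ is irreducible by the unichain assumption, so one can exhibit a positive-probability leader trajectory $\syshat{s}^* = \syshat{s}_0 \to \syshat{s}_1 \to \cdots \to \syshat{s}_K = s^*$ along which the follower self-loops at $s^*$; concatenating the one-step transitions yields a positive-probability path $(\syshat{s}^*, s^*) \to (s^*, s^*)$ in the joint chain.

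Step A is the main obstacle, because the follower's marginal process is not Markov on its own: its action depends on the leader's state. I would handle it via a stationary-distribution argument. Let $\nu$ be the stationary distribution of the joint chain supported on $\mathcal{R}$, let $\nu_S$ be its marginal on the follower's coordinate, and define the effective follower policy $\pi'(a|s) \triangleq \sum_{\syshat{s}} \frac{\nu(\syshat{s}, s)}{\nu_S(s)} \pibar(a|\syshat{s})$ wherever $\nu_S(s) > 0$, extended to a full Markov policy arbitrarily elsewhere. Using the fact that the follower's marginal one-step transition given $(\syshat{s}, s)$ equals $\sum_a \pibar(a|\syshat{s}) P(s, a, \cdot)$ in both the coupled and uncoupled cases, a direct computation shows that $\nu_S$ is stationary for the Markov chain $Q'(s, s') = \sum_a \pi'(a|s) P(s, a, s')$. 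By the unichain assumption applied to the Markov policy $\pi'$, $Q'$ is unichain with its unique recurrent class equal to $\mathrm{supp}(\nu_S)$, and Proposition~\ref{lem:overlap-recur-class} forces this recurrent class to intersect $\mathcal{S}_{\bar\pi}$. This yields $s^* \in \mathrm{supp}(\nu_S) \cap \mathcal{S}_{\bar\pi}$ and a corresponding $(\syshat{s}^*, s^*) \in \mathcal{R}$, completing Step A.
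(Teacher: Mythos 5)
Your proof is correct, but it takes a genuinely different route from the paper's. The paper never proves Proposition~\ref{prop:self-loop-all-states} directly: it notes that the budget constraint forces $\pibar(1|\statea)>0$ for some $\statea\in\Spibar$, so the hypotheses of Proposition~\ref{prop:self-loop-two-states} hold (with $\stateb\in\Spibar\cap\Spione$, nonempty by Proposition~\ref{lem:overlap-recur-class}), and Proposition~\ref{prop:self-loop-two-states} is then proved by an explicit sample-path construction: the leader parks at $\statea$ using its action-$1$ self-loop, which forces the follower to emulate the all-one policy $\pione$ and hence reach $\Spibar\cap\Spione$, after which the leader walks to the follower while the follower self-loops. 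Your Step B coincides with that last stage, but your Step A replaces the ``emulate $\pione$'' device with a stationary-distribution argument on a recurrent class of the joint chain: averaging the leader's action distribution against the conditional law of the leader given the follower's state produces a genuine Markov policy $\pi'$ for which $\nu_S$ is stationary, so the unique recurrent class of $\pi'$ equals $\mathrm{supp}(\nu_S)$ and must intersect $\Spibar$ by Proposition~\ref{lem:overlap-recur-class}. This is essentially a rigorous rendering of the footnote the paper attaches to exactly this point (the follower's empirical state distribution matches the recurrent class of \emph{some} Markovian policy), rather than the route the paper's written proofs take. What your approach buys: it uses neither the budget constraint nor the all-one policy, so the self-loop hypothesis alone suffices. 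What the paper's approach buys: the constructive path extends directly to the weaker hypotheses of Propositions~\ref{prop:self-loop-two-states}--\ref{prop:one-cycle-imply-synchronization}, where self-loops exist at only one or two states and the $\pione$ trick is indispensable. Two cosmetic points: since $(\syshat{s}^*,s^*)$ lies in a recurrent class of the joint chain and the leader's coordinate is an autonomous Markov chain, $\syshat{s}^*\in\Spibar$ automatically, which is what licenses the leader trajectory from $\syshat{s}^*$ to $s^*$; and if $\mathcal{R}$ is disjoint from the diagonal, your path yields a contradiction with closedness rather than literally $(s^*,s^*)\in\mathcal{R}$, but either way the diagonal is reachable with positive probability within a uniformly bounded time from every initial condition, which is all that Proposition~\ref{lem:pos-prob-synchronize-suffices} requires.
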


Proposition~\ref{prop:self-loop-all-states} assumes that all states in the recurrent class of the policy $\pibar$ have self-loops. We can actually require fewer self-loops if we can characterize the recurrent states of the follower arm. In particular, let $\pione$ denote the \emph{all-one policy}, the policy that applies action $1$ in all states. Observe that if the leader arm applies action $1$ for a sufficiently long time, the follower arm will also apply action $1$ and effectively follows $\pione$. The recurrent class of the policy $\pione$ must intersect the recurrent class of $\pibar$, so it suffices to have self-loops in the intersection of these two recurrent classes. The idea is formalized in Proposition~\ref{prop:self-loop-two-states}.

\begin{proposition}\label{prop:self-loop-two-states}
    Consider the single-armed problem with the unichain MDP $(\sspa, \aspa, P, r)$ and budget $\alpha \in (0,1)$. Let $\pibar$ be a single-armed policy with recurrent class $\Spibar$. Let the recurrent class of the all-one policy $\pione$ be denoted as $\Spione$. 
    If the following conditions hold:
    \begin{enumerate}
        \item There exists $\statea \in \Spibar$ such that $\pibar(1|\statea) > 0$ and $P(\statea, 1, \statea)>0$;
        \item There exists $\stateb \in \Spibar \cap \Spione$ such that $P(\stateb,0,\stateb) > 0$ and $P(\stateb, 1, \stateb)>0$, 
    \end{enumerate}
     then \Assumpname holds for the policy $\pibar$.
\end{proposition}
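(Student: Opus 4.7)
The plan is to invoke Proposition~\ref{lem:pos-prob-synchronize-suffices}: it suffices to exhibit some finite $T$ such that, from every initial configuration $(s,a,\syshat{s},\syshat{a}) \in \sspa\times\aspa\times\sspa\times\aspa$, the leader-and-follower system synchronizes within $T$ steps with positive probability. I will construct a three-phase sample path that drives both arms to $\stateb$ simultaneously, and argue that each phase succeeds with positive probability.

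In Phase~1, starting after the (arbitrary) initial actions at $t=0$, I drive the leader to $\statea$ under $\pibar$. Because $\statea \in \Spibar$ is in the recurrent class of the Markov chain induced by $\pibar$, there exist a finite $T_1$ and $p_1 > 0$ such that, from every state in $\sspa$, the leader reaches $\statea$ within $T_1$ steps with probability at least $p_1$; I impose no requirement on the follower during this phase. In Phase~2 of length $T_2$, I demand that the leader self-loop at $\statea$ while consistently selecting action~$1$: by Condition~1 the per-step probability of this is $q \triangleq \pibar(1 \mid \statea)\, P(\statea,1,\statea) > 0$, so Phase~2 succeeds with probability at least $q^{T_2}$. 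Conditioned on this event, the follower receives action $1$ at every step of Phase~2 and hence evolves exactly under $\pione$; since $\stateb \in \Spione$ and the MDP is unichain, the hitting time of $\stateb$ under $\pione$ is finite in expectation from any starting state, so I can pick $T_2$ large enough that the follower lies at $\stateb$ at the \emph{end} of Phase~2 with some probability $p_2 > 0$, uniformly in the follower's state entering Phase~2 (e.g., hit $\stateb$ by time $T_2/2$ and then self-loop via $P(\stateb,1,\stateb)>0$ for the remaining steps).

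In Phase~3 of length $T_3$, I move the leader from $\statea$ to $\stateb$ under $\pibar$; since $\stateb$ lies in the same recurrent class $\Spibar$ as $\statea$, there exist a finite $T_3$ and $p_3 > 0$ making this succeed. During these $T_3$ steps the follower copies whatever actions the leader produces, but by Condition~2 we have $p_\text{loop} \triangleq \min\{P(\stateb,0,\stateb),\,P(\stateb,1,\stateb)\} > 0$, so regardless of the leader's action sequence the follower remains at $\stateb$ for the entire phase with probability at least $p_\text{loop}^{T_3}$. At the end of Phase~3 both arms occupy $\stateb$ and are therefore synchronized. Combining,
\[
\Prob{\synctime(s,a,\syshat{s},\syshat{a}) \le 1 + T_1 + T_2 + T_3} \;\ge\; p_1 \cdot q^{T_2} p_2 \cdot p_3\, p_\text{loop}^{T_3} \;>\; 0,
\]
uniformly in the initial condition, and Proposition~\ref{lem:pos-prob-synchronize-suffices} then yields \Assumpnameshort\ for $\pibar$.

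The only delicate step is Phase~2, where $T_2$ must be simultaneously large enough for the follower under $\pione$ to reach $\stateb$ and yet appear in the exponent of $q^{T_2}$, which shrinks in $T_2$. This is not an obstacle because Proposition~\ref{lem:pos-prob-synchronize-suffices} only requires \emph{some} positive lower bound on the synchronization probability, not a tight one: once $T_2$ is fixed large enough to make $p_2$ positive, $q^{T_2}$ is a positive constant and the product of all four positive factors remains positive. The other ingredients are standard consequences of the unichain assumption on a finite-state MDP, namely finite-time reachability of $\statea$ and $\stateb$ under $\pibar$ and the self-loops at $\stateb$ that shield the follower during Phase~3.
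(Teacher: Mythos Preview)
Your proposal is correct and follows essentially the same three-phase approach as the paper's proof: drive the leader to $\statea$, have the leader self-loop at $\statea$ applying action~$1$ while the follower (now effectively under $\pione$) reaches $\stateb$, then move the leader to $\stateb$ while the follower self-loops there under either action. Your write-up is in fact slightly more careful than the paper's, making explicit the uniformity of the bounds over initial states and the device of having the follower hit $\stateb$ by time $T_2/2$ and then self-loop so it is at $\stateb$ precisely at the end of Phase~2.
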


Proposition~\ref{prop:self-loop-one-state} below gives another sufficient condition, which only requires one self-loop. Note that this condition does not subsume the one in Proposition~\ref{prop:self-loop-two-states}.

\begin{proposition}\label{prop:self-loop-one-state}
    Consider the single-armed problem with the unichain MDP $(\sspa, \aspa, P, r)$ and budget $\alpha \in (0,1)$. Let $\pibar$ be a single-armed policy with recurrent class $\Spibar$. Let the recurrent class of the all-one policy $\pione$ be denoted as $\Spione$. 
    If there exists $s^* \in \Spibar \cap \Spione$ such that $\pibar(1|s^*) > 0$ and $P(s^*, 1, s^*) > 0$, then \Assumpname holds for the policy $\pibar$. 
\end{proposition}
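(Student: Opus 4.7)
The plan is to invoke Proposition~\ref{lem:pos-prob-synchronize-suffices}: it suffices to exhibit a finite $T$ and a uniform $c>0$ such that $\Prob{\synctime(s,a,\syshat{s},\syshat{a})<T}\ge c$ for every initial quadruple $(s,a,\syshat{s},\syshat{a})\in\sspa\times\aspa\times\sspa\times\aspa$. The driving intuition is simple: first drive the leader to $s^*$; then use $\pibar(1|s^*)>0$ together with the self-loop $P(s^*,1,s^*)>0$ to hold the leader at $s^*$ under action $1$ for a long stretch; during that stretch the follower is forced to copy action $1$ and hence evolves as the $\pione$-chain, which eventually hits $s^*$ since $s^*\in\Spione$. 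Once both arms sit at $s^*$ with matching actions, the coupling keeps them synchronized forever.

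The first step uses that, under $\pibar$, the leader's state is a Markov chain on the finite space $\sspa$ whose unique recurrent class $\Spibar$ contains $s^*$. Hence there exist $t_1<\infty$ and $c_1>0$, independent of $(\syshat{s},\syshat{a})$, such that the leader visits $s^*$ at some random time $T_1\le t_1$ with probability at least $c_1$. The second step fixes any integer $t_2\ge 1$ and observes that, with $p^*:=\pibar(1|s^*)P(s^*,1,s^*)>0$, the leader independently chooses action $1$ and remains at $s^*$ at each of the next $t_2$ steps with overall probability at least $(p^*)^{t_2}$. On this holding event the follower takes action $1$ at every step of $[T_1,T_1+t_2)$, and by the leader-and-follower coupling, whenever its state differs from the leader's at time $t$ it transitions via $P(S(t),1,\cdot)$ independently of the leader. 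Therefore, conditional on the holding event, the follower's trajectory on this interval is an independent $\pione$-chain started from $S(T_1)$. The third step chooses $t_2$ large enough, using $s^*\in\Spione$ and finiteness of $\sspa$, so that this $\pione$-chain hits $s^*$ within $t_2$ steps with probability at least some $c_2>0$, uniformly in the starting state $S(T_1)$.

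Multiplying the three probabilities yields
\[
\Prob{\synctime(s,a,\syshat{s},\syshat{a})\le t_1+t_2}\;\ge\;c_1\,(p^*)^{t_2}\,c_2\;>\;0
\]
uniformly in $(s,a,\syshat{s},\syshat{a})$, after which Proposition~\ref{lem:pos-prob-synchronize-suffices} delivers \Assumpnameshort. If the two states happen to coincide earlier, synchronization has already occurred and the bound above is only a weaker lower bound.

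The main obstacle is the conditional-independence bookkeeping between the second and third steps: one must verify that the follower's trajectory during the holding interval really is a $\pione$-chain after conditioning on the leader's trajectory there. This is a direct consequence of the coupling definition in Section~\ref{subsec:Assumpname}---whenever $S(t)\ne\syshat{S}(t)$, $S(t+1)$ is drawn independently from $P(S(t),A(t),\cdot)=P(S(t),1,\cdot)$---so no UGAP-style global argument is needed. Everything else is routine finite Markov chain reasoning, and the proof does not rely on both $P(\cdot,0,\cdot)$ self-loops or on $\Spibar\cap\Spione$ being large, which is what makes this condition strictly different from the one in Proposition~\ref{prop:self-loop-two-states}.
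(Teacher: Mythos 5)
Your proposal is correct and follows essentially the same route as the paper's proof: drive the leader to $s^*$ using recurrence under $\pibar$, hold it there via $\pibar(1|s^*)>0$ and the self-loop $P(s^*,1,s^*)>0$, note that the follower then evolves as a $\pione$-chain and hits $s^*\in\Spione$ with positive probability, and conclude via Proposition~\ref{lem:pos-prob-synchronize-suffices}. Your version is slightly more explicit about the uniform constants and the conditional-independence bookkeeping, but the argument is the same.
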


We remark that Propositions~\ref{prop:self-loop-two-states} and \ref{prop:self-loop-one-state} have analogous versions that are stated in terms of the all-zero policy $\pizero$, the policy that applies action $0$ in all states. We omit the details.

Now we prove the above propositions. Note that Proposition~\ref{prop:self-loop-all-states} is strictly weaker than Proposition~\ref{prop:self-loop-two-states}: because the single-armed policy $\pibar$ satisfies the budget constraint with $\alpha \in (0,1)$, there must be a state $\statea\in \Spibar$ such that $\pibar(1|\statea) >0$. Therefore, we only need to prove Propositions~\ref{prop:self-loop-two-states} and \ref{prop:self-loop-one-state}.

\begin{figure}
    \includegraphics[width=6cm]{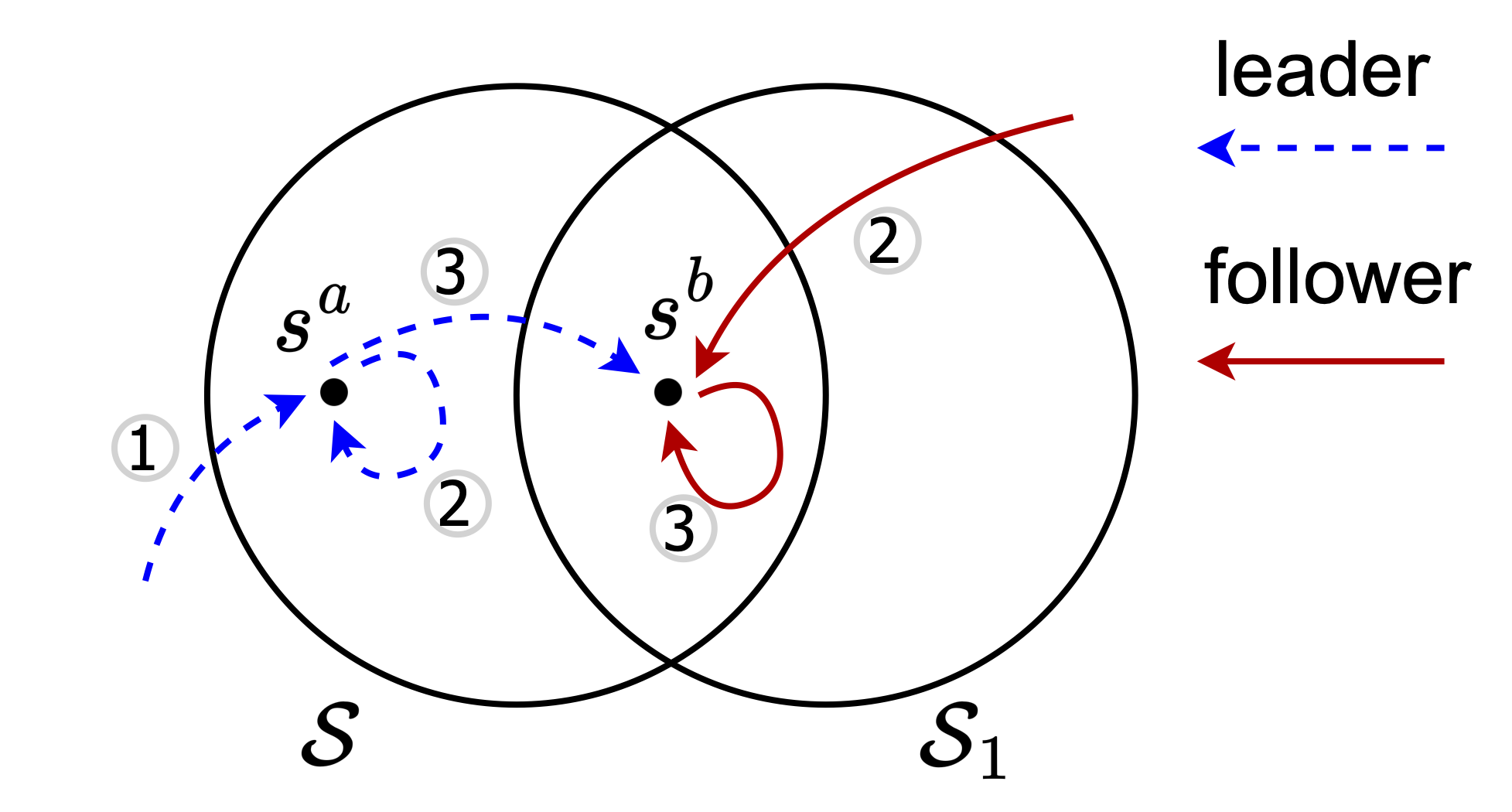}
    \hspace{1ex}
    \includegraphics[width=6cm]{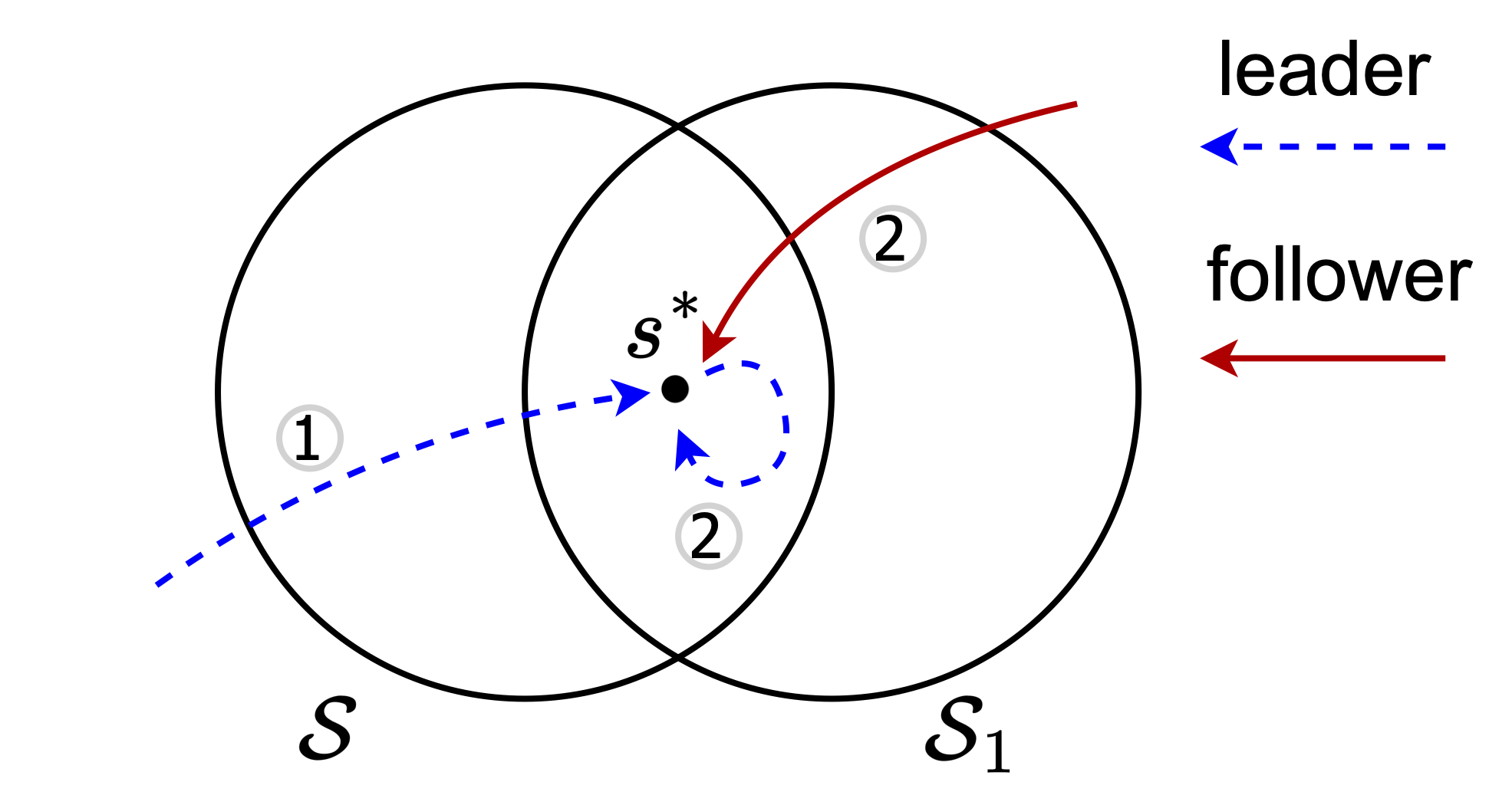}
    \centering
    \caption{The illustration of the positive probability sample paths that lead to synchronization in Proposition~\ref{prop:self-loop-two-states} (left) and Proposition~\ref{prop:self-loop-one-state} (right). In each figure, the dotted arrows correspond to the sample path of the leader arm's state, while the solid arrows correspond to the sample path of the follower arm's state. The numbers near the arrows denote the temporal order of the transition events.}
    \label{fig:sufficient-condition-self-loop}
\end{figure}

\begin{proof}[Proof of Proposition~\ref{prop:self-loop-two-states}]
    Given any initial states and actions $(S(0), A(0), \syshat{S}(0), \syshat{A}(0)) = (s, a, \syshat{s}, \syshat{a}) \in \sspa \times \aspa \times \sspa \times \aspa$, 
    we construct a sequence of positive probability events that leads to $S(t) = \syshat{S}(t)$:
    \begin{enumerate}
        \item The leader arm reaches the state $\statea \in \Spibar$ after $t_1$ time slots; 
        \item The leader arm stays in state $\statea$ and keeps applying action $1$ for $t_2$ times slots; meanwhile, the follower arm also applies action $1$ and reaches the state $\stateb \in \Spibar \cap \Spione$; 
        \item The leader arm reaches the state $\stateb$ in another $t_3$ time slots; meanwhile, the follower arm stays in the state $\stateb$, so the two arms synchronize. 
    \end{enumerate}
    The transitions of the two arms during the above sequence of events are illustrated in Figure~\ref{fig:sufficient-condition-self-loop}. 
    We argue that there exists suitably large $t_1, t_2, t_3$ such that the above three events happen with a positive probability. The first event can happen for a suitably large $t_1$ because the leader arm follows the policy $\pibar$, and $\statea$ is in the recurrent class of $\pibar$. 
    In the second event, the leader arm can stay in state $\statea$ and keep applying action $1$ because $\pibar(1|\statea) > 0$ and $P(\statea, 1, \statea)>0$. The follower arm can reach $\stateb$ after a suitably large $t_2$ time slots because it keeps applying action $1$ and $\stateb$ is in the recurrent class of the all-one policy $\pione$. 
    In the third event, the leader arm can reach $\stateb$ after a suitably large $t_3$ time slots because $\stateb$ is also in the recurrent class of $\pibar$. Meanwhile, the follower arm can stay in $\stateb$ because $P(\stateb,0,\stateb) > 0$ and $P(\stateb, 1, \stateb)>0$. Therefore, we have proved that for any $(s,a,\syshat{s},\syshat{a})$, 
    \[  
        \Prob{\synctime(s, a, \syshat{s}, \syshat{a}) \leq t_1+t_2+t_3} > 0.
    \]
    By Proposition~\ref{lem:pos-prob-synchronize-suffices}, we establish \Assumpnameshort. 
\end{proof}

\begin{proof}[Proof of Proposition~\ref{prop:self-loop-one-state}]
    Given any initial states and actions $(S(0), A(0), \syshat{S}(0), \syshat{A}(0)) = (s, a, \syshat{s}, \syshat{a}) \in \sspa \times \aspa \times \sspa \times \aspa$, 
    we construct a sequence of positive probability events that leads to $S(t) = \syshat{S}(t)$:
    \begin{enumerate}
        \item The leader arm reaches the state $s^*$ after $t_1$ time slots;
        \item The leader arm stays in the state $s^*$ and keeps applying action $1$ for $t_2$ time slots; meanwhile, the follower arm also applies action $1$ and reaches the state $s^*$, so the two arms synchronize.
    \end{enumerate}
    The transitions of the two arms during the above sequence of events are illustrated in Figure~\ref{fig:sufficient-condition-self-loop}. 
    We argue that there exists suitable $t_1$ and $t_2$ such that the above two events happen with a positive probability. The first event can happen for a suitably large $t_1$ because $s^*$ is in the recurrent class of $\pibar$. In the second event, the leader arm can stay in the state $s^*$ and keeps applying action $1$  because $\pibar(1|s^*) > 0$ and $P(s^*, 1, s^*)>0$. Meanwhile, the follower arm can reach $s^*$ after a suitably large $t_2$ time slots because $s^*$ is also in the recurrent class of the all-one policy $\pione$. Therefore, we have proved that for any $(s,a,\syshat{s},\syshat{a})$, 
    \[  
        \Prob{\synctime(s, a, \syshat{s}, \syshat{a}) \leq t_1+t_2} > 0.
    \]
    By Proposition~\ref{lem:pos-prob-synchronize-suffices}, we establish \Assumpnameshort. 
\end{proof}

\subsection{Sufficient conditions based on cycles}

The above sufficient conditions are based on self-loops, which can be viewed as cycles of minimal length. These conditions can be generalized to longer cycles, formally defined below. 

\begin{definition}[Cycle]
    Consider the MDP $(\sspa, \aspa, P, r)$. 
    We call an ordered set of states $C = (s_0, s_1, \dots, s_L)$ a \textit{cycle under the Markovian policy $\pibar$} if there is a positive probability of transitioning from $s_j$ to $s_{j+1}$ for all $j$ under the policy $\pibar$ (we identify $s_{L+1}$ with $s_{0}$). 
    We call the ordered set $C$ a \textit{cycle under any policy} if there is a positive probability of transitioning from $s_j$ to $s_{j+1}$ for all $j$ under any policy. 
    In both cases, we call $L$ the \textit{length of the cycle $C$}. 
\end{definition}

We give two sufficient conditions based on cycles. These conditions are relaxations of the conditions in Proposition~\ref{prop:self-loop-two-states} and \ref{prop:self-loop-one-state}. 
\begin{proposition}\label{prop:two-cycles-imply-synchronization}
    Consider the single-armed problem with the unichain MDP $(\sspa, \aspa, P, r)$ and budget $\alpha \in (0,1)$. Let $\pibar$ be a single-armed policy with recurrent class $\Spibar$. Let the recurrent class of the all-one policy $\pione$ be denoted as $\Spione$. 
    If the following conditions hold:
    \begin{enumerate}
        \item There exists a cycle $\cyclea$ under the policy $\pibar$ in $\Spibar$, and $\pibar(1|s) > 0$ for all $s\in \cyclea$;
        \item There exists cycle $\cycleb$ under any policy in $\Spibar \cap \Spione$;
        \item The lengths of the two cycles $\cyclea$ and $\cycleb$ are relatively prime.  
    \end{enumerate}
    then \Assumpname holds for the policy $\pibar$. 
\end{proposition}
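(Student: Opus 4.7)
The plan is to invoke Proposition~\ref{lem:pos-prob-synchronize-suffices} and, for each starting configuration $(s,a,\syshat{s},\syshat{a})\in\sspa\times\aspa\times\sspa\times\aspa$, construct a positive-probability sample path of the leader-and-follower system on which synchronization occurs within some uniform finite time $T$. The construction generalizes the self-loop arguments in Propositions~\ref{prop:self-loop-two-states} and~\ref{prop:self-loop-one-state}: the cycle $\cyclea$ takes the role of $\statea$ (a leader-side home where $\pibar$ assigns positive probability to action~$1$), the cycle $\cycleb$ takes the role of $\stateb$ (a follower-side home that persists under any policy), and coprimality of $\ell_a:=|\cyclea|$ and $\ell_b:=|\cycleb|$ is the replacement for the ``free waiting'' provided by self-loops, invoked via B\'ezout's identity to align the two cycles' phases.

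I would proceed in three stages. \emph{Stage~1 (priming):} route the leader, under $\pibar$, to a fixed state $s_a\in\cyclea$, which happens with positive probability because $s_a\in\Spibar$; then exploit $\pibar(1|\cdot)>0$ on $\cyclea$ to create a positive-probability stretch during which the leader applies action~$1$ for enough consecutive steps that the follower, forced to mirror those actions, evolves under the all-one kernel and reaches $\cycleb$, which is possible because $\cycleb\subseteq\Spione$. \emph{Stage~2 (cycling):} the leader makes $k$ full laps around $\cyclea$, each of which is a positive-probability event by the cycle property of $\cyclea$ under $\pibar$, while the follower concurrently cycles on $\cycleb$ with positive probability regardless of the leader's action sequence, because $\cycleb$ is a cycle under any policy. \emph{Stage~3 (rendezvous):} the leader then takes a fixed positive-probability path from $s_a$ of length $\tau$ to a target state $s^*\in\cycleb$, which exists since $s^*\in\Spibar$. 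If $T_0$ denotes the time the leader first returns to $s_a$ after Stage~1, the leader arrives at $s^*$ at time $T_0+k\ell_a+\tau$, whereas the follower, cycling on $\cycleb$, visits $s^*$ at times $T_0+t'+j\ell_b$ for some fixed offset $t'$ and any $j\ge 0$. Coprimality of $\ell_a$ and $\ell_b$ makes $\ell_a$ invertible modulo $\ell_b$, so the congruence $k\ell_a+\tau\equiv t'\pmod{\ell_b}$ admits a solution $k\in\{0,1,\ldots,\ell_b-1\}$; choosing such $k$ and the corresponding non-negative $j$ places both arms at $s^*$ simultaneously, establishing synchronization by a uniformly bounded time, after which Proposition~\ref{lem:pos-prob-synchronize-suffices} yields the desired conclusion.

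The main obstacle lies in Stage~1: justifying that a sufficiently long action-$1$ run by the leader has positive probability. The condition $\pibar(1|s)>0$ on $\cyclea$ gives action~$1$ positive probability per step while the leader sits on $\cyclea$, but under action~$1$ the leader may drift off $\cyclea$ to states where $\pibar(1|\cdot)$ is not guaranteed positive. I would handle this either (i) by observing that in the benign case $P(s_j,1,s_{j+1})>0$ along $\cyclea$ the leader can traverse $\cyclea$ by action~$1$ alone for a full lap, so repeated laps produce arbitrarily long all-one bursts; or (ii) by a bootstrapping argument that interleaves short action-$1$ bursts from $s_a$ with $\pibar$-return traversals back to $s_a$, invoking only that the follower needs to reach $\cycleb$ once with positive probability (not to remain under pure all-one dynamics throughout). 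Once Stage~1 is in place, Stages~2 and~3 are routine, and the B\'ezout alignment completes the argument.
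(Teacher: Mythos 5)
Your proposal follows essentially the same route as the paper's proof: route the leader to $\cyclea$, use the forced action-$1$ stretch to drive the follower into $\cycleb$, let the two arms lap their respective cycles, and use coprimality of the cycle lengths (B\'ezout) to align the leader's arrival at a state of $\cycleb$ with the follower's phase. The three stages correspond exactly to the paper's three events, and your congruence $k\ell_a+\tau\equiv t'\pmod{\ell_b}$ is the paper's choice of $c,d$ with $d\lengthb-c\lengtha=t_3$.

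On the Stage-1 obstacle you raise: the paper resolves it by your option (i) -- it treats ``the leader transitions along $\cyclea$ while applying action $1$'' as a single positive-probability event, i.e.\ it reads hypothesis 1 as making the traversal of $\cyclea$ compatible with action $1$. Your alternative (ii) does not work as stated: during the interleaved $\pibar$-return segments the leader may emit action $0$, so the follower is no longer evolving under the all-one kernel, and membership of $\cycleb$ in $\Spione$ then gives no control over where the follower goes -- the action-$0$ interludes could repeatedly undo the follower's progress toward $\cycleb$. So you should commit to reading (i), as the paper does, rather than relying on (ii).
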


\begin{proposition}\label{prop:one-cycle-imply-synchronization}
    Consider the single-armed problem with the unichain MDP $(\sspa, \aspa, P, r)$ and budget $\alpha \in (0,1)$. Let $\pibar$ be a single-armed policy with recurrent class $\Spibar$. Let the recurrent class of the all-one policy $\pione$ be denoted as $\Spione$. 
    If the following conditions hold:
    \begin{enumerate}
        \item There exists a cycle $C^*$ under the policy $\pibar$ in $\Spibar \cap \Spione$, and $\pibar(1|s) > 0$ for all $s\in \cyclea$;
        \item The policy $\pione$ is aperiodic, 
    \end{enumerate}
    then \Assumpname holds for the policy $\pibar$. 
\end{proposition}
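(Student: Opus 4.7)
The plan is to verify the hypothesis of Proposition~\ref{lem:pos-prob-synchronize-suffices}: exhibit a finite $T$ such that from every initial configuration $(s,a,\syshat{s},\syshat{a})$ the synchronization time satisfies $\Prob{\synctime(s,a,\syshat{s},\syshat{a}) < T}>0$. As in Propositions~\ref{prop:self-loop-two-states} and \ref{prop:self-loop-one-state}, I would do this by constructing an explicit positive-probability sample path on which the two arms meet at a common state at a common time. Fix a reference state $s^* \in C^*$ and let $L$ be the length of $C^*$.

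\textbf{Construction of the event.} First, since $s^* \in \Spibar$, there is some finite $t_1$ such that the leader, evolving under $\pibar$, reaches $s^*$ at time $t_1$ with positive probability. Second, from time $t_1$ onward the leader traverses $C^*$ by choosing action $1$ at every visited state of the cycle; the assumption $\pibar(1\mid s)>0$ for $s \in C^*$ together with the cycle property ensures that this has positive probability on each step, so the leader is back at $s^*$ at every time $t_1 + kL$. Because the leader applies action $1$ throughout this phase, the follower is forced to copy action $1$ and therefore evolves according to the all-one policy $\pione$ starting from whatever state $\tilde{s}$ it happens to occupy at time $t_1$.

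\textbf{Timing coordination and main obstacle.} The crux is to guarantee that the follower, evolving under $\pione$, actually visits $s^*$ at one of the leader's rendezvous times $t_1+kL$. The unichain assumption ensures that $\tilde{s}$ communicates with $\Spione$, and since $s^*\in\Spione$ and $\pione$ is aperiodic on $\Spione$, standard Markov-chain theory supplies a uniform integer $M$ (taken as the maximum over $\tilde{s}\in\sspa$ of the per-state threshold) such that $(P_{\pione})^{m}(\tilde{s}, s^*)>0$ for every $m\ge M$. Choosing any $k$ with $kL \ge M$ yields a positive-probability sample path at the end of which both arms sit at $s^*$, so $\Prob{\synctime(s,a,\syshat{s},\syshat{a}) \le t_1 + kL}>0$ uniformly in the initial configuration, and Proposition~\ref{lem:pos-prob-synchronize-suffices} then gives \Assumpnameshort. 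The main subtlety is precisely this timing coordination: the leader only revisits $s^*$ on the coarse lattice $\{t_1+kL\}$, so without aperiodicity of $\pione$ the follower's reachable times could sit on a sublattice incompatible with that of the leader; aperiodicity collapses this obstruction and is exactly what lets a single cycle suffice here, in contrast to the coprime two-cycle hypothesis of Proposition~\ref{prop:two-cycles-imply-synchronization}.
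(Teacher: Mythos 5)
Your proof is correct and follows essentially the same route as the paper's: have the leader reach the cycle $C^*$ and circulate on it with action $1$, forcing the follower onto the all-one policy $\pione$, and then invoke aperiodicity of $\pione$ to let the follower hit the meeting state at one of the leader's return times. Where you improve on the paper's terse phrasing is in making the timing obstruction explicit (the leader revisits $s^*$ only on the lattice $\{t_1 + kL\}$, and aperiodicity of $\pione$ on $\Spione$ is exactly what collapses that residue constraint), a nice parallel with the coprimality requirement in Proposition~\ref{prop:two-cycles-imply-synchronization}. One caveat that applies equally to both proofs: to have the leader traverse $C^*$ \emph{while applying action 1}, one implicitly needs $P(s_j,1,s_{j+1})>0$ along the cycle, whereas the hypotheses only give $\pibar(1|s_j)>0$ plus positivity of the $\pibar$-averaged transition; this should be read as the intended strengthening of ``cycle under $\pibar$'' in this proposition, and you inherit (rather than introduce) this imprecision.
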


Similarly to Propositions~\ref{prop:self-loop-two-states} and \ref{prop:self-loop-one-state}, Propositions~\ref{prop:two-cycles-imply-synchronization} and \ref{prop:one-cycle-imply-synchronization} have analogous version that are stated in terms of  the all-zero policy $\pizero$. 
We omit the details. 

Now we prove Propositions~\ref{prop:two-cycles-imply-synchronization} and \ref{prop:one-cycle-imply-synchronization}.

\begin{figure}
    \includegraphics[width=6cm]{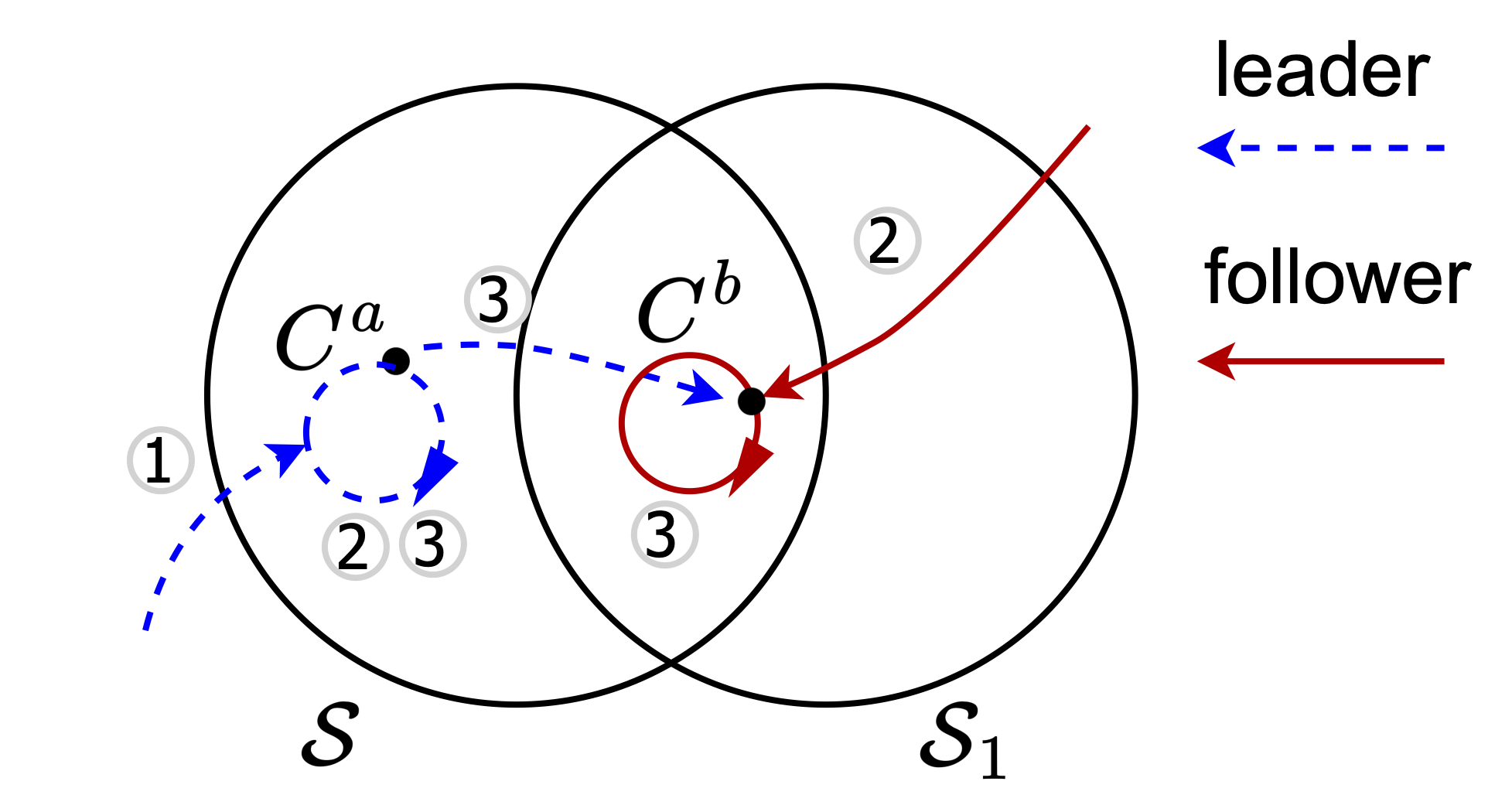}
    \hspace{1ex}
    \includegraphics[width=6cm]{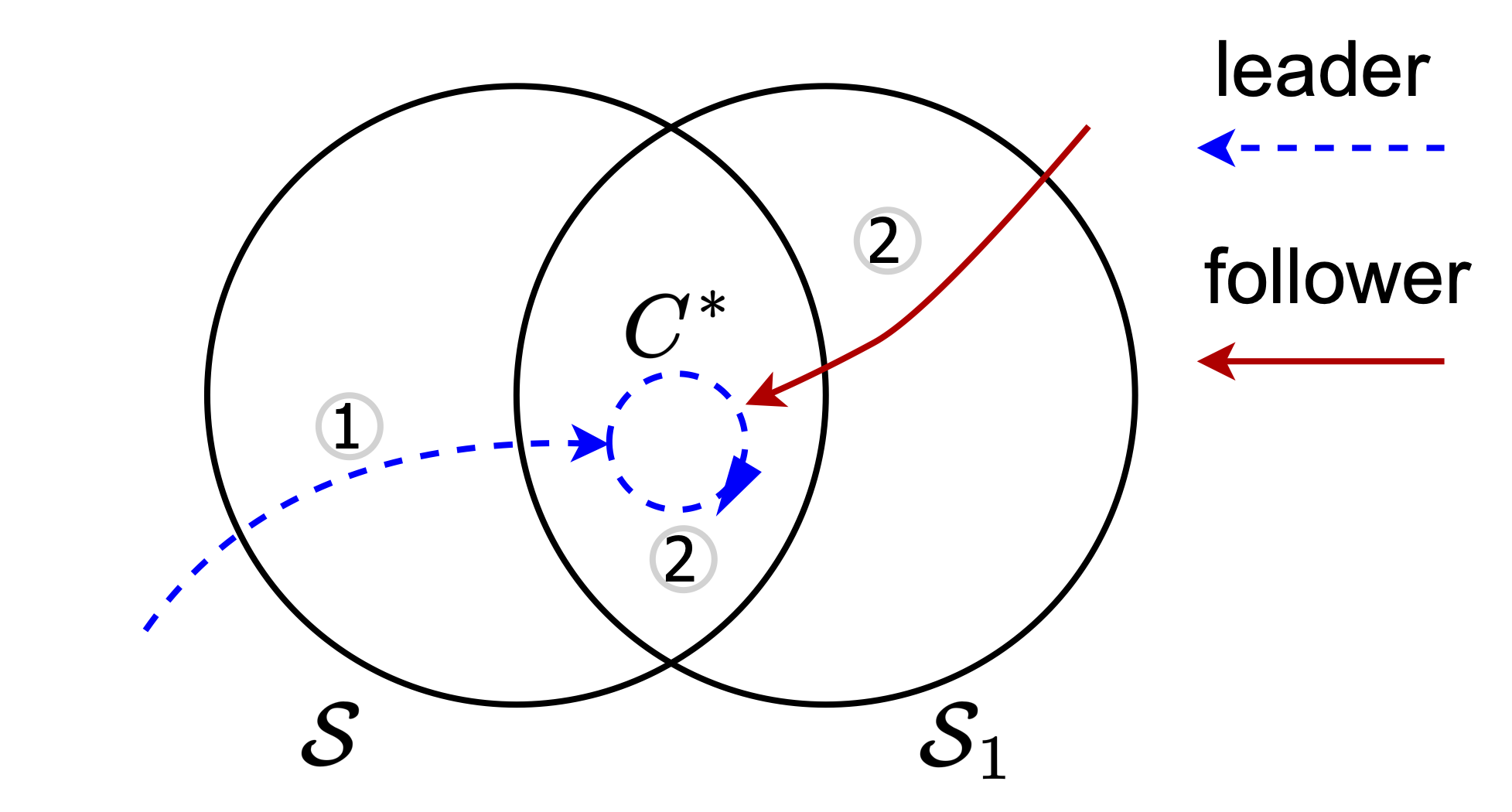}
    \centering
    \caption{The illustration of the positive probability sample paths that lead to synchronization in Proposition~\ref{prop:two-cycles-imply-synchronization} (left) and Proposition~\ref{prop:one-cycle-imply-synchronization} (right). In each figure, the dotted arrows correspond to the sample path of the leader arm's state, while the solid arrows correspond to the sample path of the follower arm's state. The numbers near the arrows denote the temporal order of the transition events.}
    \label{fig:sufficient-condition-cycles}
\end{figure}

\begin{proof}[Proof of Proposition~\ref{prop:two-cycles-imply-synchronization}]
    Given any initial states and actions $(S(0), A(0), \syshat{S}(0), \syshat{A}(0)) = (s, a, \syshat{s}, \syshat{a}) \in \sspa \times \aspa \times \sspa \times \aspa$, 
    we construct a sequence of positive probability events that leads to $S(t) = \syshat{S}(t)$:
    \begin{enumerate}
        \item The leader arm reaches the cycle $\cyclea$ after $t_1$ time slots;
        \item The leader arm transitions along the cycle and keeps applying action $1$; meanwhile, the follower arm also applies action $1$ and reaches the cycle $\cycleb$; this happens in $t_2$ times slots; let the states of the leader and follower arm at this moment be $\statea$ and $\stateb$; 
        \item The leader arm transitions along the cycle $\cyclea$ for another $c$ laps, for some positive integer $c$, and then spends another $t_3$ time slots to go from $\statea$ to $\stateb$; meanwhile, the follower arm transitions along the cycle $\cycleb$ for another $d$ laps, for some positive integer $d$.
    \end{enumerate}
    The transition of the arms during the sequence of events are as illustrated in Figure~\ref{fig:sufficient-condition-cycles}.     
    We argue that there exists some suitable $t_1$, $t_2$, $t_3$, $c$, and $d$ such that the above four events happen with a positive probability, and the two arms synchronize after the four events.  

    The first event happens with a positive probability for suitably large $t_1$ because $\cyclea$ is in $\Spibar$. The second event can happen for suitably large $t_2$ because $\pi(1|s)>0$ for all $s\in\cyclea$, and $\cycleb$ is in $\Spione$.
    
    After the first two events, suppose the state of the leader arm is $\statea$, and the state of the follower arm is $\stateb$, then $\statea\in\cyclea$ and $\stateb\in\cycleb$. Because both cycles are in $\Spibar$, there exists a positive probability path from $\statea$ to $\stateb$ under $\pibar$. Let $t_3$ be the length of the path. Let $\lengtha$ and $\lengthb$ be the length of cycles $\cyclea$ and $\cycleb$. Because $\lengtha$ and $\lengthb$ are relatively prime, we can take $c$ and $d$ such that $d \lengthb - c \lengtha = t_3$. With this choice of $c, d$, and $t_3$, the two arms will synchronize after the third and fourth events. 

    Now we argue that the third and fourth events happen with a positive probability. Because $\cyclea$ is a cycle under the policy $\pibar$, the leader arm can transition along the cycle for an arbitrary number of time slots. Similarly, because $\cycleb$ is a cycle under any policy, the follower arm can transition along the cycle for an arbitrary number of time slots; the leader arm can reach $\stateb$ from $\statea$ because of the positive probability path described in the last paragraph. 

    Therefore, we have proved that for any $(s,a,\syshat{s},\syshat{a})$, 
    \[  
        \Prob{\synctime(s, a, \syshat{s}, \syshat{a}) \leq t_1+t_2+ c \lengtha + t_3} > 0.
    \]
    By Proposition~\ref{lem:pos-prob-synchronize-suffices}, we establish \Assumpnameshort. 
\end{proof}

\begin{proof}[Proof of Proposition~\ref{prop:one-cycle-imply-synchronization}]
    Given any initial states and actions $(S(0), A(0), \syshat{S}(0), \syshat{A}(0)) = (s, a, \syshat{s}, \syshat{a}) \in \sspa \times \aspa \times \sspa \times \aspa$, 
    we construct a sequence of positive probability events that leads to $S(t) = \syshat{S}(t)$: 
    \begin{enumerate}
        \item The leader arm reaches the cycle $C^*$ after $t_1$ time slots;
        \item The leader arm transitions along the cycle and keeps applying action $1$; meanwhile, the follower arm also applies action $1$ and reaches the same state as the leader arm after $t_2$ time slots. 
    \end{enumerate}
    The transition of the arms during the sequence of events are as illustrated in Figure~\ref{fig:sufficient-condition-cycles}. 
    We argue that the above two events happen with a positive probability for suitable $t_1$ and $t_2$. The first event can happen for suitably large $t_1$ because the leader arm follows the policy $\pibar$ the cycle $C^*$ is in the recurrent class of $\pibar$. After reaching $C^*$, the leader arm can remain on $C^*$ because $C^*$ is a cycle under the policy $\pibar$, and $\pibar(1|s)>0$ for all $s\in C^*$. The follower arm can reach the same state as the leader arm because the cycle $C^*$ is also in the recurrent class of $\pione$, and $\pione$ is aperiodic.     Therefore, we have proved that for any $(s,a,\syshat{s},\syshat{a})$, 
    \[  
        \Prob{\synctime(s, a, \syshat{s}, \syshat{a}) \leq t_1+t_2} > 0.
    \]
    By Proposition~\ref{lem:pos-prob-synchronize-suffices}, we establish \Assumpnameshort. 
\end{proof}

\section{On relaxing the unichain condition}\label{sec:relax-unichain}
Throughout this paper, we have imposed the (all-policy) unichain condition, i.e., all Markovian single-armed policies have a single recurrent class. Making this blanket assumption simplifies our presentation, but it can be substantially relaxed for many of our results, as we discuss below.

The main change caused by dropping the unichain assumption is the definition of the single-armed problem, because there could be multiple recurrent classes under a single-armed policy, each corresponding to a different long-run average reward. 
To adapt to this change, we can let both the single-armed policy $\pibar$ and the initial state distribution $\mu$ be optimization variables of the single-armed problem. 
We denote the average reward given $\pibar$ and $\mu$ as $V_1^{\pibar, \mu}$. We still denote the optimal value of the single-armed problem as $V_1^\rel$. 
Note that the single-armed problem can still be solved by the same linear program in \eqref{eq:lp-single} or \eqref{eq:lp-single-cont}: 
The optimal initial state distribution $\mu^*$ can be taken as the optimal stationary state distribution $(y^*(s, 1) + y^*(s,0))_{s\in\sspa}$, given the optimal solution of LP, $y^*$. 

The definitions of \policynameshort\ and \policynameshortct\ do not require any change, considering that they are already using the stationary distribution of $\pibar$ to initialize the virtual states. 
The conversion losses, now denoted as $V^{\pibar, \mu}_1 - V^{\policynameshort(\pibar, \mu)}_N$ and $V^{\pibar, \mu}_1 - V^{\policynameshortct(\pibar, \mu)}_N$, have the same bound as what we currently have in equation~\eqref{eq:single-arm-achievable} in Theorem~\ref{thm:main-alg} for discrete-time RBs and equation~\eqref{eq:single-arm-achievable-cont} in Theorem~\ref{thm:main-alg-cont} for continuous-time RBs. 
These bounds hold as long as the given input policy $\pibar$ has \emph{finite expected synchronization times}, which can be seen by inspecting the proofs of the theorems given in Appendix~\ref{app:proof-dt} and Appendix~\ref{app:proof-ct}.

The optimality gap bounds in \eqref{eq:main-optimality-gap} in Theorem~\ref{thm:main-alg} and \eqref{eq:main-optimality-gap-cont} in Theorem~\ref{thm:main-alg-cont} still follow from the conversion loss bounds when the unichain assumption is dropped. 
To see this, for each of the discrete-time and continuous-time settings, consider an optimal single-armed policy $\pibar^*$ and optimal initial state distribution $\mu^*$. Note that $V_1^{\pibar^*, \mu^*} = V_1^\rel \geq V_N^*$. 
If $V_1^{\pibar^*, \mu^*} -  V^{\policynameshort(\pibar^*, \mu^*)}_N = O(1/\sqrt{N})$, then $V_N^* -  V^{\policynameshort(\pibar^*, \mu^*)}_N = O(1/\sqrt{N})$; if $V_1^{\pibar^*, \mu^*} -  V^{\policynameshortct(\pibar^*, \mu^*)}_N = O(1/\sqrt{N})$, then $V_N^* -  V^{\policynameshortct(\pibar^*, \mu^*)}_N = O(1/\sqrt{N})$.

In the discrete-time setting, our sufficient conditions for finite expected synchronization times, presented in Propositions~\ref{prop:self-loop-two-states}, \ref{prop:self-loop-one-state}, \ref{prop:two-cycles-imply-synchronization} and \ref{prop:one-cycle-imply-synchronization}, are valid as long as each of the two policies $\pibar$ and $\pibar_1$ has a single recurrent class and the two classes intersect. Similarly, Lemma~\ref{lem:follow-and-merge-cont} in Appendix~\ref{app:proof-ct}, which establishes finite expected synchronization times for the continuous-time setting, holds under the same condition on $\pibar$ and $\pibar_1$.

\section{Proofs for discrete-time RBs}\label{app:proof-dt}

\subsection{Overview}
In this section, we focus on proving the bound \eqref{eq:single-arm-achievable} on the conversion loss $V^\pibar_1 - V^{\policynameshort(\pibar)}_N$ for any single-armed policy $\pibar$ . The asymptotic optimality result \eqref{eq:main-optimality-gap} in Theorem~\ref{thm:main-alg} will be a direct consequence of \eqref{eq:single-arm-achievable} if we take $\pibar$ to be any optimal single-armed policy $\pibar^*$.

We first show that under \policynameshort($\pibar$) defined in Algorithm~\ref{alg:main-alg}, each virtual process is an independent Markov chain induced by applying $\pibar$ to the single-armed system $(\sspa, \aspa, P, r)$, as stated in Lemma~\ref{lem:virtual-process-distributions} below. We will rigorously prove it in the next subsection. 

\begin{lemma}\label{lem:virtual-process-distributions}
    Under \policynameshort($\pibar$) given in Algorithm~\ref{alg:main-alg}, for each $t=0,1,2\dots$, $i\in [N]$, $\syshat{\ves}\in\sspa^N$, and $\syshat{\vea}\in\aspa^N$, 
    \begin{align}
        \Prob{\syshat{\veA}(t) = \syshat{\vea} \middle|\syshat{\veS}(t), \dots, \syshat{\veS}(0)} &= \prod_{i\in[N]} \pibar\big(\syshat{a}_i | \syshat{S}_i(t)\big) \label{eq:virtual-action-conditional-distr}\\
        \Prob{\syshat{\veS}(t+1) = \syshat{\ves} \middle| \syshat{\veS}(t), \syshat{\veA}(t), \dots, \syshat{\veS}(0), \syshat{\veA}(0)} &= \prod_{i\in[N]} P\big(\syshat{S}_i(t), \syshat{A}_i(t), \syshat{s}_i\big). \label{eq:virtual-state-conditional-distr}
    \end{align}
    Let $(y(s,a))_{s\in\sspa, a\in\aspa}$ be the steady-state state-action distribution of the single-armed system under $\pibar$. By \eqref{eq:virtual-action-conditional-distr}\eqref{eq:virtual-state-conditional-distr}, for each $t\geq 0$, $(\syshat{S}_i(t), \syshat{A}_i(t))$ for $i\in[N]$ are i.i.d. with the distribution $(y(s,a))_{s\in\sspa, a\in\aspa}$. 
\end{lemma}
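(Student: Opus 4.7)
The plan is to prove both displayed identities by induction on $t$, while also tracking the joint distribution of the real and virtual processes so that the coupling step in Algorithm~\ref{alg:main-alg} can be handled cleanly. The base case is immediate: by the initialization, $\syshat{S}_i(0)$ are i.i.d.\ from the stationary distribution of $\pibar$, and line~2 of Algorithm~\ref{alg:main-alg} at $t=0$ samples $\syshat{A}_i(0)$ independently from $\pibar(\cdot\mid \syshat{S}_i(0))$, so \eqref{eq:virtual-action-conditional-distr} holds at $t=0$.

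For the inductive step, I would first dispatch \eqref{eq:virtual-action-conditional-distr}. By line~2 of Algorithm~\ref{alg:main-alg}, $\syshat{A}_i(t)$ is drawn independently of everything else given $\syshat{S}_i(t)$, with distribution $\pibar(\cdot\mid \syshat{S}_i(t))$. Since the sampling randomness for different $i$'s is independent, this immediately yields \eqref{eq:virtual-action-conditional-distr}.

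The substantive step is \eqref{eq:virtual-state-conditional-distr}. Fix a realization of the full history $\mathcal{H}_t=(\syshat{\veS}(0),\syshat{\veA}(0),\veS(0),\veA(0),\dots,\syshat{\veS}(t),\syshat{\veA}(t),\veS(t),\veA(t))$ and split the arms into the \emph{coupled} set $\mathcal{C}=\{i:\syshat{S}_i(t)=S_i(t),\ \syshat{A}_i(t)=A_i(t)\}$ and the complementary \emph{uncoupled} set $\mathcal{U}$. For $i\in\mathcal{U}$, the algorithm sets $\syshat{S}_i(t+1)$ from an independent draw from $P(\syshat{S}_i(t),\syshat{A}_i(t),\cdot)$, so these coordinates are independent of each other and of everything in $\mathcal{H}_t$. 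For $i\in\mathcal{C}$, the algorithm sets $\syshat{S}_i(t+1)=S_i(t+1)$; here $S_i(t+1)$ is sampled from $P(S_i(t),A_i(t),\cdot)$, which equals $P(\syshat{S}_i(t),\syshat{A}_i(t),\cdot)$ on the event $\{i\in\mathcal{C}\}$, and the real transitions for different arms are mutually independent given $(\veS(t),\veA(t))$. Combining these, conditional on $\mathcal{H}_t$, the vector $\syshat{\veS}(t+1)$ has the product distribution $\prod_{i=1}^N P(\syshat{S}_i(t),\syshat{A}_i(t),\syshat{s}_i)$. Marginalizing out the real history in $\mathcal{H}_t$ then yields \eqref{eq:virtual-state-conditional-distr}.

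The main potential obstacle is the concern that the budget-enforcing reshuffling of real actions correlates the arms and could thereby spoil the product form claimed for $\syshat{\veS}(t+1)$. The resolution, which the plan above exploits, is that the transition kernel $P$ depends only on the current state-action pair of a single arm, so even though membership in $\mathcal{C}$ versus $\mathcal{U}$ is determined by the budget-coupled real actions, the conditional distribution of $\syshat{S}_i(t+1)$ given $\mathcal{H}_t$ is $P(\syshat{S}_i(t),\syshat{A}_i(t),\cdot)$ in both cases. Once \eqref{eq:virtual-action-conditional-distr} and \eqref{eq:virtual-state-conditional-distr} are in hand, the final i.i.d.\ claim follows by a standard induction: the joint distribution of $(\syshat{S}_i(t),\syshat{A}_i(t))_{i\in[N]}$ factors across $i$, and each coordinate evolves as the Markov chain induced by $\pibar$, so starting from the stationary distribution each $(\syshat{S}_i(t),\syshat{A}_i(t))$ is distributed as $y(\cdot,\cdot)$ for every $t\ge 0$.
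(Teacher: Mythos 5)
Your proposal is correct and follows essentially the same route as the paper's proof: both split the arms at time $t$ into the coupled set (your $\mathcal{C}$, the paper's $\mathcal{I}_{\text{good}}(t)$) and its complement, observe that on the coupled event $P(S_i(t),A_i(t),\cdot)=P(\syshat{S}_i(t),\syshat{A}_i(t),\cdot)$ so every coordinate transitions according to the same kernel, and conclude the product form before deducing the i.i.d.\ stationary claim by induction from the initialization. Your explicit step of conditioning on the full history $\mathcal{H}_t$ (including real states and actions) and then marginalizing is a slightly more careful rendering of the same argument, since membership in $\mathcal{C}$ is determined by the real process; it is not a different approach.
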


Next, as sketched in Section~\ref{subsec:DTRB-result-and-proof-sketch}, we bound $V^\pibar_1 - V^{\policynameshort(\pibar)}_N$ as
\begin{align}
        V^\pibar_1 - V^{\policynameshort(\pibar)}_N &= \frac{1}{N} \E{\sumN r(\syshat{S}_i(\infty), \syshat{A}_i(\infty)) - \sumN r(S_i(\infty), A_i(\infty))} \nonumber \\
        &\leq \frac{2\rmax}{N} \E{\sumN \indibrac{(\syshat{S}_i(\infty), \syshat{A}_i(\infty)) \neq (S_i(\infty), A_i(\infty))}}.  \tag{\ref{eq:bound-by-disagreement-number}}
\end{align}
We say an arm $i$ is a \emph{bad arm} at time $t$ if $\big(\syshat{S}_i(t), \syshat{A}_i(t)\big) \neq \big(S_i(t), A_i(t)\big)$, and a \emph{good arm} otherwise.
Then $\E{\sumN \indibrac{\big(\syshat{S}_i(\infty), \syshat{A}_i(\infty)\big) \neq \big(S_i(\infty), A_i(\infty)\big)}}=\mathbb{E}\big[\text{\# bad arms}\big]$ in steady state.

By Little's Law, we have the following relationship:
\begin{align*}
\E{\text{\# bad arms}}=\text{(rate of generating bad arms)} \times \E{\text{time duration of a bad arm}}.
\end{align*}
To make the quantities in the above expression precise, we make the following definitions. 

We say there is a \textit{disagreement event of the arm $i$} happening at time $t$ if $\syshat{A}_i(t) \neq A_i(t)$. The disagreement events are the only cause that turns good arms into bad arms because otherwise by the construction of our algorithm, $\syshat{S}_i(t)$ will remain the same as $S_i(t)$. The number of disagreement events in each time slot is determined by how much the budget required by virtual actions violates the constraint. We call the expected number of disagreement events when the virtual states are $\syshat{\ves}$ the \textit{instantaneous disagreement rate}, denoted as $d(\syshat{\ves})$.

We also define \textit{disagreement period of the arm $i$} as a continuous period of time when arm $i$ is a bad arm, separated by disagreement events. Formally, 
\begin{definition}[Disagreement period]
    Given a sample path of the arm $i$'s real states and virtual states $(S_i(t), \syshat{S}_i(t))_{t\geq 0}$, we define the disagreement period of the arm $i$ as a time interval $[t_\tbegin, t_\tend-1]$ such that 
    \begin{align*}
        &A_i(t_\tbegin) \neq \syshat{A}_i(t_\tbegin); \\
        &A_i(t) = \syshat{A}_i(t) \text{ and } S_i(t) \neq \syshat{S}_i(t) \quad  \forall t\in [t_\tbegin+1, t_\tend-1];  \\
        &A_i(t_\tend) \neq \syshat{A}_i(t_\tend)  \text{ or } S_i(t_\tend) = \syshat{S}_i(t_\tend). 
    \end{align*}
\end{definition}
We use $\dislen$ to denote the long-run average length of the disagreement periods, i.e., the average of $t_\tend - t_\tbegin$ of all disagreement periods.

With the definitions, we can state Little's Law in the context of disagreement periods. We omit its proof since it is a direct consequence of Little's Law. 
\begin{lemma}[Little's Law for disagreement periods]\label{lem:littles-law}
    \begin{equation}
          \E{\sumN \indibrac{(\syshat{S}_i(\infty), \syshat{A}_i(\infty)) \neq (S_i(\infty), A_i(\infty))}} = \E{\drate(\syshat{\veS}(\infty))} \cdot \dislen,
    \end{equation}
    where $\drate(\syshat{\ves})$ denotes the instantaneous disagreement rate when the virtual states are $\syshat{\veS}(t) = \syshat{\ves}$; $\dislen$ denotes the long-run average length of the disagreement periods.
\end{lemma}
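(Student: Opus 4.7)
The plan is to cast the collection of disagreement periods as a queueing system and then invoke Little's Law in its standard form $L = \lambda W$, where $L$ is the steady-state mean number of ``customers'' in the system, $\lambda$ is the arrival rate, and $W$ is the mean sojourn time.

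First, I would establish stationarity. The joint process $(\veS(t), \syshat{\veS}(t), \veA(t), \syshat{\veA}(t))_{t \ge 0}$ is a Markov chain on a finite state space by construction of \policynameshort. Combining Lemma~\ref{lem:virtual-process-distributions}, which pins down the marginal laws of the virtual processes, with SA (which guarantees finite expected synchronization times and hence that disagreement periods are almost surely finite), the chain admits a stationary distribution, under which the notations $\syshat{\veS}(\infty)$, $\veS(\infty)$, $\syshat{\veA}(\infty)$, $\veA(\infty)$ in the lemma statement are well defined.

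Second, I would identify the three quantities in Little's Law. Define the ``customers'' to be the disagreement periods across all $N$ arms: each such period is opened at its start time $t_{\tbegin}$ by a disagreement event ($\syshat{A}_i(t_{\tbegin}) \ne A_i(t_{\tbegin})$) and closed at $t_{\tend}$ (by either a subsequent disagreement event or a re-synchronization $S_i(t_{\tend}) = \syshat{S}_i(t_{\tend})$). An arm $i$ is bad at time $t$ exactly when it lies strictly inside one of its disagreement periods, and so $\sum_i \mathbbm{1}\{(\syshat{S}_i(t), \syshat{A}_i(t)) \ne (S_i(t), A_i(t))\}$ equals the number of ``customers in the system'' at time $t$. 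By the definition of $d(\cdot)$ and by conditioning on $\syshat{\veS}(t)$, the expected number of disagreement events (hence new disagreement periods) starting in one time slot equals $\E{d(\syshat{\veS}(\infty))}$ in steady state, which gives the arrival rate. Finally, $\dislen$ is precisely the long-run average sojourn time $t_{\tend} - t_{\tbegin}$ by definition.

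Third, I would apply Little's Law (e.g., \cite{Kle_75}) to conclude that the steady-state mean number of bad arms equals (arrival rate) $\times$ (mean sojourn time), i.e., $\E{d(\syshat{\veS}(\infty))} \cdot \dislen$. The main technical subtlety is verifying the regularity hypotheses needed for Little's Law in this discrete-time setting: we need the arrival process, departure process, and the number-in-system process to possess well-defined long-run averages in the a.s.\ sense, and we need these time averages to coincide with the stationary expectations appearing in the lemma. This is where the ergodicity of the joint Markov chain, combined with the a.s.\ finiteness of each disagreement period (guaranteed by \Assumpnameshort), does the required work: a standard renewal-reward / PASTA-style argument upgrades the sample-path form of Little's Law to the expectation form stated in the lemma.
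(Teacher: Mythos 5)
Your proposal is correct and follows essentially the same route as the paper, which states this lemma without proof precisely because it is a direct application of Little's Law ($L = \lambda W$) once disagreement events, disagreement periods, and the instantaneous disagreement rate have been defined: bad arms are the ``customers in system,'' $\E{\drate(\syshat{\veS}(\infty))}$ is the arrival rate, and $\dislen$ is the mean sojourn time. The regularity checks you flag (stationarity of the joint chain, a.s.\ finiteness of disagreement periods via \Assumpnameshort, and the identification of the set of bad times with the union of disagreement periods) are exactly the implicit prerequisites the paper relies on.
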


\subsection{Lemmas and the proof of Theorem~\ref{thm:main-alg}}
In this section, we will prove Theorem~\ref{thm:main-alg}. 
We first rigorously prove Lemma~\ref{lem:virtual-process-distributions}.

\begin{proof}[Proof of Lemma~\ref{lem:virtual-process-distributions}]
    The first equation \eqref{eq:virtual-action-conditional-distr} is obvious because each virtual action $\syshat{A}_i(t)$ is sampled from $\pibar(\cdot |\syshat{S}_i(t))$ independent of anything else. We only need to show \eqref{eq:virtual-state-conditional-distr}. Recall that by the definition of \policynameshort($\pibar$), $\syshat{S}_i(t+1)$ could either be equal to $S_i(t+1)$ if $(\syshat{S}_i(t), \syshat{A}_i(t)) = (S_i(t), A_i(t))$, or generated afresh independent of anything else if $(\syshat{S}_i(t), \syshat{A}_i(t)) \neq (S_i(t), A_i(t))$. Let $\mathcal{I}_\text{good}(t)= \{i\colon (\syshat{S}_i(t), \syshat{A}_i(t)) = (S_i(t), A_i(t))\}$. Then
    \begin{align*}
         &\mspace{20mu} \Prob{\syshat{\veS}(t+1) = \syshat{\ves} \middle| \syshat{\veS}(t), \syshat{\veA}(t), \dots, \syshat{\veS}(0), \syshat{\veA}(0)} \\
         &= \prod_{i\in \mathcal{I}_\text{good}(t)} \mathbb{P}\Big(S_i(t+1) = \syshat{s}_i \Big| S_i(t), A_i(t)\Big) \prod_{i\notin \mathcal{I}_\text{good}(t)} \mathbb{P}\Big(\syshat{S}_i(t+1) = \syshat{s}_i \Big| \syshat{S}_i(t), \syshat{A}_i(t)\Big) \\
         &= \prod_{i\in \mathcal{I}_\text{good}(t)} P\big(S_i(t), A_i(t), \syshat{s}_i\big) \prod_{i\notin \mathcal{I}_\text{good}(t)}P\big(\syshat{S}_i(t), \syshat{A}_i(t), \syshat{s}_i\big) \\
         &= \prod_{i\in[N]} P\big(\syshat{S}_i(t), \syshat{A}_i(t), \syshat{s}_i\big),
    \end{align*}
    where the last equality is by the definition of $\mathcal{I}_\text{good}(t)$. 
    
    Combining \eqref{eq:virtual-action-conditional-distr} and \eqref{eq:virtual-state-conditional-distr}, we have confirmed that each virtual process is an independent single-armed process with transition kernel $P$ and policy $\pibar$. Because the initial virtual states $\{\syshat{S}_i(0)\}_{i\in[N]}$ are sampled i.i.d. from the stationary distribution of $\pibar$, $\{\syshat{S}_i(t)\}_{i\in[N]}$ are also i.i.d. with the same distribution. Therefore, $\{(\syshat{S}_i(t), \syshat{A}_i(t))\}_{i\in[N]}$ are i.i.d. with distribution $(y(s,a))_{s\in\sspa, a\in\aspa}$. 
\end{proof}

The next two lemmas bound the two quantities in Lemma~\ref{lem:littles-law}, the long-run average disagreement rate and the long-run average length of the disagreement periods. 

\begin{lemma}[Average disagreement rates]\label{lem:disagree-rate-bd}
    The instantaneous disagreement rate is equal to
    \begin{equation}\label{eq:inst-dis-rate-expression}
        \drate(\syshat{\ves}) = \E{\abs{\sumN \syshat{A}_i(t)  - \alpha N} \given \syshat{\veS}(t) = \syshat{\ves}}.
    \end{equation}
    The long-run average disagreement rate is bounded as
    \begin{equation}\label{eq:disagree-rate-bd}
        \E{\drate(\ve{\syshat{S}}(\infty))} \leq \frac{1}{2} \sqrt{N}.   
    \end{equation}
\end{lemma}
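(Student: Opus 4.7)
The plan is to prove the two parts of Lemma~\ref{lem:disagree-rate-bd} in sequence. For the first part, I will argue directly from the definition of \policynameshort (Algorithm~\ref{alg:main-alg}) to obtain the explicit expression for $\drate(\syshat{\ves})$. Inspecting the algorithm, the real actions are chosen to agree with the virtual actions as much as possible subject to the budget constraint. When $\sumN \syshat{A}_i(t) \ge \alpha N$, the algorithm activates exactly $\alpha N$ of the arms with $\syshat{A}_i(t) = 1$, leaving $\sumN \syshat{A}_i(t) - \alpha N$ arms with $\syshat{A}_i(t) = 1$ but $A_i(t) = 0$, and all arms with $\syshat{A}_i(t) = 0$ also get $A_i(t) = 0$. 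Symmetrically, when $\sumN \syshat{A}_i(t) < \alpha N$, the number of disagreements is $\alpha N - \sumN \syshat{A}_i(t)$. In either case, the number of $i$'s with $A_i(t) \neq \syshat{A}_i(t)$ equals $\bigl|\sumN \syshat{A}_i(t) - \alpha N\bigr|$. Taking expectation conditional on $\syshat{\veS}(t)=\syshat{\ves}$ gives \eqref{eq:inst-dis-rate-expression}.

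For the second part, the key is to exploit the i.i.d.\ structure of the virtual processes provided by Lemma~\ref{lem:virtual-process-distributions}. In steady state, the pairs $\{(\syshat{S}_i(\infty), \syshat{A}_i(\infty))\}_{i\in[N]}$ are i.i.d.\ with distribution $y(s,a)$, so in particular $\syshat{A}_1(\infty), \ldots, \syshat{A}_N(\infty)$ are i.i.d.\ Bernoulli random variables. Their common mean is $\sum_{s\in\sspa} y(s,1) = \alpha$, since $\pibar$ satisfies the relaxed budget constraint \eqref{eq:relaxed-constraint} (equivalently \eqref{eq:expect-budget-constraint} for the associated LP solution). Therefore, after taking an outer expectation,
\begin{equation*}
\E{\drate(\syshat{\veS}(\infty))}
= \E{\Bigl|\sumN \syshat{A}_i(\infty) - \alpha N\Bigr|}.
\end{equation*}

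Now I bound this mean absolute deviation by the standard deviation via Jensen's inequality:
\begin{equation*}
\E{\Bigl|\sumN \syshat{A}_i(\infty) - \alpha N\Bigr|}
\le \sqrt{\Var{\sumN \syshat{A}_i(\infty)}}
= \sqrt{N \alpha(1-\alpha)} \le \frac{1}{2}\sqrt{N},
\end{equation*}
where the final inequality uses $\alpha(1-\alpha) \le 1/4$. This establishes \eqref{eq:disagree-rate-bd}. The whole argument is straightforward once Lemma~\ref{lem:virtual-process-distributions} is in hand; the only subtle point—worth double-checking but not an obstacle—is verifying the case analysis for the first part matches the exact algorithmic rule in lines 3--8 of Algorithm~\ref{alg:main-alg}.
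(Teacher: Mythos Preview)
Your proposal is correct and follows essentially the same approach as the paper: the explicit case analysis for the first part matches the paper's observation that \policynameshort\ matches as many real actions to virtual actions as possible, and for the second part both arguments bound the mean absolute deviation by the standard deviation of $\sumN \syshat{A}_i(\infty)\sim\text{Binomial}(N,\alpha)$ and then use $\alpha(1-\alpha)\le 1/4$.
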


\begin{proof}
    We first prove the expression of instantaneous disagreement rate in \eqref{eq:inst-dis-rate-expression}. By definition, the instantaneous disagreement rate is equal to the expected number of arms such that $A_i(t) \neq \syshat{A}_i(t)$ conditioning on the virtual states being $\syshat{\ves}$. Because \policynameshort tries to match as many $A_i(t)$'s to $\syshat{A}_i(t)'s$ as possible, there are exactly $\vert{\sumN \syshat{A}_i(t)  - \alpha N\vert}$ arms such that $A_i(t) \neq \syshat{A}_i(t)$. This proves \eqref{eq:inst-dis-rate-expression}.

    To bound the average disagreement rate, letting $\syshat{\ves} = \syshat{\veS}(\infty)$ in \eqref{eq:inst-dis-rate-expression} and taking expectation, we have
    \begin{align}
        \E{\drate(\syshat{\veS}(\infty))} &= \E{\abs{\sumN \syshat{A}_i(\infty)  - \alpha N}} \nonumber \\
        &\leq \left(\E{\Big(\sumN \syshat{A}_i(\infty)  - \alpha N\Big)^2} \right)^{1/2} \label{eq:drate-bd-by-std}.
    \end{align}
    By Lemma~\ref{lem:virtual-process-distributions}, the virtual actions $\syshat{A}_i(\infty)$'s are i.i.d. binomial random variables such that $\mathbb{P}(\syshat{A}_i(\infty) = 1) = \sum_{s\in\sspa} y(s,1) =  \alpha$, $\sumN \syshat{A}_i(\infty)$ has distribution $\text{Binomial}(N, \alpha)$, whose mean and variance are $\alpha N$ and $N \alpha (1-\alpha) \leq N / 4$. Therefore the expression in \eqref{eq:drate-bd-by-std} is equal to $\Var{\sumN \syshat{A}_i(\infty)}^{1/2} = \sqrt{N} / 2$.
\end{proof}

\begin{lemma}[Average length of disagreement periods]\label{lem:disagree-period-length}
    \begin{equation}
        \dislen \leq \syncmax.
    \end{equation}
\end{lemma}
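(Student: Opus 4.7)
The plan is to bound the expected length of each disagreement period by the expected synchronization time of a coupled leader-and-follower system, then pass to the long-run average via ergodicity. The heart of the argument is a pathwise coupling between the per-arm real--virtual dynamics during a single disagreement period and an LF system initialized at the period's start.

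Fix a disagreement period $[t_\tbegin, t_\tend - 1]$ of arm $i$ with length $\ell = t_\tend - t_\tbegin$, and write $(s, a, \syshat{s}, \syshat{a}) = \big(S_i(t_\tbegin), A_i(t_\tbegin), \syshat{S}_i(t_\tbegin), \syshat{A}_i(t_\tbegin)\big)$ for the state--action tuple at its start; by the disagreement-event condition, $a \neq \syshat{a}$. The first step is to build an LF system $(S(k), \syshat{S}(k))_{k \geq 0}$ initialized at $(s, a, \syshat{s}, \syshat{a})$ and coupled to the real process through shared transition draws. This coupling is consistent because, by the definition of a disagreement period, $A_i(t_\tbegin + k) = \syshat{A}_i(t_\tbegin + k)$ for all $k \in [1, \ell - 1]$, matching the leader-follower rule, and because Algorithm~\ref{alg:main-alg} samples virtual states independently from real ones whenever the two disagree, matching the LF transition rule for mismatched states. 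Consequently, under the coupling, $(S_i(t_\tbegin + k), \syshat{S}_i(t_\tbegin + k)) = (S(k), \syshat{S}(k))$ for $k = 0, 1, \ldots, \ell$.

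The second step is to show $\ell \leq \synctime(s, a, \syshat{s}, \syshat{a})$. By the period's definition, $S(k) \neq \syshat{S}(k)$ for all $k \in [1, \ell - 1]$. At $k = \ell$, the period ends for one of two reasons: (a) the real states first coincide, equivalent under the coupling to $S(\ell) = \syshat{S}(\ell)$, giving $\synctime = \ell$; or (b) a fresh action disagreement occurs while states still differ, so $S(\ell) \neq \syshat{S}(\ell)$ and $\synctime > \ell$. Either way, $\ell \leq \synctime$, and taking expectations yields $\E{\ell \mid (s, a, \syshat{s}, \syshat{a})} \leq \E{\synctime(s, a, \syshat{s}, \syshat{a})} \leq \syncmax$.

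To conclude $\dislen \leq \syncmax$, I would invoke the ergodicity of the joint Markov chain over $(\veS(t), \syshat{\veS}(t))$ and the associated per-arm period bookkeeping to identify $\dislen$ with the stationary expectation of $\ell$ over the distribution of period-start tuples. Because the per-tuple bound $\syncmax$ is uniform, it persists under averaging. The subtlest piece is the boundary case $s = \syshat{s}$---which arises when a formerly-good arm is forced by the budget to take an action different from its virtual one---since then $\synctime = 0$ trivially while $\ell \geq 1$; I expect to handle this by restarting the coupling one step later and using $\syncmax \geq 1$ (which holds whenever the LF system is non-trivial) to absorb the unit offset.
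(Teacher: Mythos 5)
Your argument is the same as the paper's: identify each disagreement period with a leader-and-follower system started from the period's initial tuple, bound the period length pathwise by the synchronization time, and pass to the long-run average. For periods that begin with $S_i(t_\tbegin)\neq\syshat{S}_i(t_\tbegin)$, your coupling and the conclusion $\E{\ell}\le\E{\synctime(s,a,\syshat{s},\syshat{a})}\le\syncmax$ are exactly what the paper does.

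The boundary case you flag at the end is genuine, and it is not a corner case: a good arm becomes bad precisely when the budget forces $A_i(t_\tbegin)\neq\syshat{A}_i(t_\tbegin)$ while $S_i(t_\tbegin)=\syshat{S}_i(t_\tbegin)$, so a large share of disagreement periods start with $s=\syshat{s}$, where $\synctime(s,a,s,\syshat{a})=0$ by definition~\eqref{eq:synctime} even though $\ell\ge 1$. The paper's proof silently applies $t_\tend\le\synctime(s,a,\syshat{s},\syshat{a})$ to these periods as well, which is exactly where it is loose. Your proposed repair, however, does not recover the stated constant: restarting the coupling at $t_\tbegin+1$ yields $\E{\ell}\le 1+\syncmax$, and the observation $\syncmax\ge 1$ only converts this to $2\syncmax$, not to $\syncmax$. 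One can in fact build a two-state unichain instance satisfying \Assumpnameshort in which a period starting from $s=\syshat{s}$ deterministically lasts two steps while $\syncmax=1$, so the bound $\dislen\le\syncmax$ cannot be salvaged as stated; the correct uniform conclusion of this argument is $\dislen\le 1+\syncmax$. This only shifts the constant in Theorem~\ref{thm:main-alg} from $\rmax\syncmax$ to $\rmax(1+\syncmax)$ and leaves the $O(1/\sqrt{N})$ result intact, but you should state the weaker bound explicitly rather than claim the unit offset can be absorbed.
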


\begin{proof}
    To bound $\dislen$, it suffices to bound the expected length of a disagreement period with arbitrary initial states. Without loss of generality, consider a disagreement period on arm $i$ that starts at time $t_\tbegin=0$, with initial states $(S_i(0), A_i(0), \syshat{S}_i(0), \syshat{A}_i(0)) = (s, a, \syshat{s}, \syshat{a})$. During the disagreement period, the $i$-th arm can be seen as a leader-and-follower system, where the real state-action pair $(S_i(t), A_i(t))$ corresponds to the follower arm and the virtual state-action pair $(\syshat{S}_i(t), \syshat{A}_i(t))$ corresponds to the leader arm. By Assumption~\ref{assump:follow-and-merge}, the leader arm and the follower arm will synchronize in a finite expected time. When they synchronize, the disagreement period stops, which means $t_\tend \leq \synctime(s, a, \syshat{s}, \syshat{a})$. Therefore, the expected length of the disagreement period satisfies
    \[  
        \E{t_\tend - t_\tbegin} = \E{t_\tend} \leq \E{\synctime(s, a, \syshat{s}, \syshat{a})} \leq \syncmax.
    \]
    This holds for arbitrary initial states $(s, a, \syshat{s}, \syshat{a})$, so $\dislen \leq \syncmax$.
\end{proof}

Given the three lemmas above, we can prove Theorem~\ref{thm:main-alg}.

\begin{proof}[Proof of Theorem~\ref{thm:main-alg}]
    Combining Lemma~\ref{lem:littles-law}, \ref{lem:disagree-period-length} and \ref{lem:disagree-rate-bd}, we have 
    \[
        \E{\sumN \indibrac{(\syshat{S}_i(\infty), \syshat{A}_i(\infty)) \neq (S_i(\infty), A_i(\infty))}} \leq  \frac{1}{2} \syncmax  \sqrt{N}.
    \]
    Plugging the above inequality into the bound on conversion loss in \eqref{eq:bound-by-disagreement-number}, we get 
    \begin{align*}
        V^\pibar_1 - V^{\policynameshort(\pibar)}_N 
        &= \frac{1}{N} \E{\sumN r(\syshat{S}_i(\infty), \syshat{A}_i(\infty)) - \sumN r(S_i(\infty), A_i(\infty))} \nonumber \\
        &\leq \frac{2 \rmax}{N} \E{\sumN \indibrac{(\syshat{S}_i(\infty), \syshat{A}_i(\infty)) \neq (S_i(\infty), A_i(\infty))}}  \\
        &\leq \frac{\rmax \syncmax}{\sqrt{N}}.
    \end{align*}
    This finishes the proof.
\end{proof}

\section{Proofs for continuous-time RBs}\label{app:proof-ct}

\subsection{Preliminary: finite synchronization time}
In this section, we prove that the synchronization time has a finite expectation in the continuous-time setting. Unlike the discrete-time setting, we do not need to make any additional assumptions other than the standard unichain condition. Our proof is based on the observation that the holding time distribution of a continuous-time Markov chain has support on the whole positive real line, so an arbitrary number of transitions can happen in any time interval, which, from the uniformization perspective, implies self-loops in all states. 
We can thus prove that synchronization time has a finite expectation using similar logic as in the proof of Proposition~\ref{prop:self-loop-all-states} and \ref{prop:self-loop-two-states}.

\begin{lemma}[Synchronization in continuous time]\label{lem:follow-and-merge-cont}
    Consider the single-armed policy $\pibar$ and the corresponding leader-and-follower system. Given the initial states $(S(0), \syshat{S}(0)) = (s, \syshat{s})$, the synchronization time is bounded as
    \[
        \E{\synctime(s, \syshat{s})} < \infty.
    \]
\end{lemma}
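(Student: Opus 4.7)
The plan is to adapt the self-loop-based arguments of Appendix~C to the continuous-time setting. First, I will establish the continuous-time analogue of Proposition~\ref{lem:pos-prob-synchronize-suffices}: by the strong Markov property applied to the joint Markov process $(S(t),\syshat{S}(t))$ together with the geometric-tail argument already used in the discrete-time proof, it suffices to exhibit a deterministic $T<\infty$ and a constant $\varepsilon>0$ such that $\Prob{\synctime(s,\syshat{s})\le T}\ge \varepsilon$ for every initial pair $(s,\syshat{s})\in\sspa^2$. Since $\sspa$ is finite, this reduces to showing a strictly positive synchronization probability in bounded time from each fixed initial pair.

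The key observation, highlighted in the preamble to the lemma, is that exponential holding times provide the continuous-time analogue of self-loops: from any state $s$ under any action $a$, the leader-and-follower system can remain at its current configuration for any prescribed duration $\tau>0$ with probability at least $e^{-\qrate(s,a)\tau}>0$. Equivalently, uniformizing at rate $2\qmax$ yields a discrete-time chain with strictly positive self-loop probability at every state-action pair. Consequently, the self-loop hypotheses of Propositions~\ref{prop:self-loop-two-states} and~\ref{prop:self-loop-one-state} hold automatically, and I can replay the three-phase construction from the proof of Proposition~\ref{prop:self-loop-two-states}: by Proposition~\ref{lem:overlap-recur-class}, $\Spibar\cap\Spione$ is non-empty; choose any $\stateb$ in this intersection, and pick $\statea\in\Spibar$ with $\pibar(1\mid\statea)>0$. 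Then I construct the positive-probability sample path in which (i) the leader reaches $\statea$ in finite time under $\pibar$ (possible since $\statea\in\Spibar$), (ii) the leader dwells at $\statea$ while applying action $1$ long enough for the follower, now also playing action $1$, to reach $\stateb$ under the all-one CTMC (possible since $\stateb\in\Spione$ and this policy is unichain on a finite state space), and (iii) the leader traverses from $\statea$ to $\stateb$ under $\pibar$ while the follower dwells at $\stateb$. Each ``self-loop step'' in the discrete-time proof is replaced by an exponential holding event of the form ``stay at the current state for an interval of length $\tau$.''

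A short case split handles the situation in which no $\statea\in\Spibar$ with $\pibar(1\mid\statea)>0$ exists: in that case $\pibar$ plays action $0$ throughout $\Spibar$, and the symmetric argument with the roles of actions $0$ and $1$ swapped applies, using the recurrent class $\mathcal{S}_0$ of the all-zero policy $\pizero$ (non-empty intersection with $\Spibar$ guaranteed again by Proposition~\ref{lem:overlap-recur-class}) in place of $\Spione$.

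The main obstacle I anticipate is the bookkeeping of the continuous-time transcription: each exponential dwell probability $e^{-\qrate(\cdot,\cdot)\tau}$ decays in $\tau$, so I must pick the dwell lengths large enough to make the follower's catch-up probability positive (guaranteed by the unichain and finite-state assumptions) while still producing a strictly positive product over the three phases. Once this is done, finiteness of $\sspa$ yields a uniform lower bound $\varepsilon>0$ and a uniform upper bound $T<\infty$ over all initial pairs, and the geometric tail bound from step one then gives $\E{\synctime(s,\syshat{s})}\le T/\varepsilon<\infty$, completing the proof.
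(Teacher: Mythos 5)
Your proposal follows essentially the same route as the paper's proof: establish a uniform positive probability of synchronizing within a bounded time by exploiting exponential holding times as ``continuous-time self-loops'' in a staged construction that passes through a state in $\Spibar\cap\Spione$, and then conclude via the geometric tail bound. Your explicit case split for when $\pibar$ never plays action $1$ on $\Spibar$ addresses a point the paper glosses over (it simply asserts that $\pibar$ applies action $1$ a positive fraction of the time), so your write-up is, if anything, slightly more careful.
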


\begin{proof}[Proof of Lemma~\ref{lem:follow-and-merge-cont}]
    Consider the always-$1$ policy $\pione$ given by 
    \[  
        \pione(a | s) = 
        \begin{cases}
            1,  & \text{if } a=1,      \\
            0,  & \text{if } a=0.
        \end{cases}
        \quad \text{for } s \in \sspa, a\in\aspa.
    \]
    By the unichain property, $\pione$ only has a single recurrent class, which we denote as $\Spione$. We denote the recurrent class under $\pibar$ as $\Spibar$. 
    
    We first prove by contradiction that $\Spibar\cap \Spione \neq \emptyset$. Suppose $\Spibar\cap\Spione = \emptyset$. Then we define a new policy $\pibar'$ as 
    \[  
        \pibar'(a | s) = 
        \begin{cases}
            \pibar(a|s), & \text{if } s\in \Spibar, \\
            \pione(a|s), & \text{ otherwise}.
        \end{cases}
        \quad \text{for } s\in\sspa, a\in\aspa.
    \]
    Then $\Spibar$ and $\Spione$ are two distinct recurrent classes under $\pibar'$. This is because by definition of $\pibar'$, an arm with the initial state in $\Spibar$ remains in $\Spibar$, so it cannot reach $\Spione$; similarly an arm cannot reach $\Spibar$ from $\Spione$. The existence of two recurrent classes contradicts the unichain condition. Therefore, we must have $\Spibar\cap \Spione \neq \emptyset$. 

   We show that given any pair of initial states $s, \syshat{s}\in\sspa$, the probability that $\synctime(s,\syshat{s}) < 3$ is positive. We construct a sequence of positive probability events that leads to $S(t) = \syshat{S}(t)$ before time $3$:
    \begin{enumerate}
        \item The leader arm reaches a state $\statea$ such that $\pibar(1 | \statea) > 0$ by time $1$ and chooses action $1$;
        \item The leader arm has no transition during $[1,2]$, so $\syshat{S}(t) = \statea$ and $\syshat{A}(t) = 1$ for all $t\in [1,2]$; 
        \item The follower arm applies the same action as the leader arm during $[1,2]$, so $A(t)=1$ for all $t\in[1,2]$; and reaches the state $\stateb$ by time $2$ for some $\stateb\in \Spibar\cap \Spione$;
        \item The follower arm stays at $\stateb$ during $[2,3]$, and the leader arm reaches $\stateb$ by time $3$, so the two arms synchronize. 
    \end{enumerate}
    The first two events have positive probabilities because the policy $\pibar$ applies action $1$ with a positive fraction of time.    
    To see why the third event has a positive probability, observe that the follower arm applies action $1$ for all $t\in [1,2]$ regardless of its state, so it is effectively under policy $\pione$ and could traverse all states in $\Spione$. 
    The fourth event has a positive probability because the leader arm under $\pibar$ can traverse all states in $\Spibar$. 
    Note that in the above arguments, we are implicitly assuming that the two arms do not synchronize in the middle of the four events, because otherwise we are done. Also, we use the fact that in a continuous-time Markov chain, there can be an arbitrary number of transitions during any time interval. Therefore, we have proved that for all $s, \syshat{s}\in\sspa$,
    \[
        \Prob{\synctime(s,\syshat{s}) < 3} > 0.
    \]

    Now we prove by induction that for all $s, \syshat{s}\in\sspa$ and $k=0,1,2,\dots$,
    \begin{equation}\label{eq:synctime-tail-bd-cont}
        \Prob{\synctime(s,\syshat{s}) \geq 3 k} \leq \big(\max_{s',\syshat{s}'\in\sspa}\Prob{\synctime(s',\syshat{s}') \geq 3} \big)^k.
    \end{equation}
    The base case of $k=1$ is already known. Suppose we have proved \eqref{eq:synctime-tail-bd-cont} for a certain $k$, then 
    \begin{align*}
        \Prob{\synctime(s,\syshat{s}) \geq 3 (k+1)} &=   \Prob{\synctime(s,\syshat{s}) \geq 3 k} \cdot \Prob{\synctime(s,\syshat{s}) \geq 3 (k+1) \given \synctime(s,\syshat{s}) \geq 3 k} \\
        &= \Prob{\synctime(s,\syshat{s}) \geq 3 k} \cdot \Prob{\synctime(S(3k),\syshat{S}(3k)) \geq 3 \given S(3k)\neq \syshat{S}(3k)}  \\ 
        &\leq \Prob{\synctime(s,\syshat{s}) \geq 3 k} \cdot \max_{s',\syshat{s}'\in\sspa}\Prob{\synctime(s',\syshat{s}') \geq 3}\\
        &\leq \big(\max_{s',\syshat{s}'\in\sspa}\Prob{\synctime(s',\syshat{s}') \geq 3} \big)^{k+1}.
    \end{align*}
    where in the second equality we have used the Markov property of the system, and in the last inequality we apply the induction hypothesis. This proves \eqref{eq:synctime-tail-bd-cont}. 

    The bound on the expectation $\E{\synctime(s,\syshat{s})}$ follows summing the tail bound \eqref{eq:synctime-tail-bd-cont} over $k=0, 1, 2, \dots$:
    \begin{align*}
        \E{\synctime(s,\syshat{s})} 
        &\leq \sum_{k=0}^\infty  3\Prob{\synctime(s,\syshat{s}) \geq 3 k}  \\ 
        &\leq \frac{3}{1 - \max_{s',\syshat{s}'\in\sspa} \Prob{\synctime(s',\syshat{s}')\geq 3}}   \\
        &< \infty,
    \end{align*}
    where the second inequality is due to \eqref{eq:synctime-tail-bd-cont} and the fact that $\max_{s',\syshat{s}'\in\sspa}\Prob{\synctime(s',\syshat{s}') \geq 3} \leq 1 - \min_{s',\syshat{s}'\in\sspa}\Prob{\synctime(s',\syshat{s}') < 3} < 1$. 
    This finishes the proof.
\end{proof}

\subsection{Overview of the proof of Theorem~\ref{thm:main-alg-cont}}
In this section, we focus on proving the bound \eqref{eq:single-arm-achievable-cont} on the conversion loss $V^\pibar_1 - V^{\policynameshortct(\pibar)}_N$ for any single-armed policy $\pibar$. The asymptotic optimality result \eqref{eq:main-optimality-gap-cont} in Theorem~\ref{thm:main-alg-cont} will be a direct consequence of \eqref{eq:single-arm-achievable-cont} if we take $\pibar$ to be any optimal single-armed policy $\pibar^*$.

We first show that under \policynameshortct($\pibar$) defined in Algorithm~\ref{alg:main-alg-cont}, $\{(\veS(t), \syshat{\veS}(t))\}_{t\geq 0}$ is a continuous-time Markov chain and each virtual process is an independent Markov chain induced by applying $\pibar$ to the single-armed system, as stated in Lemma~\ref{lem:cont-markovian-and-virtual-process-distr} below. We will prove it in the next subsection. 

\begin{lemma}\label{lem:cont-markovian-and-virtual-process-distr}
    Under the \policynameshortct($\pibar$) defined in Algorithm~\ref{alg:main-alg-cont}, we have the following:
    \begin{enumerate}[label=(\arabic*)]
        \item $\{(\veS(t), \syshat{\veS}(t))\}_{t\geq 0}$ is a continuous-time Markov chain. 
        \item $\{\syshat{S}_i(t)\}_{t\geq 0}$ for $i\in[N]$ are $N$ independent Markov chains, whose transition rate from state $s$ to $s'$ is $\sum_{a\in\aspa} G(s, a, s')\pibar(a|s)$, for $s,s'\in\sspa$ s.t. $s\neq s'$. 
    \end{enumerate}
    Let $(y(s,a))_{s\in\sspa,a\in\aspa}$ be the steady-state state-action distribution of the single-armed system under $\pibar$. Then the second result above implies that for each $t\geq 0$, $(\syshat{S}_i(t), \syshat{A}_i(t))$ for $i\in [N]$ are i.i.d. with the distribution $(y(s,a))_{s\in\sspa,a\in\aspa}$. 
\end{lemma}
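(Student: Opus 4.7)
I will prove part~(1) by recognizing Algorithm~\ref{alg:main-alg-cont} as a uniformization of $(\veS(t), \syshat{\veS}(t))$ at rate $2N\qmax$, and prove part~(2) by directly computing the infinitesimal rates of the virtual processes and showing that each depends only on that arm's own virtual state. For part~(1), note that at every decision epoch $t_k$ the time to the next epoch is the minimum of an independent $\text{Exp}(2N\qmax - g_k^{\text{real}})$ timer and the competing real-transition clocks of total rate $g_k^{\text{real}}$; by the minimum-of-independent-exponentials property, $t_{k+1}-t_k \sim \text{Exp}(2N\qmax)$ independently of the state and of which event fires. All operations executed at $t_k$ (sampling $\syshat{\veA}(t_k)$, selecting $\mathcal{A}$, updating real and virtual states) depend only on $(\veS(t_k), \syshat{\veS}(t_k))$ together with fresh independent randomness, so $\{(\veS(t_k),\syshat{\veS}(t_k))\}_{k\ge 0}$ is a discrete-time Markov chain and its uniformization at constant rate $2N\qmax$ is a CTMC.

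\textbf{Per-arm transition rates.} Fix arm $i$ and a target state $s' \neq s = \syshat{S}_i(t_k)$. Only two mutually exclusive mechanisms can change $\syshat{S}_i$ to $s'$ at epoch $t_{k+1}$: (a) arm $i$ is currently coupled and its real state jumps to $s'$, or (b) arm $i$ is currently uncoupled and the timer branch selects the pair $(i, s')$. In case (a), coupling forces $A_i(t_k) = \syshat{A}_i(t_k)$, so conditional on $\syshat{A}_i(t_k)=a$, the event ``next event is a real transition of arm $i$ to $s'$'' has probability $\qrate(s,a,s')/(2N\qmax)$. In case (b), the event ``timer fires at $t_{k+1}$ and $(i, s')$ is the sampled pair'' has probability $\frac{2N\qmax - g_k^{\text{real}}}{2N\qmax}\cdot\frac{\qrate(s,\syshat{A}_i,s')}{2N\qmax - g_k^{\text{real}}} = \qrate(s,\syshat{A}_i,s')/(2N\qmax)$. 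Either way, conditional on $\syshat{A}_i(t_k)=a$, the per-epoch probability that $\syshat{S}_i$ jumps to $s'$ is $\qrate(s,a,s')/(2N\qmax)$. Since $\syshat{A}_i(t_k) \sim \pibar(\cdot | s)$ is a fresh sample drawn independently of everything else, averaging yields rate $2N\qmax \cdot \sum_a \frac{\qrate(s,a,s')}{2N\qmax}\pibar(a|s) = \sum_a \qrate(s,a,s')\pibar(a|s)$, which depends only on $s$.

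\textbf{Independence and main obstacle.} Independence of $\{\syshat{S}_i(t)\}_{i\in[N]}$ then follows because at each decision epoch at most one coordinate of $\syshat{\veS}$ changes (the algorithm either updates exactly one real state via the real-transition clock, or at most one virtual state via the timer sampler), so the generator of $(\syshat{S}_1,\ldots,\syshat{S}_N)$ decomposes into a sum of $N$ single-arm generators whose rates depend only on the corresponding coordinate. The final i.i.d.\ claim follows since the virtual states are initialized as i.i.d.\ samples from the stationary distribution $\pi(\cdot) = \sum_a y(\cdot, a)$ of $\pibar$, the marginal of each $\syshat{S}_i(t)$ therefore remains $\pi$ at all times, and $\syshat{A}_i(t)$ at any $t \in [t_k, t_{k+1})$ equals $\syshat{A}_i(t_k) \sim \pibar(\cdot|\syshat{S}_i(t_k)) = \pibar(\cdot|\syshat{S}_i(t))$ since the virtual state is constant on this interval. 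I expect the main subtlety to be the coupled case (a): the real action $A_i(t_k)$ is assigned via the global tie-breaking in lines~3--8 of Algorithm~\ref{alg:main-alg-cont} and can in principle depend on \emph{every} arm's virtual action, so a priori $A_i(t_k)$ might not be distributed as $\pibar(\cdot|s)$. The resolution, which makes the two mechanisms yield identical per-arm rates and hence a rate that is insensitive to other arms and to the real states, is that coupling by definition forces $A_i(t_k) = \syshat{A}_i(t_k)$, and $\syshat{A}_i(t_k)$ is a fresh $\pibar(\cdot|s)$ sample drawn independently of the tie-breaking procedure.
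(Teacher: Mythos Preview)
Your proposal is correct and follows essentially the same approach as the paper's proof: both establish part~(1) via the uniformization argument (the inter-epoch time is $\text{Exp}(2N\qmax)$ regardless of state), and both prove part~(2) by splitting into the coupled and uncoupled cases, computing the per-epoch transition probability $\qrate(\syshat{S}_i,\syshat{A}_i,s')/(2N\qmax)$ in each case, and then averaging over $\syshat{A}_i\sim\pibar(\cdot\mid\syshat{S}_i)$ to obtain a rate depending only on $\syshat{S}_i$. Your discussion of the ``main obstacle'' is apt; the paper handles it the same way, by observing that in the coupled case the relevant action is $\syshat{A}_i(t_k)$ (since coupling forces $A_i=\syshat{A}_i$), so the global tie-breaking never enters the computation.
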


Next, we can upper bound the conversion loss $V^\pibar_1 - V^{\policynameshortct(\pibar)}_N$ as 
\begin{align}
        V^\pibar_1 - V^{\policynameshortct(\pibar)}_N &= \frac{1}{N} \E{\sumN r(\syshat{S}_i(\infty), \syshat{A}_i(\infty)) - \sumN r(S_i(\infty), A_i(\infty))} \nonumber \\
        &\leq \frac{2\rmax}{N} \E{\sumN \indibrac{(\syshat{S}_i(\infty), \syshat{A}_i(\infty)) \neq (S_i(\infty), A_i(\infty))}} \nonumber \\
        &\leq \frac{2\rmax}{N} \E{\sumN \indibrac{\syshat{S}_i(\infty)\neq S_i(\infty)} + \sumN \indibrac{\syshat{A}_i(\infty)\neq A_i(\infty)}}. \label{eq:bound-by-disagreement-number-cont}
\end{align}
The bound is slightly different from the \eqref{eq:bound-by-disagreement-number} since we want to separately deal with the number of arms whose real and virtual states do not agree and those whose real and virtual actions do not agree, as shown in the lemma below. 

\begin{lemma}\label{lem:sa-indicator-bounds-cont}
    It holds that
    \begin{align}
        \E{\sumN \indibrac{\syshat{S}_i(\infty)\neq S_i(\infty)}}  &\leq 2\qmax \syncmax  \E{\abs{\sumN \syshat{A}_i(\infty)- \alpha N}}   \label{eq:s-disagree-bound-cont} \\
        \E{\sumN \indibrac{\syshat{A}_i(\infty)\neq A_i(\infty)}} &= \E{\abs{\sumN \syshat{A}_i(\infty)- \alpha N}} \label{eq:a-disagree-bound-cont}
    \end{align}
\end{lemma}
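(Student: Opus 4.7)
I would prove the two statements separately. The second, an equality, follows from a pointwise identity derived from the algorithm's rounding step. The first, an inequality, is the heart of the lemma and will be obtained by applying Little's Law to the process of state-disagreement arms, with the sojourn-time bound coming from Lemma~\ref{lem:follow-and-merge-cont}.

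\textbf{Proving \eqref{eq:a-disagree-bound-cont}.} Fix a decision epoch $t_k$. In Algorithm~\ref{alg:main-alg-cont}, the set $\mathcal{A}$ is chosen to maximize its overlap with $\{i : \syshat{A}_i(t_k) = 1\}$ subject to $|\mathcal{A}| = \alpha N$. A case analysis on the sign of $\sumN \syshat{A}_i(t_k) - \alpha N$ shows that
\[
\sumN \indibrac{A_i(t_k) \neq \syshat{A}_i(t_k)} = \abs{\sumN \syshat{A}_i(t_k) - \alpha N}
\]
holds pointwise. Taking expectations in steady state yields the claim.

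\textbf{Proving \eqref{eq:s-disagree-bound-cont} via Little's Law.} Let $B(t) \triangleq \sumN \indibrac{\syshat{S}_i(t) \neq S_i(t)}$ and view $B(t)$ as a queue length, where arrivals correspond to arms whose pair $(\syshat{S}_i, S_i)$ transitions from equal to unequal, and departures correspond to arms whose states resynchronize. By Little's Law in steady state, $\E{B(\infty)} = \lambda \bar{W}$ for arrival rate $\lambda$ and mean sojourn time $\bar{W}$. For $\lambda$: by the coupling built into the algorithm, an arm with $(\syshat{S}_i, \syshat{A}_i) = (S_i, A_i)$ cannot split into $\syshat{S}_i \neq S_i$; such a split can occur for arm $i$ only when $\syshat{S}_i = S_i$ and $\syshat{A}_i \neq A_i$, triggered by a transition of the real or virtual state at combined rate at most $2\qmax$. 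Thus
\[
\lambda \leq 2\qmax \, \E{\sumN \indibrac{\syshat{A}_i(\infty) \neq A_i(\infty)}} = 2\qmax \, \E{\abs{\sumN \syshat{A}_i(\infty) - \alpha N}},
\]
where the last step uses \eqref{eq:a-disagree-bound-cont}. For $\bar{W}$: once arm $i$ enters the bad-state set, the pair $(\syshat{S}_i, S_i)$ behaves essentially as an instance of the leader-and-follower system of Section~\ref{subsec:Assumpname}, so by Lemma~\ref{lem:follow-and-merge-cont} the expected time until resynchronization is at most $\syncmax$. Multiplying yields the claimed inequality.

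\textbf{Main obstacle.} The most delicate step is rigorously justifying $\bar{W} \leq \syncmax$. In the leader-and-follower system used to define $\syncmax$, the follower always copies the leader's action, whereas in the $N$-arm system the real action $A_i$ during a bad-state episode may differ from $\syshat{A}_i$ due to budget-enforced rounding. A coupling argument, leveraging the unichain property and the ability of continuous-time Markov chains to realize arbitrarily many transitions in any interval (as exploited in the proof of Lemma~\ref{lem:follow-and-merge-cont}), should show that the sojourn time in the $N$-arm system is dominated in expectation by the leader-follower synchronization time. This is the only non-bookkeeping step in the argument; the rest is an exercise in formalizing Little's Law for the continuous-time Markov chain established in Lemma~\ref{lem:cont-markovian-and-virtual-process-distr}.
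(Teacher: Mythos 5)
Your proof of \eqref{eq:a-disagree-bound-cont} matches the paper's (both reduce it to the pointwise identity $\sumN \indibrac{A_i(t)\neq\syshat{A}_i(t)}=\abs{\sumN\syshat{A}_i(t)-\alpha N}$ coming from the rounding step). Your plan for \eqref{eq:s-disagree-bound-cont}---Little's Law applied to the count of state-mismatched arms, arrival rate controlled by the action-mismatch count, sojourn time by Lemma~\ref{lem:follow-and-merge-cont}---is also the right skeleton. But the gap you flag as the ``main obstacle'' is a genuine gap, and the resolution you sketch does not go through. If an ``arrival'' is the moment an arm's states first split and the ``sojourn'' is the whole interval until resynchronization, then during that interval the arm repeatedly experiences epochs with $A_i(t)\neq\syshat{A}_i(t)$, during which its real state evolves with rates $\qrate(\cdot,A_i(t),\cdot)$ rather than $\qrate(\cdot,\syshat{A}_i(t),\cdot)$. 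There is no stochastic domination of this sojourn by the leader-and-follower synchronization time: the budget-forced actions can systematically steer the real state away from the virtual one, so $\syncmax$ simply does not bound the expected duration of such an episode. The unichain property and the ``many transitions in any interval'' trick only give finiteness of the leader-follower synchronization time; they say nothing about these perturbed dynamics.

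The paper closes the gap not by a coupling/domination argument but by refining the renewal structure. It attaches to each arm an auxiliary exponential timer that ticks at rate $\qrate(S_i(t),\syshat{A}_i(t))$ whenever $S_i(t)\neq\syshat{S}_i(t)$ and $A_i(t)\neq\syshat{A}_i(t)$ (and pauses otherwise), and declares a \emph{disagreement event} whenever the real state transitions or this timer ticks while $A_i(t)\neq\syshat{A}_i(t)$. A single bad episode is thereby chopped into several \emph{disagreement periods}; within each period, by construction, the arm's evolution coincides exactly with a leader-and-follower system, so each period's expected length is genuinely at most $\syncmax$. The total rate of disagreement events is at most $2\qmax$ per action-mismatched arm (real transition rate plus timer rate), which is where the factor $2\qmax\,\E{\abs{\sumN\syshat{A}_i(\infty)-\alpha N}}$ comes from. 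With this finer event definition your Little's Law computation goes through verbatim and the delicate step disappears; without it, the bound $\bar{W}\le\syncmax$ is unproven and, as stated, false in general.
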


Before showing the proof of Lemma~\ref{lem:sa-indicator-bounds-cont}, we will first use Lemma~\ref{lem:sa-indicator-bounds-cont} to prove Theorem~\ref{thm:main-alg-cont}.

\begin{proof}[Proof of Theorem~\ref{thm:main-alg-cont}]
    Combining \eqref{eq:bound-by-disagreement-number-cont} and Lemma~\ref{lem:sa-indicator-bounds-cont}, we have 
    \[
        V^\pibar_1 - V^{\policynameshortct(\pibar)}_N \leq \frac{2\rmax}{N} (1+2\qmax \syncmax) \E{\abs{\sumN \syshat{A}_i(\infty)- \alpha N}}.
    \]
    By Lemma~\ref{lem:cont-markovian-and-virtual-process-distr}, the virtual actions $\syshat{A}_i(\infty)$'s are i.i.d. binomial random variables such that $\mathbb{P}(\syshat{A}_i(\infty) = 1) = \sum_{s\in\sspa}y(s,a) = \alpha$, so the distribution of $\sumN \syshat{A}_i(\infty)$ is $\text{Binomial}(N, \alpha)$, whose mean and variance are $\alpha N$ and $N \alpha (1-\alpha) \leq N / 4$. 
    Therefore, 
    \begin{align}
        \E{\abs{\sumN \syshat{A}_i(\infty)- \alpha N}} &\leq \left(\E{\Big(\sumN \syshat{A}_i(\infty)  - \alpha N\Big)^2} \right)^{1/2}  \nonumber \\
        &= \Var{\sumN \syshat{A}_i(\infty)}^{1/2} \nonumber \\
        &\leq \frac{1}{2}\sqrt{N}. \nonumber 
    \end{align}
    Combining the above calculations, we get
    \[
        V^\pibar_1 - V^{\policynameshortct(\pibar)}_N \leq \frac{\rmax (1+2\qmax \syncmax)}{\sqrt{N}}.
    \]
\end{proof}

\subsection{Proof of Lemma~\ref{lem:cont-markovian-and-virtual-process-distr}}

\begin{proof}
    We first show that $\{(\veS(t), \syshat{\veS}(t))\}_{t\geq 0}$ is a continuous-time Markov chain. Observe that $\{(\veS(t), \syshat{\veS}(t))\}_{t\geq 0}$ is piecewise constant between decision epochs $\{t_k\}_{k=0,1,\dots}$, so it suffices to consider the time between the decision epochs $t_{k+1} - t_k$ and the states at the decision epochs $(\veS(t_k), \syshat{\veS}(t_k))$ for $k=0,1,2,\dots$
    
    We claim that at each decision epoch $t_k$, the time until the next decision epoch $t_{k+1} - t_k$ is exponentially distributed conditioned on $(\veS(t_k), \syshat{\veS}(t_k))$. 
    Observe from the pseudo-code in Algorithm~\ref{alg:main-alg-cont} that at the decision epoch $t_k$, conditioned on $(\veS(t_k), \veA(t_k), \syshat{\veS}(t_k))$, 
    the time until the next decision epoch $t_{k+1}$ is the minimum of two independent exponential random variables, corresponding to the exponential timer and the transition of real states. The rates of the two exponential random variables are $2N\qmax - g_{k}^{\text{real}}$ and $g_{k}^{\text{real}}$, where $g_{k}^{\text{real}} = \sumN G(S_i(t_k), A_i(t_k))$. Therefore, conditioned on $(\veS(t_k), \veA(t_k), \syshat{\veS}(t_k))$,  $t_{k+1} - t_k$ has an exponential distribution with rate $2N\qmax - g_{k}^{\text{real}} +g_{k}^{\text{real}}  = 2N\qmax $. 
    Because the rate $2N\qmax$ is a constant, if we take expectation over $\veA(t_k)$ and only conditioned on $(\veS(t_k), \syshat{\veS}(t_k))$, the time until the next decision epoch $t_{k+1} - t_k$ is still exponentially distributed with rate $2N\qmax$. 

    Also, it is obvious from the pseudo-code in Algorithm~\ref{alg:main-alg-cont} that conditioned on $(\veS(t_k), \syshat{\veS}(t_k))$, the distribution of $(\veS(t_{k+1}), \syshat{\veS}(t_{k+1}))$ is independent of $t_{k+1} - t_k$ and $(\veS(t), \syshat{\veS}(t))$ for $t < t_k$. Therefore, we conclude that $\{(\veS(t), \syshat{\veS}(t))\}_{t\geq 0}$ is a continuous-time Markov chain. 
    
    Next, we show that the $\{\syshat{S}_i(t)\}_{t\geq 0}$ for $i\in[N]$ are $N$ independent Markov chains, whose transition rate from state $s$ to $s'$ is $\sum_{a\in\aspa} G(s, a, s')\pibar(a|s)$ for $s'\neq s$. Because we have shown that $\{(\veS(t), \syshat{\veS}(t))\}_{t\geq 0}$ is a continuous-time Markov chain with a constant transition rate, it suffices to focus on the embedded chain $\{(\veS(t_k), \syshat{\veS}(t_k))\}_{k=0,1,\dots}$ and examine the probability of the transitions that change the virtual states $\syshat{\veS}(t_k)$. 
    Note that between two decision epochs $t_k$ and $t_{k+1}$, there is at most one $i$ such that $\syshat{S}_i(t)$ changes. 
    Let $\mathcal{I}_{\text{good}}(t_k) = \{i\in[N] \colon \syshat{S}_i(t_k) = S_i(t_k), \syshat{A}_i(t_k) = A_i(t_k)\}$. 
    For any arm $i\in \mathcal{I}_{\text{good}}(t_k)$, the transitions of its virtual and real states are coupled, which implies that the probability that $\syshat{S}_i(t_{k+1}) = s'$ is the same as the probability that $S_i(t_{k+1}) = s'$, for any $s'\neq \syshat{S}_i(t_k)$, conditioned on the states and actions at $t_k$. Formally, 
    \begin{align}
        &\mspace{20mu} \mathbb{P}\Big(\syshat{S}_i(t_{k+1}) = s' \Big| \veS(t_k), \veA(t_k), \syshat{\veS}(t_k), \syshat{\veA}(t_k) \Big) \indibrac{ i\in \mathcal{I}_{\text{good}}(t_k)} \nonumber \\
        &= \mathbb{P}\Big(S_i(t_{k+1}) = s'\Big| \veS(t_k), \veA(t_k), \syshat{\veS}(t_k), \syshat{\veA}(t_k) \Big) \indibrac{ i\in \mathcal{I}_{\text{good}}(t_k)} \nonumber \\
        &= \frac{G(S_i(t_k), A_i(t_k), s')}{2N\qmax} \indibrac{ i\in \mathcal{I}_{\text{good}}(t_k)} \nonumber \\
        &= \frac{G(\syshat{S}_i(t_k), \syshat{A}_i(t_k), s')}{2N\qmax} \indibrac{ i\in \mathcal{I}_{\text{good}}(t_k)}, \label{eq:cont-virtual-independence-intermidiate-1}
    \end{align}
    where the second equality uses the fact that the transition probability after uniformization is equal to the ratio between the transition rate and the uniformization rate; the last equality is by the definition of $\mathcal{I}_{\text{good}}(t_k)$. 
    For $i\notin \mathcal{I}_{\text{good}}(t_k)$ and $s'\neq \syshat{S}_i(t_k)$, we have $\syshat{S}_i(t_{k+1})=s'$ only when the exponential clock ticks and the pair $(i, s')$ is sampled, so 
    \begin{align}
        &\mspace{20mu} \mathbb{P}\Big(\syshat{S}_i(t_{k+1}) = s' \Big| \veS(t_k), \veA(t_k), \syshat{\veS}(t_k), \syshat{\veA}(t_k)  \Big) \indibrac{i\notin \mathcal{I}_{\text{good}}(t_k)} \nonumber \\
        &= \E{\frac{2N\qmax - g_{k}^\text{real}}{2N\qmax} \frac{G(\syshat{S}_i(t_k), \syshat{A}_i(t_k), s')}{2N\qmax - g_{k}^\text{real}} \middle|  \veS(t_k), \veA(t_k), \syshat{\veS}(t_k), \syshat{\veA}(t_k)  }  \indibrac{i\notin \mathcal{I}_{\text{good}}(t_k)}  \nonumber \\
        &= \frac{G(\syshat{S}_i(t_k), \syshat{A}_i(t_k), s')}{2N\qmax}  \indibrac{i\notin \mathcal{I}_{\text{good}}(t_k)}.  \label{eq:cont-virtual-independence-intermidiate-2}
    \end{align}    
    Summing up \eqref{eq:cont-virtual-independence-intermidiate-1}\eqref{eq:cont-virtual-independence-intermidiate-2}, we have that for any $i\in[N]$ and $s'\neq \syshat{S}_i(t_k)$, 
    \begin{equation}
        \mathbb{P}\Big(\syshat{S}_i(t_{k+1}) = s' \Big| \veS(t_k), \veA(t_k), \syshat{\veS}(t_k), \syshat{\veA}(t_k) \Big) =  \frac{G(\syshat{S}_i(t_k), \syshat{A}_i(t_k), s')}{2N\qmax}.
    \end{equation}
    Because $\syshat{A}_i(t_k)$ is independently sampled from the distribution $\pibar(\cdot | \syshat{S}_i(t_k))$, taking expectation over $(\veA(t_k), \syshat{\veA}(t_k))$ in the above equation, we get 
    \begin{equation}
        \mathbb{P}\Big(\syshat{S}_i(t_{k+1}) = s' \Big| \veS(t_k), \syshat{\veS}(t_k) \Big) = \frac{\sum_{a\in\aspa} G(\syshat{S}_i(t_k), a, s')\pibar\big(a|\syshat{S}_i(t_k)\big)}{2N\qmax},
    \end{equation}
    for any $i\in[N]$ and $s'\neq \syshat{S}_i(t_k)$. 
    Because the uniformization rate is $2N\qmax$, the rate for $\syshat{S}_i(t)$ to transition to $s'$ is equal to $\sum_{a\in\aspa} G(\syshat{S}_i(t), a, s')\pibar\big(a|\syshat{S}_i(t)\big)$ conditioned on $(\veS(t), \syshat{\veS}(t))$. 
    Observe that this transition rate only depends on $\syshat{S}_i(t)$, which implies that $\syshat{S}_i(t)$'s for $i\in[N]$ are $N$ i.i.d. Markov chains. 

    Finally, recall that we initialize the virtual states $\syshat{S}_i(0)$'s as $N$ i.i.d. samples from the stationary distribution of the single-armed system under the policy $\pibar$. 
    Since we have proved that $\{\syshat{S}_i(t)\}_{t\geq 0}$'s are $N$ i.i.d. Markov chains induced by applying $\pibar$ to the single-armed systems, for each $t\geq 0$, $\syshat{S}_i(t)$'s remain stationary and i.i.d. 
    Therefore, for each $t\geq 0$, $(\syshat{S}_i(t), \syshat{A}_i(t))$'s are i.i.d., and for each $i$, the distribution of $(\syshat{S}_i(t), \syshat{A}_i(t))$ is equal to the steady-state state-action distribution $(y(s,a))_{s\in\sspa, a\in\aspa}$. 
\end{proof}

\subsection{Proof of Lemma~\ref{lem:sa-indicator-bounds-cont}}
In this section, we prove the key intermediate result, Lemma~\ref{lem:sa-indicator-bounds-cont}. The second equation \eqref{eq:a-disagree-bound-cont} in Lemma~\ref{lem:sa-indicator-bounds-cont} is obvious from the definition of the policy. We can therefore focus on proving \eqref{eq:s-disagree-bound-cont}, i.e., bounding the long-run average number of arms whose real states are not equal to their virtual states. We call such arms \textit{bad arms}, and the rest of the arms \textit{good arms}.

We will use a similar approach as Section~\ref{app:proof-dt}: we invoke Little's Law to write the average number of bad arms as a product of the average \textit{disagreement rate} and the average length of \textit{disagreement periods}, based on a suitable definition of the \textit{disagreement events}. 

We define the disagreement event in a different way than in the discrete-time RB setting, because unlike in the discrete-time RB setting where $A_i(t) \neq \syshat{A}_i(t)$ can immediately cause $S_i(t) \neq \syshat{S}_i(t)$, in the continuous-time RB setting, actions that last for only a short time may not have an effect on the states. Therefore, we say a disagreement event for arm $i$ happens at time $t$ only when arm $i$ behaves differently than it would have behaved if $A_i(t) = \syshat{A}_i(t)$. 

Specifically, the disagreement event of a good arm can be defined as having a state transition that turns it into a bad arm, since a good arm is supposed to remain good if we always have $A_i(t) = \syshat{A}_i(t)$; for bad arms, how the arm ``would have behaved'' need to be defined with the help of an extra structure, namely, exponential timers, as described below.

\begin{definition}[Exponential timers for simulating the leader-and-follower system]\label{def:exponential-timers}
    For each $i\in [N]$, we run a timer for each arm $i$ that ticks every random amount of time. 
    When the timer of arm $i$ starts, it decides the time of its next tick based on the $i$-th arm $(S_i(t), A_i(t), \syshat{S}_i(t), \syshat{A}_i(t))$: 
    if $S_i(t) \neq \syshat{S}_i(t)$ \textit{and} $A_i(t) \neq \syshat{A}_i(t)$, the timer ticks after a time that is exponentially distributed with rate $\qrate(S_i(t), \syshat{A}_i(t))$; 
    otherwise, the timer pauses. 
    The timer restarts when it ticks or when there is any event happening in the system. 
\end{definition}

The purpose of the exponential timer is to simulate the transition time of the leader-and-follower system, which characterizes how the arm would have behaved if $A_i(t)=\syshat{A}_i(t)$. Specifically, when arm $i$ has $S_i(t) \neq \syshat{S}_i(t)$ (so it is a bad arm) and $A_i(t) \neq \syshat{A}_i(t)$, we construct an imaginary leader-and-follower system whose state-action pairs are $(S_i(t), \syshat{A}_i(t), \syshat{S}_i(t), \syshat{A}_i(t))$. We let the transition times of the virtual state $\syshat{S}_i(t)$ in the two systems be identical, and the transition times of the real state $S_i(t)$ in the two systems be independent. Then the transition time of $S_i(t)$ in the leader-and-follower system has an exponential distribution with rate $\qrate(S_i(t), \syshat{A}_i(t))$, which is equal to the time that the exponential timer ticks. 

Therefore, there are two ways that the transition of arm $i$ with $S_i(t) \neq \syshat{S}_i(t)$ and $A_i(t) \neq \syshat{A}_i(t)$ can deviate from the leader-and-follower system described above: either arm $i$ itself has a transition in the real state $S_i(t)$, or when the exponential timer ticks. This statement is actually also true if $S_i(t) = \syshat{S}_i(t)$ and $A_i(t) \neq \syshat{A}_i(t)$ because in that case, the exponential timer pauses. 

We can thus formally define the \textit{disagreement event of arm $i$} as below. 

\begin{definition}[Disagreement event for continuous-time RBs]
    For each $i\in [N]$, a disagreement event of arm $i$ happens when its real state $S_i(t)$ transitions or its exponential timer ticks, while $A_i(t) \neq \syshat{A}_i(t)$.
\end{definition}

After defining the disagreement events, we can define the \textit{disagreement period of arm $i$} as the period of time when $S_i(t) \neq \syshat{S}_i(t)$, separated by disagreement events, formally stated below.

\begin{definition}[Disagreement period for continuous-time RBs]
    Given a sample path of the arm $i$'s real states and virtual states $(S_i(t), \syshat{S}_i(t))$, and an exponential timer, we define the disagreement period of the arm $i$ as a time interval $[t_\tbegin, t_\tend)$ such that 
    \begin{align*}
        &\text{ There is a disagreement event at } t_\tbegin;   \\
        &\text{ There is no disagreement event during } (t_\tbegin, t_\tend),   \text{ and } S_i(t) \neq S_i(t) \text{ for } t\in(t_\tbegin, t_\tend);  \\
        &\text{ There is a disagreement event at } t_\tend \text{ or } S_i(t_\tend) = S_i(t_\tend).
    \end{align*}
\end{definition}

We let $\drate(\ves, \syshat{\ves})$ be the \textit{instantaneous rate of disagreement events} (instantaneous disagreement rate) when the system has real and virtual states $(\veS(t), \syshat{\veS}(t)) = (\ves, \syshat{\ves})$. Let $\dislen$ be the long-run average length of the disagreement periods.

Observe that the number of bad arms (the arms such that $S_i(t) \neq \syshat{S}_i(t)$) is the number of arms in disagreement periods, so we can apply Little's Law to the number of bad arms. 

\begin{lemma}[Little's Law for disagreement periods, continuous-time version]\label{lem:littles-law-cont}
    It holds that
    \begin{equation}
          \E{\sumN \indibrac{\syshat{S}_i(\infty) \neq S_i(\infty)}} = \E{\drate(\veS(\infty), \syshat{\veS}(\infty))} \cdot \dislen,
    \end{equation}
    where $\drate(\ves, \syshat{\ves})$ denotes the instantaneous disagreement rate when the virtual states are $\syshat{\veS}(t) = \syshat{\ves}$; $\dislen$ denotes the long-run average length of the disagreement periods.
\end{lemma}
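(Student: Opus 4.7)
My plan is to invoke Little's Law directly, treating each disagreement period of each arm as a ``customer'' that enters the system at $t_\tbegin$, has sojourn time $t_\tend - t_\tbegin$, and then departs at $t_\tend$. The argument proceeds in three steps.

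First, I would establish stationarity. By Lemma~\ref{lem:cont-markovian-and-virtual-process-distr}, $\{(\veS(t),\syshat{\veS}(t))\}_{t\ge 0}$ is a continuous-time Markov chain on the finite state space $\sspa^N\times \sspa^N$. Combining the single-armed unichain condition with the independence of the virtual chains shown in the same lemma, I would argue that this joint chain admits a unique stationary distribution; in steady state, all long-run time-averages coincide with the corresponding stationary expectations appearing on both sides of the claim.

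Second, I would set up the Little's Law correspondence. By the definition of a disagreement period, arm $i$ lies in a disagreement period at time $t$ if and only if $\syshat{S}_i(t)\neq S_i(t)$, so the total number of customers present at time $t$ equals $\sumN \indibrac{\syshat{S}_i(t)\neq S_i(t)}$. The long-run time-average sojourn time of these customers is $\dislen$ by definition. The arrival rate of customers is the long-run rate at which new disagreement periods begin, which by Definition~\ref{def:exponential-timers} and the definition of $\drate$ equals the stationary expectation $\E{\drate(\veS(\infty),\syshat{\veS}(\infty))}$.

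Third, I would verify the standard regularity conditions for Little's Law $L=\lambda W$. The arrival rate is finite since $\drate$ is bounded above by the uniformization rate $2N\qmax$. The mean sojourn time $\dislen$ is finite by Lemma~\ref{lem:follow-and-merge-cont}: each disagreement period terminates either at the next disagreement event on the same arm or upon synchronization $S_i=\syshat{S}_i$, so its length is stochastically dominated by a synchronization time with expectation at most $\syncmax$. Stability is immediate from the existence of the stationary distribution. Little's Law then yields the claimed identity. The main obstacle I foresee is making the correspondence between disagreement events and customer arrivals exact: one must use the exponential-timer construction carefully so that every disagreement event either ends an ongoing period, starts a new one, or both, and ensure that the rate at which new periods are initiated matches $\E{\drate(\veS(\infty),\syshat{\veS}(\infty))}$ under the chosen coupling.
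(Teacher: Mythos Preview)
Your proposal is correct and takes essentially the same approach as the paper: the paper does not give a separate proof of this lemma at all, treating it (as in the discrete-time analogue) as a direct consequence of Little's Law, with the correspondence ``customers $=$ arms in disagreement periods $=$ bad arms'' stated explicitly just before the lemma. Your write-up supplies considerably more detail---stationarity of the joint chain, the finiteness checks via $2N\qmax$ and $\syncmax$, and the caveat about matching the rate of disagreement events to the rate of new period initiations---than the paper itself provides.
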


\begin{lemma}[Average disagreement rates]\label{lem:disagree-rate-bd-cont}
    The instantaneous disagreement rate is equal to
    \begin{equation}\label{eq:inst-dis-rate-expression-cont}
        \drate(\ves, \syshat{\ves}) \leq 2\qmax \E{\abs{\sumN \syshat{A}_i(t)  - \alpha N} \given \syshat{\veS}(t) = \syshat{\ves}}.
    \end{equation}
\end{lemma}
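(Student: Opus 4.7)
}
The plan is to decompose the total rate of disagreement events as a sum of per-arm contributions, show that each contribution is nonzero only when the real and virtual actions disagree, bound each such contribution by $2\qmax$, and then identify the number of mismatched arms with $\abs{\sumN \syshat{A}_i(t)-\alpha N}$ using the matching rule of \policynameshortct.

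First, I would fix $(\veS(t),\syshat{\veS}(t))=(\ves,\syshat{\ves})$ and write the instantaneous disagreement rate $\drate(\ves,\syshat{\ves})$ as a conditional expectation (over the virtual actions $\syshat{\veA}(t)$ and the induced real actions $\veA(t)$) of $\sumN d_i$, where $d_i$ is the per-arm rate of disagreement events. By the definition of disagreement events, $d_i$ has exactly two contributions: the rate of real-state transitions, which equals $\qrate(S_i(t),A_i(t))$, counted only when $A_i(t)\neq \syshat{A}_i(t)$; and the rate of exponential-timer ticks (Definition~\ref{def:exponential-timers}), which equals $\qrate(S_i(t),\syshat{A}_i(t))$ when $S_i(t)\neq\syshat{S}_i(t)$ and $A_i(t)\neq\syshat{A}_i(t)$, and is zero otherwise. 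Thus
\[
d_i \;=\; \qrate(S_i(t),A_i(t))\indibrac{A_i(t)\neq\syshat{A}_i(t)} + \qrate(S_i(t),\syshat{A}_i(t))\indibrac{A_i(t)\neq\syshat{A}_i(t),\, S_i(t)\neq\syshat{S}_i(t)}.
\]
Using $\qrate(s,a)\leq \qmax$ for all $(s,a)$, each of the two terms is bounded by $\qmax\indibrac{A_i(t)\neq\syshat{A}_i(t)}$, so $d_i \leq 2\qmax\indibrac{A_i(t)\neq\syshat{A}_i(t)}$.

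Next, I would sum over $i$ and use the matching rule of \policynameshortct: by construction, the real actions are chosen to agree with the virtual actions on as many arms as possible subject to $\sumN A_i(t)=\alpha N$, so the number of mismatched arms is precisely $\sumN\indibrac{A_i(t)\neq \syshat{A}_i(t)} = \abs{\sumN \syshat{A}_i(t)-\alpha N}$ (this is the same observation used in the proof of Lemma~\ref{lem:disagree-rate-bd} in the discrete-time case). Taking conditional expectation given $\syshat{\veS}(t)=\syshat{\ves}$ then yields
\[
\drate(\ves,\syshat{\ves}) \;\leq\; 2\qmax\, \E{\abs{\sumN \syshat{A}_i(t)-\alpha N} \given \syshat{\veS}(t)=\syshat{\ves}},
\]
noting that the right-hand side depends only on $\syshat{\ves}$ because $\syshat{\veA}(t)$ is sampled from $\pibar(\cdot\,|\,\syshat{S}_i(t))$ independently across arms.

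The only subtle point, and the main thing to be careful about, is the bookkeeping between the two sources of disagreement events and the conditioning: the real-state-transition source has rate depending on $A_i(t)$ (which itself depends on the joint virtual actions through the matching rule), while the timer source has rate depending on $\syshat{A}_i(t)$ and requires $S_i(t)\neq\syshat{S}_i(t)$. Replacing both by $\qmax$ before taking expectations cleanly decouples these dependencies and lets the final conditional expectation depend only on the virtual state vector, matching the form of the claimed bound.
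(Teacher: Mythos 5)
Your proposal is correct and follows essentially the same argument as the paper: decompose the disagreement rate into per-arm contributions from real-state transitions and timer ticks, bound each mismatched arm's rate by $2\qmax$, and count the mismatched arms as $\abs{\sumN \syshat{A}_i(t)-\alpha N}$ via the matching rule before taking the conditional expectation. Your explicit indicator-function bookkeeping is a slightly more careful writeup of the same steps the paper states in words.
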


\begin{proof}
    For each arm $i$ such that $A_i(t) \neq  \syshat{A}_i(t)$, a disagreement event happens if its real state transitions, which happens at the rate $\qrate(S_i(t), A_i(t))$, or if its exponential timer ticks, which happens at the rate $0$ (if it is a good arm) or $\qrate(S_i(t), \syshat{A}_i(t))$ (if it is a bad arm). Therefore, the rate that disagreement events happen at arm $i$ is no more than 
    \[
        \qrate(S_i(t), A_i(t)) + \qrate(S_i(t), \syshat{A}_i(t)) \leq 2\qmax.
    \]    
    By the definition of the policy, there are in expectation $\E{\abs{\sumN \syshat{A}_i(t)  - \alpha N} \given \syshat{\veS}(t) = \syshat{\ves}}$ arms with $A_i(t) \neq  \syshat{A}_i(t)$, so the instantaneous disagreement rate of the system is as given in \eqref{eq:inst-dis-rate-expression-cont}.
\end{proof}

\begin{lemma}[Average length of disagreement periods]\label{lem:disagree-period-length-cont}
    It holds that
    \begin{equation}
        \dislen \leq \syncmax.
    \end{equation}
\end{lemma}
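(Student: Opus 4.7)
The plan is to parallel the proof of Lemma~\ref{lem:disagree-period-length} from the discrete-time setting. For each disagreement period of arm $i$ starting at $t_\tbegin$ with $(S_i(t_\tbegin), \syshat{S}_i(t_\tbegin)) = (s, \syshat{s})$, I will construct a coupling with an instance of the continuous-time leader-and-follower system initialized at $(s, \syshat{s})$, such that the length $t_\tend - t_\tbegin$ of the disagreement period is almost surely no larger than the synchronization time $\synctime(s, \syshat{s})$ of the coupled system. Combined with Lemma~\ref{lem:follow-and-merge-cont} and a renewal argument, this will yield $\dislen \leq \syncmax$.

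The coupling identifies the leader's trajectory with the virtual process $\syshat{S}_i(t_\tbegin + \cdot)$, and defines the follower's trajectory $Y(\cdot)$ starting at $Y(0) = s$ with its transitions supplied from a hybrid source: when $A_i(t) = \syshat{A}_i(t)$, the next transition of $Y$ is coupled with the next transition of the real state $S_i$; when $A_i(t) \neq \syshat{A}_i(t)$, it is coupled with the next tick of the exponential timer of Definition~\ref{def:exponential-timers}. Since the follower in the leader-and-follower system transitions at rate $G(Y(t), \syshat{A}_i(t))$, and since the real state's rate is $G(S_i(t), \syshat{A}_i(t))$ when actions agree while the timer's rate is $G(S_i(t), \syshat{A}_i(t))$ when they disagree, this hybrid construction reproduces the correct follower dynamics as long as $Y = S_i$.

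The key step is to verify that indeed $Y(t - t_\tbegin) = S_i(t)$ throughout $[t_\tbegin, t_\tend)$. By definition of a disagreement period no disagreement event occurs on the open interval $(t_\tbegin, t_\tend)$; hence when $A_i \neq \syshat{A}_i$ neither the real state nor the timer transitions (both would be disagreement events), so $Y$ and $S_i$ both stay put, and when $A_i = \syshat{A}_i$ they transition together by construction. At the endpoint $t_\tend$, either (i) $S_i(t_\tend) = \syshat{S}_i(t_\tend)$, in which case $Y = S_i$ reaches the leader at time $t_\tend - t_\tbegin$ and so $\synctime(s, \syshat{s}) = t_\tend - t_\tbegin$; or (ii) a disagreement event occurs while $S_i \neq \syshat{S}_i$, in which case the coupled follower has not yet reached the leader and $\synctime(s, \syshat{s}) > t_\tend - t_\tbegin$. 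Either way $t_\tend - t_\tbegin \leq \synctime(s, \syshat{s})$ almost surely, and invoking Lemma~\ref{lem:follow-and-merge-cont} gives the per-period expected bound $\E{t_\tend - t_\tbegin} \leq \syncmax$.

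To convert the per-period bound into the long-run average $\dislen \leq \syncmax$, I would appeal to positive recurrence of the Markov chain $\{(\veS(t), \syshat{\veS}(t))\}$ from Lemma~\ref{lem:cont-markovian-and-virtual-process-distr} together with a standard renewal-reward argument applied to the successive disagreement periods of a single arm. The main technical hurdle I anticipate lies in the coupling step itself: one must justify that feeding the follower's transitions alternately from real-state transitions (during action-agreement intervals) and from exponential-timer ticks (during action-disagreement intervals) yields, in joint distribution with $\syshat{S}_i$, a bona fide continuous-time leader-and-follower process extended beyond $t_\tend$, so that the bound $t_\tend - t_\tbegin \leq \synctime(s, \syshat{s})$ truly refers to the synchronization time of Lemma~\ref{lem:follow-and-merge-cont} and not to some coupling-dependent artifact.
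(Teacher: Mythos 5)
Your proposal is correct and takes essentially the same route as the paper: the paper likewise couples each disagreement period with an (imaginary) leader-and-follower system via the exponential timers of Definition~\ref{def:exponential-timers} (real-state transitions supplying the follower's moves when actions agree, timer ticks when they disagree), observes that the absence of disagreement events on $(t_\tbegin,t_\tend)$ makes the arm's evolution identical to that system, and concludes $t_\tend - t_\tbegin \le \synctime(s,\syshat{s})$ so that $\dislen \le \syncmax$. The ``technical hurdle'' you flag is precisely what the paper's construction of the timers (and the paragraph following Definition~\ref{def:exponential-timers}) is set up to resolve, so your more explicit coupling verification matches the paper's intended argument.
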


\begin{proof}
    To bound $\dislen$, it suffices to bound the expected length of a disagreement period with arbitrary initial states. Without loss of generality, consider a disagreement period on arm $i$ that starts at time $t_\tbegin=0$, with initial states $(S_i(0), \syshat{S}_i(0)) = (s, \syshat{s})$. During the disagreement period, there is no disagreement event, so as argued in the paragraph after Definition~\ref{def:exponential-timers}, the transitions of the $i$-th arm is identical to a leader-and-follower system. Therefore, we either have $S_i(t) = \syshat{S}_i(t)$ after $\synctime(s, \syshat{s})$ amount of time, or have a disagreement event before that. In either case, $t_\tend \leq \synctime(s, \syshat{s})$. 
    Therefore, the expected length of the disagreement period satisfies
    \[  
        \E{t_\tend - t_\tbegin} = \E{t_\tend} \leq \E{\synctime(s, \syshat{s})} \leq \syncmax.
    \]
    This holds for arbitrary initial states $(s, \syshat{s})$, so $\dislen \leq \syncmax$.
\end{proof}

\begin{proof}[Proof of Lemma~\ref{lem:sa-indicator-bounds-cont}]
    Combining Lemma~\ref{lem:littles-law-cont}, \ref{lem:disagree-rate-bd-cont}, and \ref{lem:disagree-period-length-cont}, we have
    \begin{align*}
        \E{\sumN \indibrac{\syshat{S}_i(\infty) \neq S_i(\infty)}}
        &= \E{\drate(\veS(\infty), \syshat{\veS}(\infty))} \cdot \dislen \\
        &\leq 2\qmax \syncmax \E{\abs{\sumN \syshat{A}_i(\infty )  - \alpha N}}.
    \end{align*}
    This proves \eqref{eq:s-disagree-bound-cont}. 
    
    Observe that by the definition of our algorithm, at any time $t$, 
    \[
        \sumN \indibrac{\syshat{A}_i(t)\neq A_i(t)} = \abs{\sumN \syshat{A}_i(t)- \alpha N} \quad a.s.
    \]
    Taking the steady-state expectation, we get \eqref{eq:a-disagree-bound-cont}.
\end{proof}

\section{Experiment details and additional experiments}\label{app:experiment-details}

In this section, we include the details of the experiments in the main body, as well as some additional experiments. In Appendix~\ref{app:fig1-exp-details} and \ref{app:fig2-exp-details}, we describe the details of the experiments of Figure~\ref{fig:example2} and \ref{fig:example4}. We conduct the simulations of these two experiments with different initial points and present the results in Appendix~\ref{app:varying-initial-points}. To give an intuitive understanding of the difference between these policies, we also display more visualization of the sample paths under different policies in Appendix~\ref{app:visualize-policies}. 
\footnote{Our simulation code can be found in the link \url{https://github.com/YigeHong/rb-break-ugap-ftva}.}

\subsection{Experiment details of Figure~\ref{fig:example2}}\label{app:fig1-exp-details}
\paragraph{Restless bandits setting}
In Figure~\ref{fig:example2}, we consider the discrete-time $N$-arm restless bandits represented by the tuple $(N, \sspa^N, \aspa^N, P, r, \alpha N)$. We vary the number of arms $N$, and keep the rest of the parameters fixed. The state space of each arm is $\sspa = \{1,2,3\}$. The action space is $\aspa=\{0,1\}$. The transition kernel $P$ is given by 
\begin{align*}
P( \cdot ,0,\cdot ) \ &=\ \begin{bmatrix}
0.02232142 & 0.10229283 & 0.87538575\\
0.03426605 & 0.17175704 & 0.79397691\\
0.52324756 & 0.45523298 & 0.02151947
\end{bmatrix}, \\
P( \cdot ,1,\cdot ) \ &=\ \begin{bmatrix}
0.14874601 & 0.30435809 & 0.54689589\\
0.56845754 & 0.41117331 & 0.02036915\\
0.25265570 & 0.27310439 & 0.4742399
\end{bmatrix},
\end{align*}
where the number in $s$-th row and $s'$-th column in each matrix represents $P(s, 0, s')$ or $P(s, 1, s')$, for $s,s'\in \{1,2,3\}$. 
The reward function $r$ is given by
\begin{align*}
    r( \cdot , 0) \ &=
    \ \begin{bmatrix}
        0 & 0 & 0
        \end{bmatrix}, \\
    r( \cdot , 1) \ &=
    \ \begin{bmatrix}
        0.37401552 & 0.11740814 & 0.07866135
        \end{bmatrix},
\end{align*}
where the $s$-th entry of each vector represents $r(s, 0)$ or $r(s,1)$, for $s \in \{1,2,3\}$. The budget parameter $\alpha = 0.4$, so $0.4 N$ arms are pulled in each time slot. This setting is taken from the Appendix~E of \cite{GasGauYan_20_whittles} as a counterexample to the UGAP assumption. We note that this RB problem obviously satisfies \Assumpname for any policy: observe that $P(s,a,s')>0$ for all $s,s'\in\sspa, a\in\aspa$, so two arms with any initial states have a positive probability of synchronizing in the next time slot, which implies finite synchronization time by Proposition~\ref{lem:pos-prob-synchronize-suffices}. 

\paragraph{Simulation setting}
We plot the long-run average reward against the number of arms for different policies. The number of arms $N$ varies from $100, 200, \dots, 1000$.  Three policies are considered: our policy \policynameshort($\pibar^*$), the Whittle's index policy \cite{Whi_88_rb}, and an LP-Priority policy \cite{Ver_16_verloop}. For each data point, we obtain the long-run average reward and its confidence interval by simulating $50$ independent trajectories, each with a length of $1000$ time slots. \finalversion{The initial states of all arms are simply chosen to be $1$, because simulation results do not quite depend on the initial states (see Appendix~\ref{app:varying-initial-points}).}

\paragraph{Detail of policies}

We implement Whittle index policy and LP-Priority in the standard way. The resulting priorities of both policies turn out to be $1 > 2 > 3$ (state $1$ has the highest priority). This is not a coincidence given that the solution of the single-armed problem \eqref{eq:lp-single} is:
\[
    y^*( \cdot ,\cdot ) \ =\ \begin{bmatrix}
        0 & 0.29943\\
        0.23768 & 0.10057\\
        0.36232 & 0
        \end{bmatrix},
\]
where the $s$-th row represents $y^*(s,0)$ and $y^*(s, 1)$, for $s\in \{1,2,3\}$. This solution implies that a reasonable policy should almost always pull arms in state $1$, pull arms in state $2$ for a certain fraction of time, and almost never pull arms in state $3$. Therefore the only reasonable priority in this RB setting is $1 > 2 > 3$. 

Our policy \policynameshort($\pibar^*$) is implemented according to the pseudocode in Section~\ref{sec:DTRB-policy}. The non-trivial detail here is the tie-breaking rule for selecting the set of arms to activate based on the virtual actions. Our tie-breaking rule is to select $A_i(t)$'s such that the number of good arms, i.e., the arms such that $S_i(t) = \syshat{S}_i(t)$ and $A_i(t) = \syshat{A}_i(t)$, is maximized. 
We have experimented with alternative tie-breaking rules, whose performance matches our theoretical results, though sometimes their asymptotic optimality requires larger values of $N$ to be observed. 
The relationship between various tie-breaking rules and their impact on finite-$N$ performances remains unclear in the present analysis, leaving it as a topic for future investigation.

\subsection{Experiment details of Figure~\ref{fig:example4}}\label{app:fig2-exp-details}
\paragraph{Restless bandits setting}
In Figure~\ref{fig:example4}, we consider the discrete-time $N$-arm bandits constructed as below. Suppose the state space for each arm is $\sspa=\{0,1,\ldots,7\}$. Each state has a \textit{preferred action}, which is action $1$ for states $0, 1, 2, 3$, and action $0$ otherwise. For an arm in state~$s$, applying the preferred action moves the arm to state $(s+1) \bmod 8$ with probability $p_{s,\rightsub}$, and applying the other action moves the arm to state $(s-1)^+$ with probability $p_{s,\leftsub}$.\footnote{Here the subscript $\leftsub$ means ``left'', and $\rightsub$ means ``right''. We are imagining the arms being lined up in a row from state $0$ to state $7$, and the preferred action moves an arm to the right.} The probabilities $p_{\cdot, \rightsub}$ and $p_{\cdot, \leftsub}$ are given by
\begin{align*}
    p_{\cdot , \rightsub} &= \begin{bmatrix}
        0.1 & 0.1 & 0.1 & 0.1 & 0.1 & 0.1 & 0.1 & 0.1
        \end{bmatrix}, \\
    p_{\cdot , \leftsub} &= \begin{bmatrix}
            1.0 & 1.0 & 0.48 & 0.47 & 0.46 & 0.45 & 0.44 & 0.43
        \end{bmatrix}.
\end{align*}
When the arm transitions from $7$ to $0$, one unit of reward is generated. Equivalently, if we consider the expected reward of applying an action at a certain state, we can define the reward function as $r(7, 0) = p_{7, \rightsub}$, and $r(s,a)=0$ for all other $s\in\sspa,a\in\aspa$.  
The parameter $\alpha= 1/2$, so $N/2$ arms are activated in each time slot.

\paragraph{Simulation setting}
We plot the long-run average reward against the number of arms for different policies. The number of arms $N$ varies from $100, 200, \dots, 1000$.  Three policies are considered: our policy \policynameshort($\pibar^*$), a random tie-breaking policy, and a particular LP-Priority policy that prioritizes arms with larger Lagrangian optimal indices (see the definition of Lagrangian optimal indices in \cite{HuFra_17_rb_asymptotic, BroSmi_19_rb, GasGauYan_22_exponential}). 
This particular LP-Priority policy is also referred to as the LP-Index policy in the literature.
But for simplicity, we just refer to it as LP-Priority in this paper.
For each data point, we obtain the long-run average reward and its confidence interval by simulating $50$ independent trajectories, each with a length of $1000$ time slots. The initial states for the simulations are fixed: $N/3$ arms are in state $1$ and $2N/3$ arms are in state $2$. \finalversion{Note that we fix this initial point in the simulation of Figure~\ref{fig:example4} in order to demonstrate that both the random tie-breaking and LP-Priority can get nearly zero rewards in this example. For other choices of fixed points, there is also a strong separation between the performance of \policynameshort($\pibar^*$) and random tie-breaking or LP-Priority, which we will show in the next section.}

\paragraph{Details of policies}
The optimal solution of \eqref{eq:lp-single} is $y^*(s,1) = 
1/ 8$ for $s=0,1,2,3$, $y^*(s,0)=1 / 8$ for $s=4,5,6,7$, and $y^*(s,a)=0$ for other $s\in\sspa$ and $a\in\aspa$. 
Note that the same $y^*$ remains optimal even if we remove the budget constraint. The optimal LP solution suggests that a reasonable policy should almost always pull arms in states $0,1,2,3$, and almost never pull arms in states $4,5,6,7$.

The random tie-breaking policy that we consider prioritizes arms whose states are in $\{0,1,2,3\}$ over arms whose states are in $\{4,5,6,7\}$, and it breaks ties uniformly at random when there are more than $N/2$ arms in either of the two sets. 

The LP-Priority policy we consider prioritizes arms with larger Lagrangian optimal indices \cite{BroSmi_19_rb, GasGauYan_22_exponential}. Specifically, it involves solving the Lagrangian relaxation of the original LP, which replaces the budget constraint with a penalty term determined by the optimal Lagrange multiplier. In our setting, because the optimal solution $y^*$ remains optimal even without the budget constraint, we can simply remove the budget constraint to get the Lagrangian relaxation. \footnote{A nuance is that the optimal Lagrange multiplier for the budget constraint is not unique in this setting, so there can be different Lagrange relaxations. We focus on the simplest one.} 
The resulting Lagrangian optimal indices are given by:
\[ 
    \begin{bmatrix}
        0.0125  & 0.1375 &  0.0725 &  0.07125 & -0.07    -0.06875  & -0.0675 &  -0.06625
    \end{bmatrix},
\]
and the priority is $1 > 2 > 3 > 0 > 7 > 6 > 5 > 4$.

The implementation of our policy \policynameshort($\pibar^*$) is the same as in the last experiment. 

\paragraph{Further discussions of the policies}
As shown in Figure~\ref{fig:example4}, the random tie-breaking policy and the LP-Priority policy get nearly zero rewards. We have discussed why this happens for the random tie-breaking policy in Section~\ref{sec:al-discussion}. For the LP-Priority policy based on the Lagrangian optimal index, the reason why it does not work is less obvious. Some sample paths suggest that under this policy, the arms will concentrate on $\{3,4,5\}$ most of the time and thus cannot get a reward. Here is a possible explanation for why arms cannot easily escape from $\{3,4,5\}$: when some of the arms transition to state $6$, by the priority $6>5>4$, those arms in state $6$ are likely to be activated. Because states $6$ do not prefer action $1$, those arms will transition back to state $5$ and thus get trapped. 

In addition to the numerical results, we also note that our policy \policynameshort($\pibar^*$) is provably asymptotically optimal in this RB problem even though its transition kernel is not unichain. This stems from the fact that the optimal single-armed policy $\pibar^*$ has a single recurrent class $\sspa$, and satisfies \Assumpname. 
See Appendix~\ref{sec:relax-unichain} for the discussion on the conditions under which \Assumpname and Theorem~\ref{thm:main-alg} hold when unichain is not assumed. 

\subsection{Varying the initial points of the Figure~\ref{fig:example2} and \ref{fig:example4} examples}\label{app:varying-initial-points}
\begin{figure}
    ~~~
    \begin{subfigure}{0.46\linewidth}
        \centering
        \includegraphics[width=6.5cm]{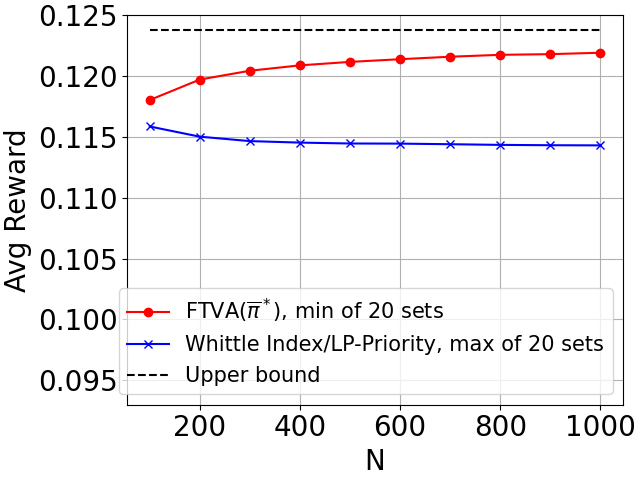}
        \caption{The experiment of Figure~\ref{fig:example2}}
        \label{fig:example2-multiple-inits}
    \end{subfigure}%
    ~~~~~
    \begin{subfigure}{0.46\linewidth}
        \centering
        \includegraphics[width=6.5cm]{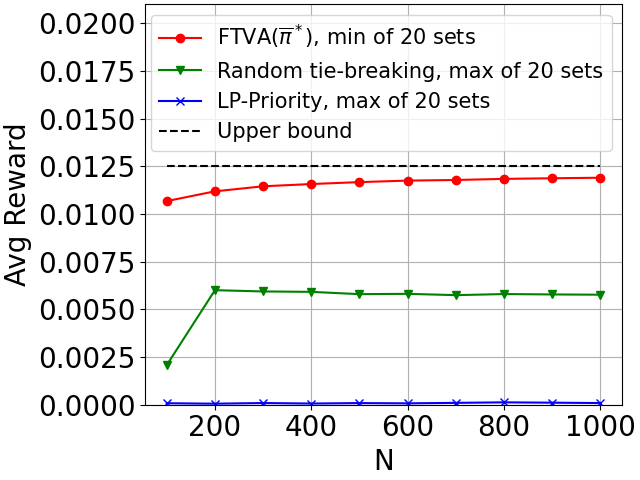}
        \caption{The experiment of Figure~\ref{fig:example4}}
        \label{fig:example4-multiple-inits}
    \end{subfigure}
    \caption{We re-run the two experiments of Figure~\ref{fig:example2} and \ref{fig:example4} with $20$ sets of simulation with randomly generated initial points. For \policynameshort, we calculate the \emph{minimum} average reward over the $20$ sets; for the other policies, we calculate the \emph{maximum} average reward over the $20$ sets.}
\end{figure}

\finalversion{
In Figure~\ref{fig:example2} and Figure~\ref{fig:example4}, we have fixed one initial point for each simulation, for the ease of presentation. 
In this section, we rerun the simulations for these two settings with more initial points. 
For the simulations of each setting, we generate initial states in the following way.
\begin{itemize}
    \item We first choose a probability distribution on the state space $\mathbb{S}$ from all the possible probability distributions on $\mathbb{S}$ uniformly at random. Let $\pi(s)$ denote the probability of state s under the chosen distribution. This distribution is chosen independently across the $20$ sets of simulations.
    \item For each $N$-armed problem, we set the initial state such that $N\cdot\pi(s)$ arms are in state s for each s, with proper rounding.
    \item For each $N$-armed problem, we run all the policies with this initial state.
\end{itemize}
}

\finalversion{
We illustrate the results of the $20$ sets of simulations in the Figure~\ref{fig:example2-multiple-inits} and \ref{fig:example4-multiple-inits} in the following way: For $\policynameshort(\pibar^*)$, for each $N$, we take the minimum average reward over the $20$ sets of simulations and plot them in the figure; For other policies, for each $N$, we take the maximum average reward over the $20$ sets of simulations. We can see that the min curve for $\policynameshort(\pibar^*)$ still approaches the optimal value as $N$ increases, whereas the max curves for other policies are strictly separated from the optimal value.
}

\subsection{Visualization of the policies}\label{app:visualize-policies}

    \begin{figure}
            \begin{subfigure}{\linewidth}
                \centering
                ~~~~~~\includegraphics[width=10.5cm]{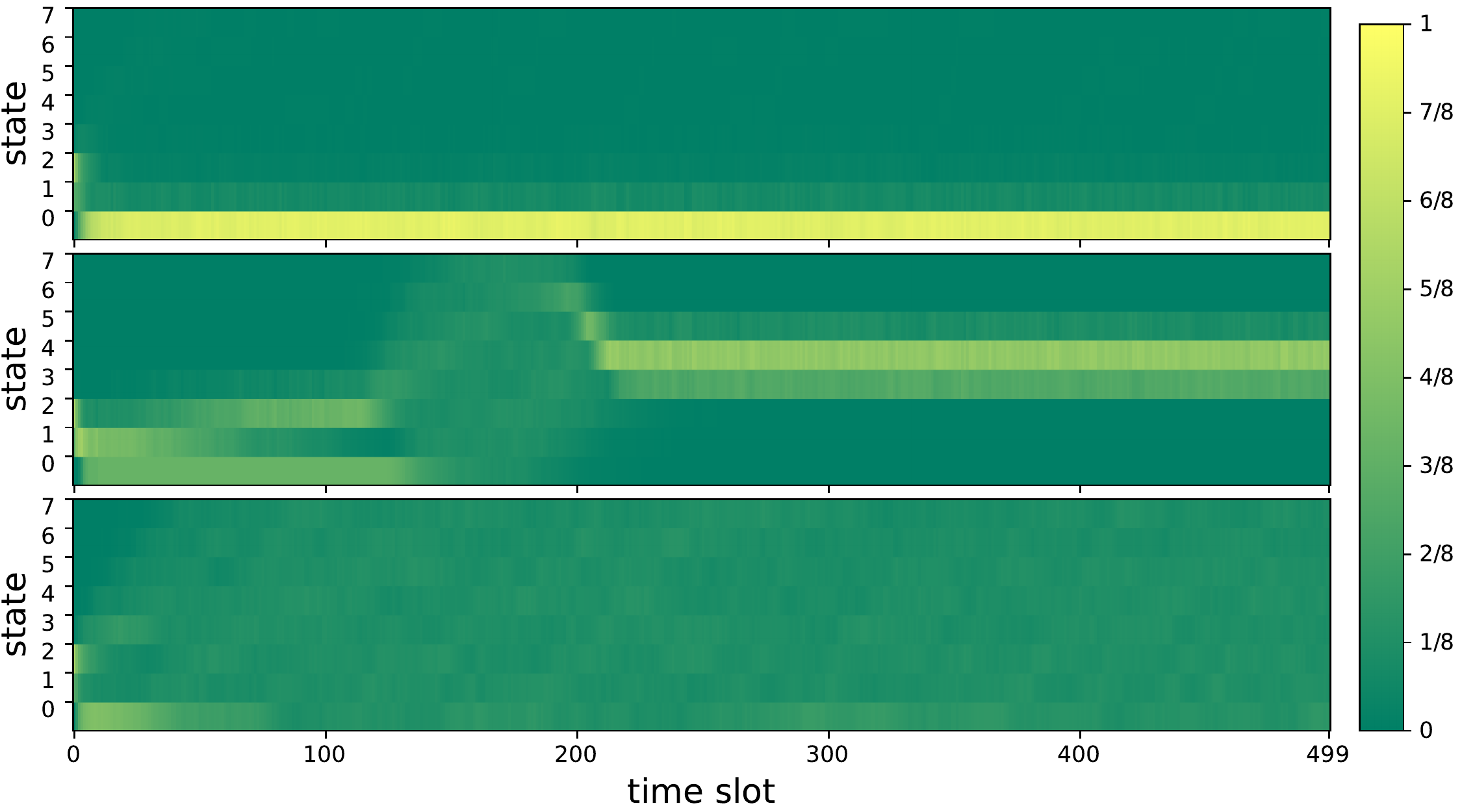}
                \caption{Time evolution of the fractions of arms in each state under the three policies during the first $500$ time slots. Top to bottom: Random Tie-Breaking, LP-Priority, and $\policynameshort(\pibar^*)$.}
                \label{fig:three-policy-long-sample-path}
            \end{subfigure}
            \begin{subfigure}{\linewidth}
                \centering
                ~~~~~~\includegraphics[width=10.5cm]{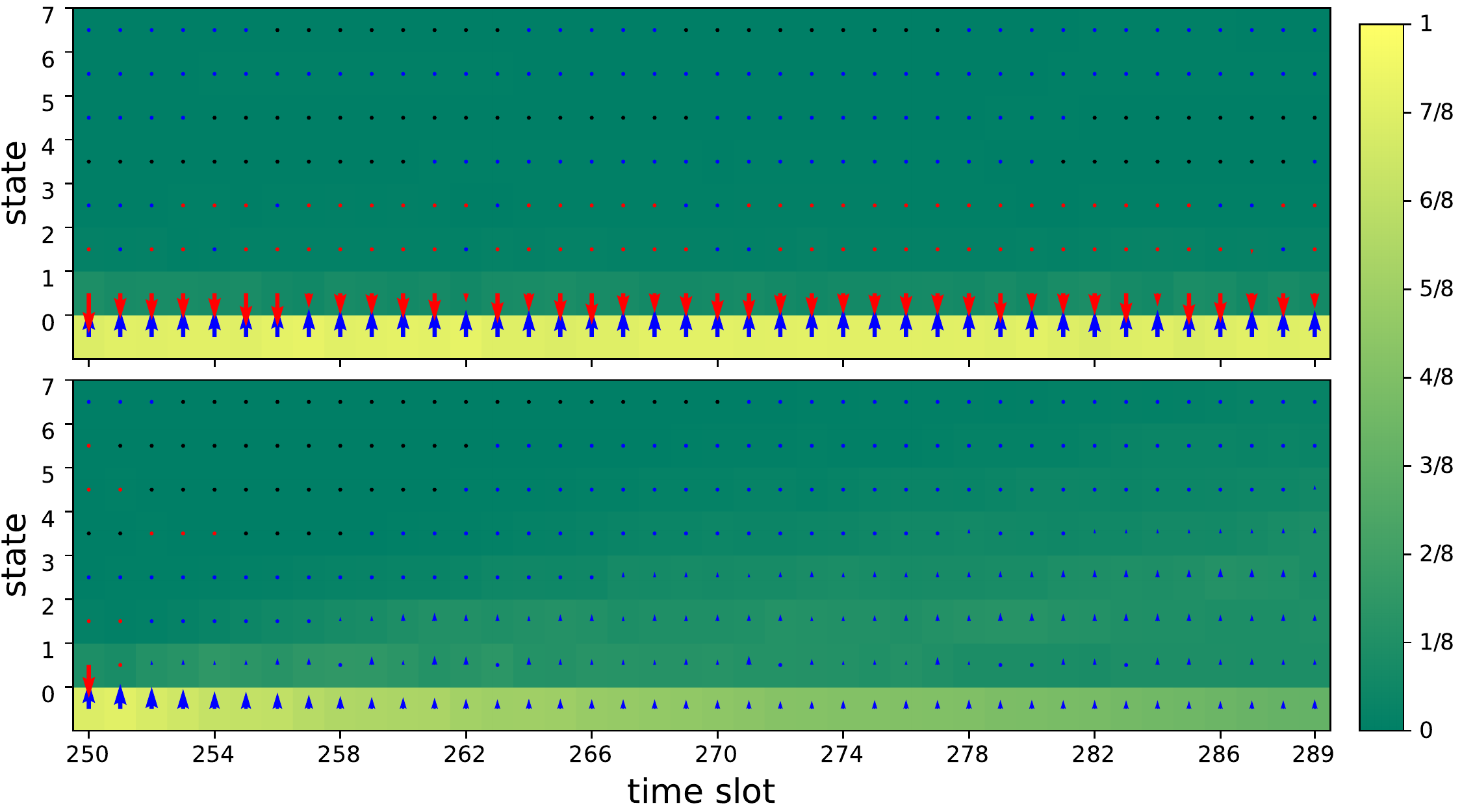}
                \caption{Time evolution of the fraction of arms in each state under Random Tie-Breaking (upper), or after switching to $\policynameshort(\pibar^*)$ (lower) since time slot $250$. The color and magnitude of the arrows represent the average movement of the arms in each state.} 
                \label{fig:random-tb-vs-ftva}
            \end{subfigure}
            \caption{Visualization of the sample paths under different policies for the example in Section~\ref{sec:al-discussion}.}
    \end{figure}

In Figure~\ref{fig:lp-vs-ftva-visualize} of Section~\ref{sec:al-discussion}, we illustrate what LP-Priority and $\policynameshort(\pibar^*)$ does differently by visualizing their sample paths. Here we include more such figures. 

We still use the same example defined in Section~\ref{sec:al-discussion}. 
To recap, this example has $8$ states, numbered as $\{0,1,2,3,4,5,6,7\}$. The optimal single-armed policy always chooses the preferred action in each state, which is $1$ for states $0,1,2,3$, and $0$ for states $4,5,6,7$. Under the optimal single-armed policy, each arm moves from state $s$ to $(s+1) \text{ mod } 8$ with probability $0.1$ or stays at $s$ with probability $0.9$. The optimal distribution for the single-armed system is uniform over the $8$ states. 

When simulating the policies in the $N$-armed system, we can tell that a policy is good if the fraction of arms in each state is roughly uniform, and bad if the arms concentrate on a small number of states. We can also see why this happens by looking at whether the preferred action is chosen, and whether the arms are moving from $s$ to $(s+1) \text{ mod } 8$. 

The three heatmaps in Figure~\ref{fig:three-policy-long-sample-path} illustrate how the fractions of arms in each state change over time under the three policies: random tie-breaking, LP-Priority, and $\policynameshort(\pibar^*)$. The x-axis represents the time slot, which ranges from $0$ to $499$; the y-axis represents the states; the brighter color represents a larger fraction of arms in this state at this time, and the specific value represented by each color can be found in the color bar on the right. 

We can see that under the random tie-breaking, the arms concentrate around state $0$; under LP-Priority, the arms concentrate around states $\{3,4,5\}$ after a burn-in period; under $\policynameshort(\pibar^*)$, the arms uniformly spread out over the $8$ states. 

In Figure~\ref{fig:random-tb-vs-ftva}, we take a closer look at the sample path of the random tie-breaking policy from time $250$ to $289$ and contrast it with the sample path if the system uses $\policynameshort(\pibar^*)$ from time $250$ onwards. We also add some arrows indicating the \emph{drift} of the arms, i.e., the average direction that the arms in this state are moving into. The colors of the arrows represent the direction: the blue arrow implies that the arm takes the preferred action and moves in the correct direction, while the red arm implies that the arm takes the non-preferred action and moves in the wrong direction.  The magnitudes of the arrows represent how fast they are moving. 

We can see that under random tie-breaking, although the arms have a strong tendency to move up from state $0$ to state $1$, they move back when they reach state $1$ and thus get stuck at state $0$. In contrast, when switching to $\policynameshort(\pibar^*)$, most arrows point in the right direction, which implies that the arms consistently apply the preferred actions. As a result, $\policynameshort(\pibar^*)$ helps the arms to escape from state $0$ and converge to the uniform distribution over the state space.

\begin{figure}
        \begin{subfigure}{0.49\linewidth}
            \includegraphics[width=7.1cm]{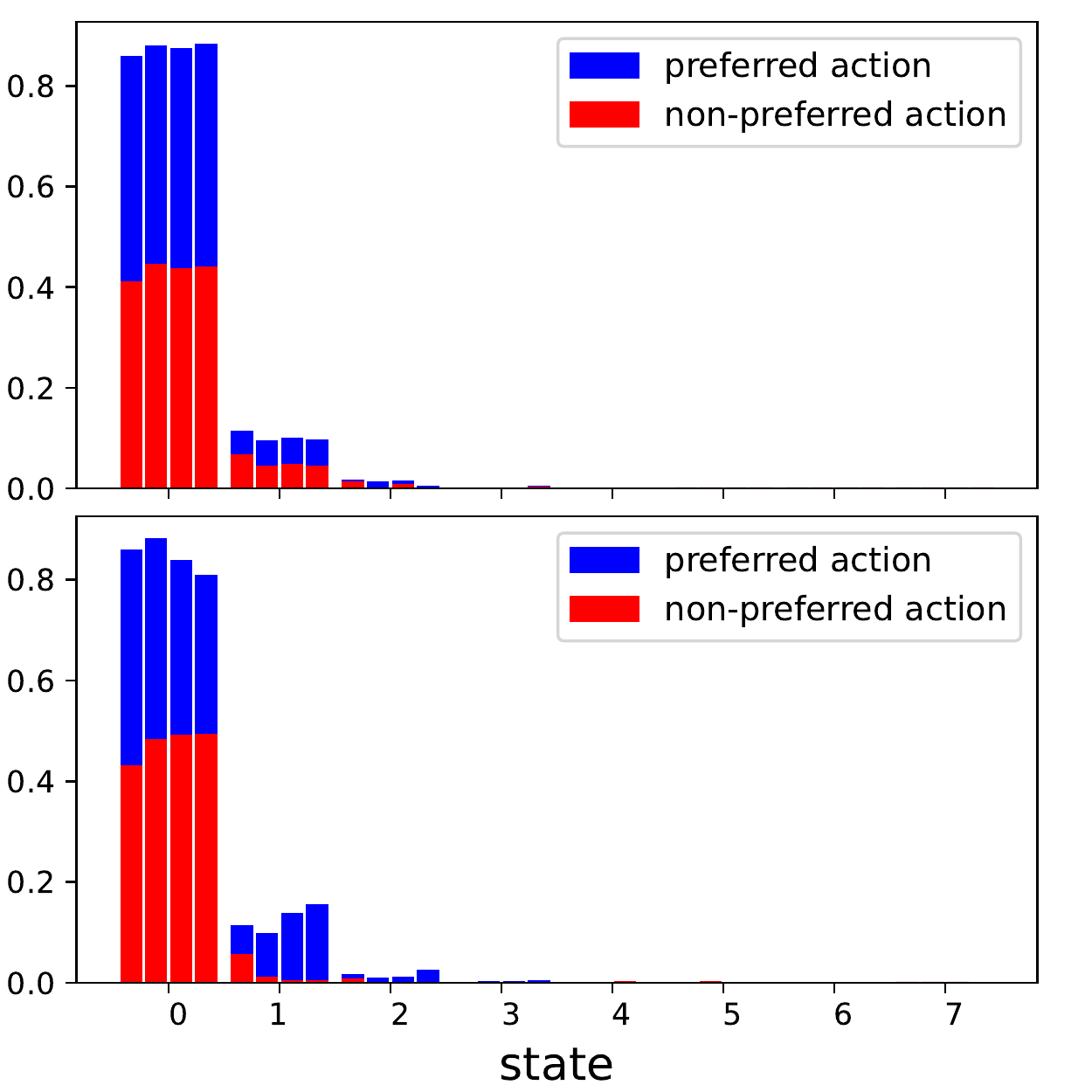}
            \caption{The fraction of arms in each state taking each action in the $4$ time steps following time $250$, under Random Tie-Breaking (upper plot) or after switching to $\policynameshort(\pibar^*)$ (lower plot).}
            \label{fig:random-tb-vs-ftva-detailed-actions}
            \centering
        \end{subfigure}
        \hfill
        \begin{subfigure}{0.49\linewidth}
            \includegraphics[width=7.1cm]{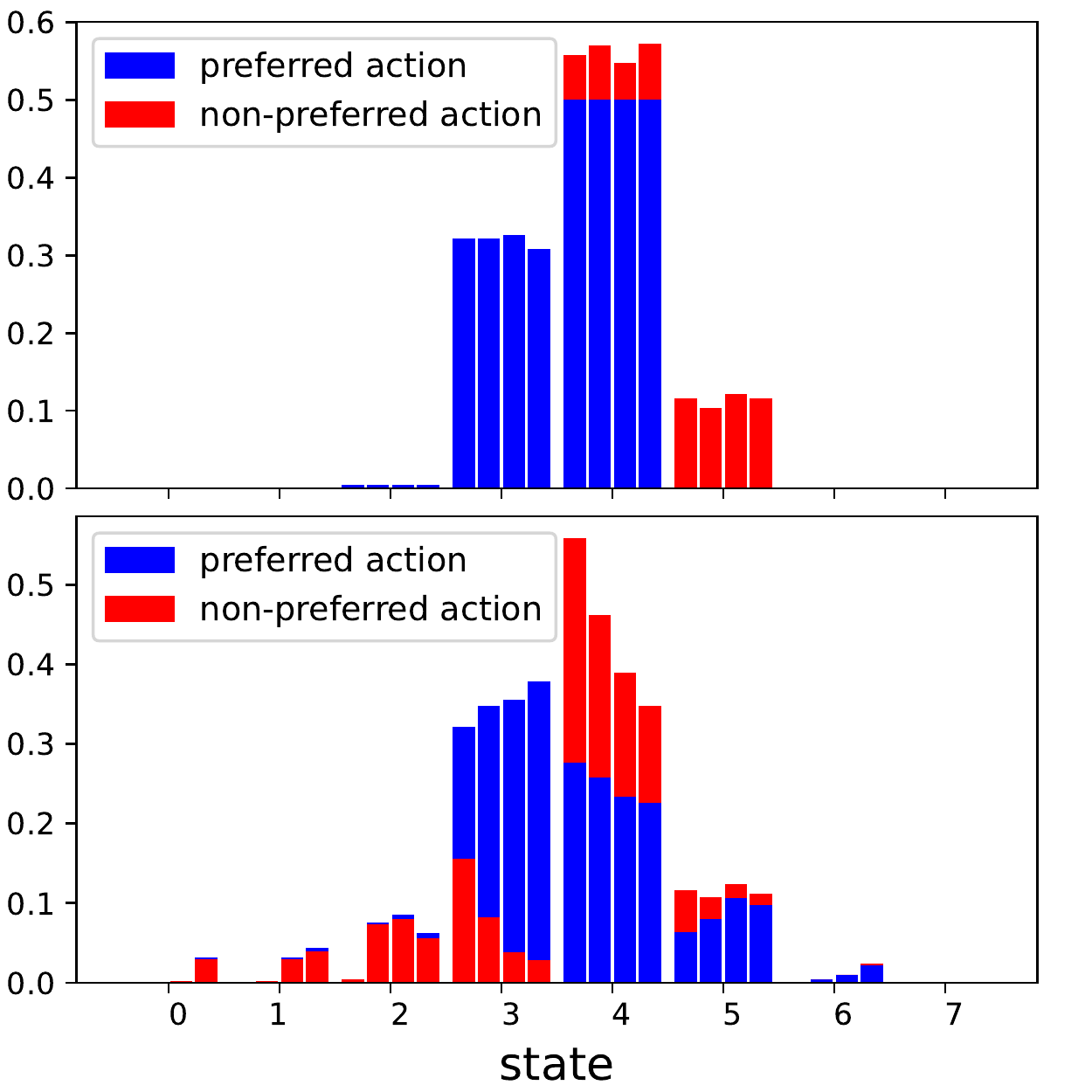}
            \caption{The fraction of arms in each state taking each action in the $4$ time steps following time $250$, under LP-Priority (upper plot) or after switching to $\policynameshort(\pibar^*)$ (lower plot).}
            \label{fig:lp-vs-ftva-detailed-actions}    
            \centering
        \end{subfigure}
        \centering
    \caption{Actions taken by different policies for the example in Section~\ref{sec:al-discussion}. In each bar plot, the X-axis represents the states; the Y-axis represents the fractions. Each state has bars for the $4$ time steps. The blue segment of each bar is the fraction of arms taking the preferred action; the red segment corresponds to the non-preferred action. The lower segment of each bar corresponds to action $0$; the higher segment corresponds to action $1$.}
\end{figure}

Figure~\ref{fig:random-tb-vs-ftva-detailed-actions} illustrates the fraction of arms in state $s$ taking action $a$ in a few time slots after $250$ in the same sample path of the random tie-breaking policy in Figure~\ref{fig:three-policy-long-sample-path}. For each $s$ and $a$. The upper bar plot is under the random tie-breaking policy, while the lower bar plot is after switching to $\policynameshort(\pibar^*)$. The x-axis represents the state; the y-axis represents the fraction; there are four bars in each state, corresponding to $4$ time-steps. Each bar has two segments, where the length of the blue segment indicates the fraction of arms taking the preferred action in this state, and the length of the red segment indicates the fraction of arms taking the non-preferred action in this state; the lower segment of the bar corresponds to action $0$, and the higher segment of the bar corresponds to action $1$.

We can see that under $\policynameshort(\pibar^*)$, more arms choose the preferred actions for state $1$ than under the random tie-breaking policy, which prevents the arms in state $1$ from moving back to state $0$ as we see in Figure~\ref{fig:random-tb-vs-ftva} under the random tie-breaking policy. 

Figure~\ref{fig:lp-vs-ftva-detailed-actions} is similar to Figures~\ref{fig:random-tb-vs-ftva-detailed-actions}, except that the random tie-breaking policy is replaced by the LP-Priority policy. We can again see that under LP-Priority there is a force that causes the arms to concentrate on a state, whereas switching to $\policynameshort(\pibar^*)$ helps the system to escape from that state. 

The common failure mode for random tie-breaking and LP-Priority in this setting is that the arms concentrate on a bad state $s$, which prevents the arms on state $(s+1)$ from applying the preferred action due to the budget constraint. This causes a \emph{livelock} where the arms move back and forth between state $s$ and $(s+1)$, and fail to gain a reward by moving past state $7$. $\policynameshort(\pibar^*)$ solves this issue by letting more arms in state $(s+1)$ follow the preferred action, which breaks the livelock and helps all arms converge to the optimal distribution.

\section{Generalization to heterogeneous arms}\label{app:heterogeneous}
In this section, we show how \policynameshort\ and its analysis can be extended to the case with heterogenous arms. 
We focus on the discrete-time case for simplicity. The continuous-time case can be analyzed in a similar fashion. 

\subsection{Setting and algorithm}
Suppose the arms are divided into $K$ types, with $\beta_k N$ arms in each type $k\in\{1,2,\dots,K\}\triangleq[K]$, and each type is associated with an MDP. The transition kernels and reward functions can be different across types. 
Suppose each arm is indexed by $i\in[N]$, and we denote the type of the $i$-th arm as $k(i)$. 
For any type $k$ arm, we let $P_k(s,a,s')$ be its probability of transitioning from state $s$ to $s'$ upon taking action $a$, and let $r_k(s,a)$ be its reward for taking action $a$ in state $s$.

Consider the linear program below, whose variables $y_k(s,a)$ represents the steady-state probability that a type $k$ arm is in state $s$ taking action $a$, for $k\in[K], s\in\sspa, a\in\aspa$. 
\begin{align}
    \label{eq:lp-single-het} \tag{LP-Het}
    \underset{\{y_k(s,a)\}_{k\in[K],s\in\sspa, a\in\aspa}} {\text{maximize}} & \sum_{k} \sum_{s,a} \beta_k r_k(s,a)y_k(s,a) \\
    \text{subject to}\mspace{25mu}
    &\mspace{15mu} \sum_{k} \sum_{s}\beta_k y_k(s,1)=\alpha, \label{eq:expect-budget-constraint-het}\\
    & \sum_{s', a} y_k(s', a) P_k(s', a, s) = \sum_{a} y_k(s,a), \quad \forall k\in[K], s\in\sspa, \label{eq:flow-balance-equation-het}\\
    \begin{split}
        &\sum_{s, a} y_k(s,a) = 1 \;\; \forall k\in[K]; \\
     &y_k(s,a) \geq 0, \;\; \forall k\in[K], s\in\sspa, a\in\aspa.  \label{eq:non-negative-constraint-het}
    \end{split}
\end{align}
where the three constraints \eqref{eq:expect-budget-constraint-het},  \eqref{eq:flow-balance-equation-het}, and \eqref{eq:non-negative-constraint-het} correspond to \eqref{eq:expect-budget-constraint}, \eqref{eq:flow-balance-equation}, and \eqref{eq:non-negative-constraint} in \eqref{eq:lp-single}; when writing summations, we omit $\in[K]$, $\in\sspa$ and $\in\aspa$ for simplicity. 

\eqref{eq:lp-single-het} can be viewed as a relaxation of the $N$-armed problem. 
To see this, take any $N$-armed policy~$\pi$ and set $y_k(s,a)$ to be the fraction of arms in state $s$ taking action $a$ among type $k$ arms in steady state under $\pi$, i.e., 
\[
    y_k(s,a)= \frac{1}{\beta_k N}\mathbb{E}\big[\sum_{k(i)=k} \mathds{1}_{\{S_i^{\pi}(\infty)=s,A_i^{\pi}(\infty)=a\}}\big].
\]
Whevener $\pi$ satisfies the budget constraint \eqref{eq:hard-budget-constraint}, $\{y_k(s,a)\}_{k\in[K], s\in\sspa, a\in\aspa}$ satisfies \eqref{eq:expect-budget-constraint-het}--\eqref{eq:non-negative-constraint-het}.
Therefore, the optimal value of \eqref{eq:lp-single-het}, which we denote as $V_1^\rel$, is an upper bound of the optimal value of the $N$-armed problem, i.e., $ V^\rel_1 \ge V^*_N$.

The optimal solution to \eqref{eq:lp-single-het}, $\{y_k^*(s,a)\}_{k\in[K], s\in\sspa, a\in\aspa}$, induces a optimal single-armed policy $\pibar_k^*$ for each type $k\in[K]$: 
\begin{equation}\label{eq:single-arm-opt-def-het}
    \pibar_k^*(a | s) =
    \begin{cases}
        y_k^*(s, a) / (y_k^*(s, 0) + y_k^*(s,1)), & \text{if } y_k^*(s, 0) + y_k^*(s,1) > 0,  \\
        1/2, &  \text{if } y_k^*(s, 0) + y_k^*(s,1) = 0.
    \end{cases}
    \quad \text{for $s\in\sspa$, $a\in\aspa$.}
\end{equation}
Standard argument in \cite{Put_05} show that if a type $k$ arm runs $\pibar_k^*$, it achieves the steady-state expected reward $\sum_{s,a} r_k(s,a)y_k^*(s,a)$, and requires $\sum_s y_k^*(s,1)$ unit of budget in steady-state. 
If each type $k$ arm could independently run the optimal single-armed policy of its type, since the fraction of type $k$ arms in the $N$-armed system is $\beta_k$, in steady state, the expected reward per arm would be $V_1^\rel$, and the budget requirement per arm would be $\sum_{k=1}^K \sum_s y_k^*(s,1) = \alpha$, which is analogous to the homogeneous case where the optimal policy of the relaxed problem in \eqref{eq:single-arm-formulation} achieves $V_1^\rel$ reward and requires 
$\alpha$ expected budget in steady-state.

To convert from the single-armed policies to an $N$-armed policy, \policynameshort\ lets each arm of type $k$ independently simulate a virtual single-armed process following $\pibar_k^*$. Lines 3-14 of Algorithm~\ref{alg:main-alg} stay the same. We denote the resulting policy as $\policynameshort(\{\pibar_k^*\}_{k\in[K]})$. 
The resulting policy can be proved to achieve $O(1/\sqrt{N})$ optimality gap, under the assumption that \Assumpname is satisfied for each $\pibar_k^*$, as stated below: 

\begin{theorem}
\label{thm:main-alg-het}
In restless bandits with heterogeneous arms, let $\{\pibar_k^*\}_{k\in[K]}$ be the optimal single-armed policies induced by \eqref{eq:lp-single-het}. Assume that for each $k\in[K]$, $\pibar_k^*$ satisfies \Assumpname with the synchronization times $\{\synctime_k(s,a,\syshat{s},\syshat{s})\}_{(s, a, \syshat{s}, \syshat{a}) \in \sspa \times \aspa \times \sspa \times \aspa}$. For any $N\ge 1$, the optimality gap of $\policynameshort(\{\pibar_k^*\}_{k\in[K]})$ is upper bounded as
\begin{equation}
\label{eq:main-optimality-gap-het}
V^*_N - V^{\policynameshort(\{\pibar_k^*\}_{k\in[K]})}_N \leq \frac{\rmax \syncmax}{\sqrt{N}},
\end{equation}
where $\rmax \triangleq \max_{s\in\sspa,a\in\aspa}|r(s,a)|$ and $\syncmax \triangleq \max_{k\in[K], (s, a, \syshat{s}, \syshat{a}) \in \sspa \times \aspa \times \sspa \times \aspa} \E{\synctime_k(s,a,\syshat{s},\syshat{a})}$.
\end{theorem}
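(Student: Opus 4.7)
}
My plan is to generalize the three-step recipe used in Appendix~\ref{app:proof-dt} (virtual-process independence, reduction to counting ``bad arms'', and Little's Law) to the heterogeneous setting. First, I would establish the heterogeneous analogue of Lemma~\ref{lem:virtual-process-distributions}: because each arm $i$ independently simulates $\pibar_{k(i)}^*$ with initial virtual state drawn from the stationary distribution of that single-armed policy, and because the coupling on lines 10--14 of Algorithm~\ref{alg:main-alg} still preserves the correct virtual marginals, the collection $\{(\syshat{S}_i(t),\syshat{A}_i(t))\}_{i\in[N]}$ remains independent (not identically distributed) across $i$, with $(\syshat{S}_i(\infty),\syshat{A}_i(\infty))$ distributed according to $y_{k(i)}^*(\cdot,\cdot)$.

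Second, since $V_1^{\rel}=\sum_k \beta_k \sum_{s,a} r_k(s,a) y_k^*(s,a)=\frac{1}{N}\sum_i \E{r_{k(i)}(\syshat{S}_i(\infty),\syshat{A}_i(\infty))}$ and $V_N^{\policynameshort(\{\pibar_k^*\})}=\frac{1}{N}\sum_i \E{r_{k(i)}(S_i(\infty),A_i(\infty))}$, I would recover the per-arm bound
\begin{equation*}
V_1^{\rel}-V_N^{\policynameshort(\{\pibar_k^*\})}\leq \frac{2\rmax}{N}\E{\sumN\indibrac{(\syshat{S}_i(\infty),\syshat{A}_i(\infty))\neq (S_i(\infty),A_i(\infty))}},
\end{equation*}
exactly as in \eqref{eq:bound-by-disagreement-number}. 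Combined with $V_N^*\leq V_1^{\rel}$, it remains to control the expected number of bad arms in steady state.

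Third, I would apply Little's Law to bad arms as in Lemma~\ref{lem:littles-law}, reducing the analysis to bounding (i) the average rate at which bad arms are generated and (ii) the average length of a disagreement period. For (i), because the virtual actions are still independent Bernoulli random variables (with success probabilities $\sum_s y_{k(i)}^*(s,1)$ that may differ across arms), we still have $\E{\sumN \syshat{A}_i(\infty)}=\sum_k \beta_k N \sum_s y_k^*(s,1)=\alpha N$ by the relaxed budget constraint \eqref{eq:expect-budget-constraint-het}, and $\Var{\sumN \syshat{A}_i(\infty)}\leq N/4$ because each Bernoulli variance is at most $1/4$. Hence the expected instantaneous disagreement rate is bounded by $\frac{1}{2}\sqrt{N}$, matching Lemma~\ref{lem:disagree-rate-bd}. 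For (ii), a disagreement period on arm $i$ behaves exactly like the leader-and-follower system for type $k(i)$, so it terminates in expected time at most $\E{\synctime_{k(i)}(s,a,\syshat{s},\syshat{a})}\leq \syncmax$ by the per-type \Assumpnameshort. Multiplying these bounds yields $\frac{\rmax \syncmax}{\sqrt{N}}$.

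The argument is essentially a bookkeeping extension of the homogeneous proof, so I do not anticipate a genuinely hard step. The one place that requires mild care is the disagreement rate: the fact that the mean of $\sumN \syshat{A}_i(\infty)$ still equals exactly $\alpha N$ relies crucially on the aggregate budget constraint \eqref{eq:expect-budget-constraint-het} of \eqref{eq:lp-single-het}, which links arm types, rather than on any per-type budget. Once this is noted, the variance bound (and hence the $O(\sqrt{N})$ concentration) is type-insensitive, and the rest of the proof proceeds verbatim.
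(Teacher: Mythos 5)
Your proposal is correct and follows essentially the same route as the paper's own proof: reduce to the number of bad arms via $V_N^*\le V_1^{\rel}$ and the $2\rmax$ bound, apply Little's Law, bound the disagreement rate by $\sqrt{N}/2$ using that the independent (non-identically distributed) Bernoulli virtual actions have mean summing to exactly $\alpha N$ by \eqref{eq:expect-budget-constraint-het} and variance at most $N/4$, and bound the disagreement-period length by $\syncmax$ via the per-type \Assumpnameshort. The one point you flag as needing care---that the exact mean $\alpha N$ comes from the aggregate budget constraint linking types---is precisely the observation the paper relies on as well.
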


\subsection{Proof for Theorem~\ref{thm:main-alg-het}}

In this section, we prove Theorem~\ref{thm:main-alg-het}. 

Since $V_1^\rel \geq V_N^*$, it suffices to show that 
\begin{equation}\label{eq:het-pf-goal}
    V_1^\rel - V^{\policynameshort(\{\pibar_k^*\}_{k\in[K]})}_N \leq \frac{\rmax\syncmax}{\sqrt{N}}.
\end{equation}
We start with an inequality that has the same form as \eqref{eq:bound-by-disagreement-number} in the homogeneous case:
\begin{align}
    V_1^\rel - V^{\policynameshort(\{\pibar_k^*\}_{k\in[K]})}_N 
    &= \frac{1}{N} \E{\sumN r\big(\syshat{S}_i(\infty), \syshat{A}_i(\infty)\big) - \sumN r\big(S_i(\infty), A_i(\infty)\big)} \nonumber \\
    &\leq \frac{2\rmax}{N} \E{\sumN \indibrac{\big(\syshat{S}_i(\infty), \syshat{A}_i(\infty)\big) \neq \big(S_i(\infty), A_i(\infty)\big)}}, \label{eq:bound-by-disagreement-number-het}
\end{align}
where in the first equality, we used the fact that the virtual processes are independently running the single-armed policy of the corresponding types, so they achieve the average reward $V^\rel_1$. 

Defining the disagreement events, disagreement periods, and disagreement rates in the same way as Appendix~\ref{app:proof-dt} and applying Little's law, we have
\begin{equation}\label{eq:disagree-littles-law-het}
    \E{\sumN \indibrac{(\syshat{S}_i(\infty), \syshat{A}_i(\infty)) \neq (S_i(\infty), A_i(\infty))}} = \E{\drate(\syshat{\veS}(\infty))} \cdot \dislen,
\end{equation}
where $d(\syshat{\ves})$ is the instantaneous disagreement rate, and $\dislen$ is the long-run average length of disagreement periods. With \eqref{eq:bound-by-disagreement-number-het} and \eqref{eq:disagree-littles-law-het}, it remains to bound $\E{\drate(\syshat{\veS}(\infty))}$ and $\dislen$. 

By the definition of the policy, we have $\E{\drate(\ve{\syshat{S}}(\infty))} =  \E{\abs{\sumN \syshat{A}_i(\infty)  - \alpha N}}$, $\syshat{A}_i(\infty)$'s are independent Bernoulli random variables, and $\E{\syshat{A}_i(\infty)} = y_k^*(s,1)$ if arm $i$ is of type $k$. As a result, 
\begin{equation*}
    \E{\sumN \syshat{A}_i(\infty)} = \sum_{k} \beta_k N \sum_{s,a} y_k^*(s,1) = \alpha N.
\end{equation*}
Using the same Cauchy-Schwartz argument as in the proof of the homogeneous case in Appendix~\ref{app:proof-dt}, it is not hard to show that $\E{\drate(\ve{\syshat{S}}(\infty))} \leq \sqrt{N} / 2$. 

As for $\dislen$,  it is not hard to see from the definition that the length of each disagreement period of a type $k$ arm is stochastically dominated by $\synctime_k(s, a, \syshat{s}, \syshat{a})$, where $(s,a,\syshat{s},\syshat{a})$ are the initial states and actions of that disagreement period. Therefore, the average length of disagreement period is bounded by
\[
    \dislen \leq \max_{k\in[K], (s, a, \syshat{s}, \syshat{a}) \in \sspa \times \aspa \times \sspa \times \aspa} \E{\synctime_k(s,a,\syshat{s},\syshat{a})} = \syncmax. 
\]

Combining the above calculations, we get \eqref{eq:het-pf-goal}, which finishes the proof.

\end{document}